\documentclass[10pt]{article}

\usepackage[margin=1in]{geometry}
\usepackage{natbib}
\usepackage{parskip}

\usepackage[utf8]{inputenc}
\usepackage[T1]{fontenc}
\usepackage{lmodern}
\usepackage{booktabs}
\usepackage{microtype}
\usepackage[dvipsnames]{xcolor}
\usepackage{amsmath,amssymb}
\usepackage{amsthm}
\usepackage{mathtools}
\usepackage{enumitem}
\usepackage{tikz}
\usetikzlibrary{arrows.meta,positioning,calc}
\usepackage{algorithm}
\usepackage{placeins}
\usepackage{algpseudocode}
\usepackage[colorlinks, linkcolor=blue, citecolor=ForestGreen]{hyperref}
\usepackage{xurl}
\usepackage{etoc}
\usepackage{caption}
\allowdisplaybreaks
\ifdefined\pdfminorversion\pdfminorversion=7\fi
\floatstyle{ruled}
\restylefloat{algorithm}
\algrenewcommand\algorithmicrequire{\textbf{Input:}}
\algrenewcommand\algorithmicensure{\textbf{Output:}}

\newtheorem{assumption}{Assumption}
\newtheorem{lemma}{Lemma}
\newtheorem{theorem}{Theorem}
\newtheorem{proposition}{Proposition}
\newtheorem{corollary}{Corollary}
\newtheorem{remark}{Remark}
\newtheorem{example}{Example}

\newcommand{\E}{\mathbb{E}}
\newcommand{\Prob}{\mathbb{P}}
\newcommand{\Var}{\mathbb{V}}
\newcommand{\cS}{\mathcal{S}}
\newcommand{\cA}{\mathcal{A}}
\newcommand{\cG}{\mathcal{G}}
\newcommand{\cV}{\mathcal{V}}
\newcommand{\cE}{\mathcal{E}}
\newcommand{\cF}{\mathcal{F}}
\newcommand{\cM}{\mathcal{M}}
\newcommand{\supp}{\mathrm{supp}}
\newcommand{\VarStar}{\mathrm{Var}^{\star}}
\DeclareMathOperator*{\argmax}{arg\,max}
\newcommand{\cR}{\mathcal{R}}
\newcommand{\Viol}{\mathrm{Viol}}
\newcommand{\Ind}{\mathbf{1}}
\newcommand{\spann}{\mathrm{span}}
\newcommand{\cC}{\mathcal{C}}

\providecommand{\Est}{\cE_{\mathrm{est}}}
\providecommand{\Cert}{\cE_{\mathrm{cert}}}
\providecommand{\Mart}{\cE_{\mathrm{mart}}}
\providecommand{\MainEvent}{\cE_{\mathrm{main}}}

\makeatletter
\newcommand{\SVAEtheoremheadingtarget}{%
  \def\Hy@theorem@makelinktarget##1{%
    \let\SVAE@originalthmhead\thmhead
    \def\thmhead####1####2####3{%
      \MakeLinkTarget*{##1}%
      \SVAE@originalthmhead{####1}{####2}{####3}%
    }%
  }%
}
\makeatother%

\title{Instance-Dependent Regret for CMDPs with Step-Wise Constraints}
\author{%
\begin{tabular}{c@{\hspace{2em}}c}
Qian Zuo & Francesco Emanuele Stradi \\
{\small University of Edinburgh} & {\small Politecnico di Milano} \\
{\small\texttt{qian.zuo@ed.ac.uk}} & {\small\texttt{francescoemanuele.stradi@polimi.it}} \\[1em]
Leyang Xue & Sattar Vakili \\
{\small University of Edinburgh} & {\small University College London} \\
{\small\texttt{lxue2@ed.ac.uk}} & {\small\texttt{s.vakili@ucl.ac.uk}}
\end{tabular}%
}%
\date{}

\begin{document}
\etocdepthtag{main}

\maketitle

\begin{abstract}
We study online learning in episodic tabular constrained Markov decision processes with \emph{step-wise safety constraints}. In such a setting, the constraints induce a safe subgraph that shapes the variance of cumulative rewards under feasible policies and, consequently, the difficulty of learning. Exploiting this structure, however, requires learning which actions are safe while controlling constraint violations. We propose \emph{Safe Variance-Adaptive Exploration} (SVAE), an efficient algorithm that learns candidate safe subgraphs and performs variance-adaptive optimistic planning within them. With high probability, SVAE achieves cumulative
regret of order
$\widetilde{\mathcal O}(
\sqrt{SAH\min\{\Var_\Sigma,K\VarStar\}}
+S\sqrt{AH^3\min\{K,\cC\}}
+S^2AH^2)$ over $K$ episodes, where $H$ is the horizon of a single episode, while $S$ and $A$ are the numbers of states and actions, respectively. Here, $\VarStar$ is the maximum return variance among safe policies, $\Var_\Sigma$ is the variance accumulated before the first unsafe action is encountered, and $\cC$ captures the statistical complexity of eliminating actions incorrectly considered potentially safe. SVAE additionally attains $\widetilde{\mathcal{O}}(H\sqrt{SAK}+S^2AH^2)$ step-wise constraint violation and a gap-dependent violation bound that is polylogarithmic in $K$. Finally, we establish a lower bound showing that dependence on these instance-specific quantities is unavoidable.

\end{abstract}

\section{Introduction}
\label{sec:intro}

Constrained Markov decision processes (CMDPs) are widely used to model sequential decision making in robotics, autonomous driving, and wireless networks~\citep{achiam2017constrained,liu2021learning,bura2022dope}.
The objective is to maximize expected cumulative reward subject to safety or resource constraints, commonly imposed on expected cumulative costs~\citep{altman1999constrained,miryoosefi2022simple,stradi2025learning}.
The potential harm from individual unsafe decisions motivates step-wise safety constraints, which impose a mean-cost threshold at each decision~\citep{xiong2023provably,shi2023near}.
Violation is measured by summing positive excesses above this threshold without cancellation~\citep{wei2024instantaneous}.
In this work, we study online learning in episodic tabular constrained MDPs under this kind of \emph{step-wise} constraint.

Instance-dependent regret bounds distinguish the statistical difficulty of individual problems---namely, instances---and provide sharper guarantees on easier ones~\citep{zanette2019tighter,dann2021beyond}.
In unconstrained reinforcement learning, variance-dependent analyses relate regret to the uncertainty induced by rewards and transitions~\citep{talebi2018variance,zhou2023sharp,zhao2023variance,chen2025sharp}.
Such bounds can retain worst-case guarantees while improving on low-variance instances, including nearly constant regret in deterministic environments~\citep{zhou2023sharp,zamir2026optimal}.
In CMDPs, unknown costs and transitions determine which policies are feasible~\citep{bura2022dope,yu2025improved}.
Exploratory policies can be unsafe, so their returns need not share the low variance of safe policies.
Whether the benefits of variance-dependent
learning can be retained under step-wise constraints is still unclear.

Step-wise safety constraints can be represented by a safe subgraph that retains actions satisfying the cost threshold and leading only to states with feasible future decisions~\citep{xiong2023provably,shi2023near}.
Our key observation is that even though learning under unknown constraints generally poses additional challenges, safety constraints can still simplify reward learning by removing stochastic branches or reducing differences in
expected future rewards across possible next states under feasible policies. %
Even when the rewards and the transition kernel are fixed, changing costs can alter the safe subgraph and thereby reshape variances.
However, small cost gaps and rare transitions to states with no feasible policy can still make safety identification difficult.
Understanding how the safe subgraph relates to the variance and the statistical complexity is therefore central to instance-dependent regret analysis. These observations lead to the following questions:
\begin{quote}
\begin{center}
\emph{(i) Is it possible to design a computationally efficient algorithm that achieves variance-dependent regret bounds and sublinear cumulative step-wise violation?\\ (ii) How does the induced safe subgraph shape the statistical complexity of learning under step-wise constraints?}
\end{center}
\end{quote}
To answer these questions, we first characterize how the induced safe subgraph shapes the variances and the safety identification complexity. We then propose Safe Variance-Adaptive Exploration (SVAE), which separates safe-subgraph learning from variance-adaptive planning.
The algorithm learns candidate safe actions using variance-independent cost lower confidence bounds and observed successors, and plans optimistically over these sets using empirical reward and transition variances. We further evaluate SVAE and examine the effect of safe-subgraph properties on regret in synthetic experiments (Appendix~\ref{app:experiments}).

Our contributions can be summarized as follows.
\begin{itemize}[leftmargin=*,itemsep=3pt]

\item \textbf{Safe-subgraph structure and variance}
In Section~\ref{sec:topology}, we represent step-wise safety constraints through their induced safe subgraph and define two distinct variance quantities: $(i)$
the maximum safe-policy return variance $\VarStar$ is the largest variance of the total episode return over all safe policies, and $(ii)$ the trajectory-based total variance $\Var_\Sigma$ sums reward variances and transition variances before the first unsafe action in each episode.
We establish a tight bound on transition variance in terms of stochastic branching depth and  differences in expected
cumulative rewards across successor states within the safe subgraph.%

\item \textbf{Safety identification complexity}
In Section~\ref{subsec:safety-identification}, we introduce a reward-independent instance quantity $\cC$ %
to measure the difficulty of ruling out unsafe actions at states reachable under safe policies.
Lower bounds for valid elimination establish its necessity.

\item \textbf{Efficient variance-adaptive exploration} In Section~\ref{sec:algorithm}, we develop SVAE, an efficient algorithm that exploits low return variance within the unknown safe subgraph without prior knowledge of a safe policy.
SVAE maintains candidate action sets that retain all safe actions with high probability and uses the resulting candidate subgraph to adapt its exploration bonuses to empirical reward and next-state value variances.

\item \textbf{Instance-dependent learning guarantees}
In Section~\ref{subsec:regret}, we show that our algorithm attains, for a CMDP with $S$ states, $A$ actions, horizon $H$, a cumulative regret bound over $K$ episodes of order
$\widetilde{\mathcal O}(
\sqrt{SAH\min\{\Var_\Sigma,K\VarStar\}}
+S\sqrt{AH^3\min\{K,\cC\}}
+S^2AH^2)$.
Here $\widetilde{\mathcal O}$ hides polylogarithmic factors.
Regret is polylogarithmic in $K$ on fixed instances with $\VarStar=0$, while both regret and cumulative step-wise violation retain $\widetilde{\mathcal O}(\sqrt K)$ worst-case dependence. Step-wise violation is also polylogarithmic in $K$ for a fixed positive cost gap (Theorem~\ref{thm:violation}).

\item \textbf{Regret lower bound}
In Section~\ref{sec:lower-bound}, we establish a regret lower bound, showing that both return variance and safety identification dependencies are unavoidable.
\end{itemize}

\subsection{Related Work}
\label{sec:related-work}
\paragraph{Safe reinforcement learning}
Safe RL is a fundamental area of reinforcement learning, with diverse
constraint formulations and measures of constraint violation.
For episode-level constraints, regret and constraint-violation guarantees
have been established in tabular settings~\citep{efroni2020exploration,
liu2021learning,bura2022dope,wei2022triple,stradi24a,stradi2025strong}
and with linear function approximation~\citep{ding2021provably,
ghosh2022provably,kitamura2025provably}.
For instantaneous constraints, \citet{amani2021safe} and
\citet{roknilamouki2025provably} ensure constraint satisfaction with high
probability using known safe actions or policies.
Without a known safe policy, \citet{xiong2023provably} and
\citet{wei2024instantaneous} bound cumulative positive step-wise violation.
\citet{xiong2023provably} study tabular MDPs with costs that depend
only on the state.
\citet{wei2024instantaneous} consider linear MDPs and assume that
a feasible action exists at every state and step.
We study tabular MDPs with unknown state--action costs and require only
the existence of a safe policy from the initial state.

\paragraph{Variance-dependent regret}
Variance-dependent regret bounds can improve on worst-case rates in low-variance
environments. Such guarantees have been established for episodic tabular
MDPs~\citep{zhou2023sharp}, linear-mixture MDPs~\citep{zhao2023variance}, and
infinite-horizon MDPs~\citep{zamir2026optimal}, with recent work also combining
variance and suboptimality gaps~\citep{chen2025sharp}.
In particular, \citet{zhou2023sharp} derive bounds involving a maximum
policy-return variance measure and a variance sum along the learner's trajectories.
For CMDPs, \citet{yu2025improved} use total-variance analysis to improve the
horizon dependence of a worst-case regret bound under episode-level constraints.
\citet{jiang2024achieving} obtain a different form of instance dependence through
an optimal-basis characterization of the CMDP linear program.
We control step-wise violation and derive a regret bound that
separates safe-policy return variance from the complexity of ruling out actions
outside the true safe set.

\paragraph{Graph structure and safe exploration}
Graph structure has been used to guide safe exploration and refine regret bounds.
Safe-region methods use reachability and returnability to expand the region that
can be explored safely~\citep{turchetta2016safe,wachi2020safe}.
\citet{shi2023near} use safe subgraphs for learning under instantaneous
constraints, assuming known transition supports and a known seed safe subgraph.
For unconstrained MDPs, \citet{bourel2020tightening} combine variance-aware
concentration with transition-support adaptation and obtain regret bounds
involving local diameter and effective support.
Our structural analysis examines how safety constraints change the graph
available to safe policies and how this change affects their transition variance.
We use this relation to bound the transition-variance contribution to regret in
terms of stochastic branching and continuation-value differences within the safe subgraph.

Due to space constraints, additional related works are provided in
Appendix~\ref{app:additional-related-work}.%

\section{Problem Formulation}
\label{sec:formulation}

\paragraph{Episodic CMDP}
We consider an episodic constrained Markov decision process (CMDP)
$\cM=(\cS,\cA,H,P,r,c,\tau)$ with a state space $\cS$ of size $S$,
an action space $\cA$ of size $A$, and horizon $H$.
For a positive integer $m$, let $[m]:=\{1,\ldots,m\}$.
The transition kernel is
$P=\{P_h:\cS\times\cA\rightarrow\Delta_{\cS}\}_{h=1}^H$,
where $\Delta_{\cS}$ denotes the set of probability distributions on $\cS$,
and $P_h(s'\mid s,a)$ denotes the transition probability from $(s,a)$
to $s'$ at step $h$.
$r=\{r_h:\cS\times\cA\rightarrow[0,1]\}_{h=1}^H$ represents the reward
function, and $r_h(s,a)$ is the reward of taking action $a$ in state $s$
at step $h$.
A non-stationary policy
$\pi=\{\pi_h:\cS\rightarrow\cA\}_{h=1}^H\in\Pi$
specifies an action $\pi_h(s)$ at every state and step, and $\Pi$ denotes
the set of all such policies.
The learner interacts with the CMDP for $K$ episodes, each starting
from a fixed initial state $s_1\in\cS$.
For any episode $k$, the learner first chooses a policy $\pi^k$
based on previous observations.
At step $h\in[H]$ of episode $k$, the learner observes a state $s_h^k$,
chooses an action $a_h^k$, and transitions to a next state
$s_{h+1}^k$ according to $P_h(\cdot\mid s_h^k,a_h^k)$.
For a policy $\pi\in\Pi$, the state-value function $V_h^\pi(s)$ and
the state-action value function $Q_h^\pi(s,a)$ are defined as:
\[
V_h^\pi(s):=\E^\pi\!\left[\sum_{t=h}^H r_t(s_t,a_t)\,\middle|\,s_h=s\right],
\qquad
Q_h^\pi(s,a):=\E^\pi\!\left[\sum_{t=h}^H r_t(s_t,a_t)\,\middle|\,s_h=s,a_h=a\right].
\]
The expectations start from $s$ at step $h$ and follow $\pi$,
with the first action fixed to $a$ for $Q_h^\pi(s,a)$.
We set $V_{H+1}^\pi\equiv0$ and let
$\mu_h^\pi(s):=\Prob^\pi(s_h=s\mid s_1)$ denote the step-$h$
state occupancy under $\pi$, for $h\in[H+1]$.

Similarly, $c=\{c_h: \mathcal{S}\times\mathcal{A}\rightarrow [0,1]\}_{h=1}^H$ denotes the cost function, and $c_h(s,a)$ is the cost of taking action $a$ in state $s$ at step $h$. Let $\tau\in[0,1]$ denote the safety threshold. Under stochastic settings, the non-stationary reward and cost of state-action pair $(s,a)$ at step $h$ are $R_h(s,a)$ and $C_h(s,a)$ supported on $[0,1]$, with means $r_h(s,a)$ and $c_h(s,a)$, respectively. Conditional on the selected state--action pairs,
reward and cost observations are independent across visits, and each observation is
conditionally independent of the next state. Write $R_h^k$ and $C_h^k$ for the samples observed at step $h$
of episode $k$. The threshold $\tau$ is known, whereas
$P=\{P_h\}_{h=1}^H$, $r=\{r_h\}_{h=1}^H$, and $c=\{c_h\}_{h=1}^H$ are unknown.

\paragraph{Safe subgraph}
The CMDP induces a layered directed graph $\cG=(\cV,\cE)$ with vertex set
$\cV=\cS\times[H+1]$ \citep{jin2020learning}.
For $h\in[H]$, the edge $((s,h),a,(s',h+1))$
belongs to $\cE$ exactly when $P_h(s'\mid s,a)>0$.
The safe subgraph
$\cG_{\mathrm{safe}}=(\cV_{\mathrm{safe}},\cE_{\mathrm{safe}})$ is defined
through safe actions and safe vertices by backward recursion.
Set $\cV_{\mathrm{safe},H+1}:=\cS$.
For $h=H,\ldots,1$, define recursively %
\begin{align}
\cA_{\mathrm{safe},h}(s)
&:=\bigl\{a\in\cA:c_h(s,a)\le\tau,\
\supp(P_h(\cdot\mid s,a))\subseteq\cV_{\mathrm{safe},h+1}\bigr\},
\label{eq:safe-action}\\
\cV_{\mathrm{safe},h}
&:=\{s\in\cS:\cA_{\mathrm{safe},h}(s)\ne\varnothing\}.
\notag
\end{align}
An action $a$ is safe at $(s,h)$ if $c_h(s,a)\le\tau$ and all its
successor vertices are safe.
A vertex $(s,h)$ is safe when $s\in\cV_{\mathrm{safe},h}$.
The vertex and edge sets of $\cG_{\mathrm{safe}}$ are
$\cV_{\mathrm{safe}}:=\bigl\{(s,h)\in\cV:s\in\cV_{\mathrm{safe},h}\bigr\}$ and
$\cE_{\mathrm{safe}}
:=\bigl\{((s,h),a,(s',h+1))\in\cE:
 a\in\cA_{\mathrm{safe},h}(s)\bigr\}$.

For a policy $\pi\in\Pi$, let
$\mathcal{G}^\pi=(\cV^\pi,\cE^\pi)$ denote the subgraph generated by
$\pi$ from $s_1$, containing the vertices and edges
visited with positive probability~\citep{shi2023near}.
Its vertex and edge sets are
$\cV^\pi:=\{(s,h)\in\cV:\mu_h^\pi(s)>0\}$ and
$\cE^\pi:=\{((s,h),a,(s',h+1))\in\cE:
(s,h)\in\cV^\pi,\ a=\pi_h(s)\}$, respectively.
Thus, $\mathcal{G}^\pi$ includes all positive-probability successors
of the action selected at each reachable vertex.
We write $\mathcal{G}^\pi\subseteq\cG_{\mathrm{safe}}$ when
$\cV^\pi\subseteq\cV_{\mathrm{safe}}$ and
$\cE^\pi\subseteq\cE_{\mathrm{safe}}$. Therefore, the safe policy class is
$\Pi_{\mathrm{safe}}:=\{\pi\in\Pi:
\mathcal{G}^\pi\subseteq\cG_{\mathrm{safe}}\}$. Let $\pi^\star\in\argmax_{\pi\in\Pi_{\mathrm{safe}}}V_1^\pi(s_1)$
be an optimal safe policy, and write $V_h^\star(s)=V_h^{\pi^\star}(s)$.

\paragraph{Performance metrics}
For the policies $\pi^1,\ldots,\pi^K$ executed by the learner, we define the cumulative regret and cumulative step-wise violation as:
\begin{align}
\cR(K)
&:=\sum_{k=1}^K\bigl(V_1^{\star}(s_1)-V_1^{\pi^k}(s_1)\bigr),
\label{eq:regret}\\
\Viol(K)
&:=\sum_{k=1}^K\sum_{h=1}^H
\bigl[c_h(s_h^k,a_h^k)-\tau\bigr]_+,
\label{eq:viol}
\end{align}
where $[\cdot]_+:=\max\{\cdot,0\}$.
The regret $\cR(K)$ %
compares the learner's cumulative expected return
with that of the optimal safe policy, while $\Viol(K)$ accumulates
positive mean-cost excesses along the learner's trajectories.
\citet{xiong2023provably} study step-wise violation with state-only costs
$c(s)$, whereas our costs $c_h(s,a)$ depend on both the state and action
and may vary across steps.
Unlike constraints on expected cumulative costs, Equation~\eqref{eq:viol} does not allow excess costs
at one step to be offset by lower costs at another.
We allow violations during exploration and control their cumulative
magnitude, rather than require zero violation throughout learning as in
\citet{shi2023near}.
Our goal is to obtain high-probability sublinear upper bounds in $K$ on regret
and violation, with regret adapting to the variance of safe policies and
the difficulty of identifying actions outside the safe action sets.

Our results generally require only that the initial state admit an
unknown safe policy, which is a mild condition in the literature~\citep{amani2021safe,xiong2023provably,shi2023near,
wei2024instantaneous,roknilamouki2025provably}.
\begin{assumption}[Feasibility]
\label{ass:init}
The initial state satisfies $s_1\in\cV_{\mathrm{safe},1}$.
\end{assumption}

\section{Safety-Induced Variance and Identification Complexity}
\label{sec:topology}

Safety constraints shape the statistical difficulty of reinforcement learning,
not only the set of feasible policies. They can impede exploration, but by
restricting behavior they can also remove sources of reward uncertainty and
simplify learning inside the feasible region. Under step-wise safety, these two
effects are organized by the hidden safe subgraph. We show that its stochastic
branching and successor-value differences control the transition variance
available to reward learning, while actions outside the graph must be excluded
based on cost or successor-viability evidence. This yields tight structural
variance bounds together with a complexity for the separate
task of safety identification.

\subsection{Variance under Step-Wise Safety}
\label{sec:variance}
We first introduce two variance quantities for reward uncertainty under step-wise safety.
\paragraph{Maximum safe-policy return variance} We measure the return variability available to safe policies by the \textit{maximum safe-policy return variance}. Write $\Var(X)$ for the variance of $X$ and
$\Var(q,f):=\sum_xq(x)f(x)^2-(\sum_xq(x)f(x))^2$ for the variance of
$f$ under a distribution $q$. We define:
\begin{equation}
\VarStar:=\sup_{\pi\in\Pi_{\mathrm{safe}}}
\E^\pi\!\left[\sum_{h=1}^H
\left\{\Var(R_h(s_h,a_h))+
\Var\!\left(P_h(\cdot\mid s_h,a_h),V_{h+1}^\pi\right)\right\}
\,\middle|\,s_1\right].
\label{eq:safe-max-return-variance}
\end{equation}
Equation~\eqref{eq:safe-max-return-variance} captures two sources of variance.
The reward term $\Var(R_h(s_h,a_h))$ measures noise in the reward
observation, while
$\Var(P_h(\cdot\mid s_h,a_h),V_{h+1}^{\pi})$
measures the transition-induced variance in continuation value. Their expected sum equals the variance of the total
return under $\pi$, so $\VarStar$ is the largest total-return variance
over safe policies. It is a fixed property of the CMDP instance. %

\paragraph{Trajectory-based total variance} The preceding quantity is uniform over the safe policy class, but it can be conservative for a particular interaction sequence.  A complementary view therefore uses the
conditional variances actually encountered during interaction. For episode $k$, let
$\tau_k$ be the first step at which the selected action lies
outside the true safe set,
with $\tau_k=H+1$ if no such action is selected. We define the \emph{trajectory-based total variance} as:
\begin{equation}
\Var_\Sigma:=\sum_{k=1}^K\sum_{h<\tau_k}
\left\{\Var(R_h(s_h^k,a_h^k))+
\Var\!\left(P_h(\cdot\mid s_h^k,a_h^k),V_{h+1}^\star\right)\right\}.
\label{eq:trajectory-variance}
\end{equation}
This quantity depends on the learner’s realized trajectories and is therefore algorithm dependent. It captures the local variance at state-action pairs visited before the first unsafe action is selected and measures transition
uncertainty using the optimal safe value function. This truncation
separates the interaction within the safe subgraph from
the additional difficulty of identifying actions outside it.

\begin{remark}
The two quantities capture different aspects of reward-learning uncertainty and neither generally subsumes the other.
Similar policy-level and trajectory-level variance characterizations underlie variance-adaptive guarantees in unconstrained RL~\citep{azar2017minimax,zanette2019tighter,zhou2023sharp,zhao2023variance}, and the corresponding quantities can be incomparable~\citep{zhou2023sharp}. The additional challenge under step-wise safety is that the region supporting these variance quantities is itself unknown, which leads to the separate safety identification complexity studied in Section~\ref{subsec:safety-identification}.
\end{remark}

\subsection{Structure of the Safe Subgraph and Transition Variance}
\label{subsec:branching}
Both variance measures introduced above are closely tied to the safe subgraph. We next study how the structure of the safe subgraph affects these quantities. In fact, it acts most directly on transition variance, through how often stochastic transitions can occur along a safe path and how large the value differences across their successors can be. We capture these two effects by the branching depth of the safe subgraph and successor-value spans.

Let $\gamma=(s_1,a_1,\ldots,s_H,a_H,s_{H+1})$ range over directed paths in
$\cG_{\mathrm{safe}}$ from $(s_1,1)$ to step $H+1$. The
\emph{branching depth} is the largest number of stochastic transitions on one
such path, i.e., $B(\cG_{\mathrm{safe}}):=
\max_\gamma\sum_{h=1}^H
\Ind\!\left\{|\supp(P_h(\cdot\mid s_h,a_h))|>1\right\}$. For a nonempty finite set $X$ and a real-valued function $f$ on $X$, write
$\spann_X(f):=\max_{x\in X}f(x)-\min_{x\in X}f(x)$. For a policy $\pi\in\Pi_\mathrm{safe}$, the \emph{successor-value span} $S^\pi$ measures
the largest difference in continuation values among possible successors of
an action selected by $\pi$ at any reachable vertex. Formally, $S^\pi
:=
\max_{\substack{(s,h)\in\cV^\pi,\ h\in[H]}}
\spann_{\supp(P_h(\cdot\mid s,\pi_h(s)))}
\!(V_{h+1}^\pi)$. Over the same safe paths, define the \emph{optimal successor-value span} by
$S^\star:=\max_{\gamma,\ h\in[H]}
\spann_{\supp(P_h(\cdot\mid s_h,a_h))}
\!(V_{h+1}^\star)$.
This quantity measures the largest span of optimal safe continuation values
across successors of actions on safe paths. The following result shows how these spans and the branching depth control transition variance.
\begin{theorem}[Safe-subgraph variance bound]
\label{thm:variance-topology}
Under Assumption~\ref{ass:init}, every $\pi\in\Pi_{\mathrm{safe}}$ satisfies:
\begin{equation}
\E^\pi\!\left[\sum_{h=1}^H
\Var\!\left(P_h(\cdot\mid s_h,a_h),V_{h+1}^\pi\right)
\,\middle|\,s_1\right]
\le \frac14(S^\pi)^2B(\cG_{\mathrm{safe}}).
\label{eq:variance-topology-perpolicy}
\end{equation}
Moreover, every realized interaction sequence satisfies
$\sum_{k=1}^K\sum_{h=1}^{\tau_k-1}
\Var\!\left(P_h(\cdot\mid s_h^k,a_h^k),V_{h+1}^\star\right)
\le \frac K4(S^\star)^2B(\cG_{\mathrm{safe}})$.
\end{theorem}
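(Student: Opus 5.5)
The argument is pointwise, via Popoviciu's inequality: any real random variable supported in an interval of length $L$ has variance at most $L^2/4$. Fix $\pi\in\Pi_{\mathrm{safe}}$. For each reachable vertex $(s,h)\in\cV^\pi$ with $h\in[H]$, write $a=\pi_h(s)$ and split on the support size. If $|\supp(P_h(\cdot\mid s,a))|=1$, then $V_{h+1}^\pi$ is constant on the support, so $\Var(P_h(\cdot\mid s,a),V_{h+1}^\pi)=0$. Otherwise, the values of $V_{h+1}^\pi$ over the support lie in an interval of length $\spann_{\supp(P_h(\cdot\mid s,a))}(V_{h+1}^\pi)\le S^\pi$. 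Popoviciu then gives
\[
\Var\!\left(P_h(\cdot\mid s,a),V_{h+1}^\pi\right)\le \tfrac14 (S^\pi)^2\,\Ind\!\left\{|\supp(P_h(\cdot\mid s,a))|>1\right\}.
\]
The definition of $S^\pi$ takes the maximum over exactly these reachable vertices, so this bound applies at every $(s_h,a_h)$ visited with positive probability.

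Summing over $h$ and taking expectations reduces the first claim to bounding the number of stochastic transitions along a trajectory. Because $\pi\in\Pi_{\mathrm{safe}}$, $\mathcal{G}^\pi\subseteq\cG_{\mathrm{safe}}$. Hence every positive-probability trajectory $(s_1,a_1,\dots,s_{H+1})$ is a directed path in $\cG_{\mathrm{safe}}$ from $(s_1,1)$ to step $H+1$. Assumption~\ref{ass:init} guarantees that such paths exist and that $(s_1,1)$ is a vertex of $\cG_{\mathrm{safe}}$. Each edge $((s_h,h),a_h,(s_{h+1},h+1))$ lies in $\cE^\pi\subseteq\cE_{\mathrm{safe}}$ because $a_h=\pi_h(s_h)$ and $P_h(s_{h+1}\mid s_h,a_h)>0$. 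It follows that $\sum_h \Ind\{|\supp(P_h(\cdot\mid s_h,a_h))|>1\}\le B(\cG_{\mathrm{safe}})$ almost surely. Taking expectations yields \eqref{eq:variance-topology-perpolicy}.

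The second claim follows the same route for each episode $k$, but two points need care. First, I must check that the realized prefix $(s_1^k,a_1^k,\dots,s_{\tau_k}^k)$ lies in $\cG_{\mathrm{safe}}$. This goes by induction on $h$. The base case is $s_1\in\cV_{\mathrm{safe},1}$. For the inductive step, if $s_h^k\in\cV_{\mathrm{safe},h}$ and $h<\tau_k$, then $a_h^k\in\cA_{\mathrm{safe},h}(s_h^k)$ by definition of $\tau_k$. Since a safe action's support is contained in $\cV_{\mathrm{safe},h+1}$, the realized successor $s_{h+1}^k$ is a safe vertex, and the traversed edge lies in $\cE_{\mathrm{safe}}$.

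Second, the definitions of $S^\star$ and $B(\cG_{\mathrm{safe}})$ refer to full paths to step $H+1$, whereas this prefix stops at $\tau_k$. I extend it to a full path: $s_{\tau_k}^k$ is a safe vertex, so it has a nonempty safe action set, and iterating this step by step gives a continuation whose vertices and edges all lie in $\cG_{\mathrm{safe}}$. Along the resulting full path $\gamma$, each step $h<\tau_k$ contributes at most $\frac14 (S^\star)^2\Ind\{|\supp|>1\}$ by Popoviciu applied to $V_{h+1}^\star$. Summing over $h<\tau_k$ bounds the per-episode total by $\frac14(S^\star)^2$ times the stochastic-branch count of $\gamma$, which is at most $B(\cG_{\mathrm{safe}})$. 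Summing over $k$ gives the factor $K$.

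I expect the main obstacle to be exactly this bookkeeping: that realized prefixes, which are not generated by any fixed safe policy, still embed in full safe paths, so that the path-based definitions of $S^\star$ and $B(\cG_{\mathrm{safe}})$ apply. The extension via nonemptiness of $\cA_{\mathrm{safe},h}$ on safe vertices handles this.
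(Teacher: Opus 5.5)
Your proposal is correct and follows essentially the same route as the paper. Both use a Popoviciu-type bound $\Var(q,f)\le\frac14\bigl(\max_{\supp(q)}f-\min_{\supp(q)}f\bigr)^2$ that vanishes on deterministic transitions, the fact that any safe path contains at most $B(\cG_{\mathrm{safe}})$ stochastic transitions, and, for the trajectory claim, completion of the pre-$\tau_k$ prefix to a full safe path through nonempty safe action sets at viable states; your induction showing that the realized prefix stays in $\cG_{\mathrm{safe}}$ only spells out a step the paper states in one line.
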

The theorem bounds each safe policy's transition variance using $S^\pi$
and the trajectory-based transition variance using $S^\star$. Small
branching depth limits how often transition randomness can affect continuation
value, while a small successor-value span limits the contribution of each
stochastic transition.
The full proof is given in Appendix~\ref{app:variance-topology}.

Figure~\ref{fig:topologies} shows how changing only the costs can change the
safe subgraph, thereby removing or retaining stochastic branches and
changing the continuation-value differences across their successors. The following proposition shows that the bound
in Equation~\eqref{eq:variance-topology-perpolicy} is tight. In Appendix~\ref{app:n-gadget-tightness}, we provide the complete analysis and proof.

\begin{figure}[t]
\centering
\includegraphics[width=0.90\linewidth]{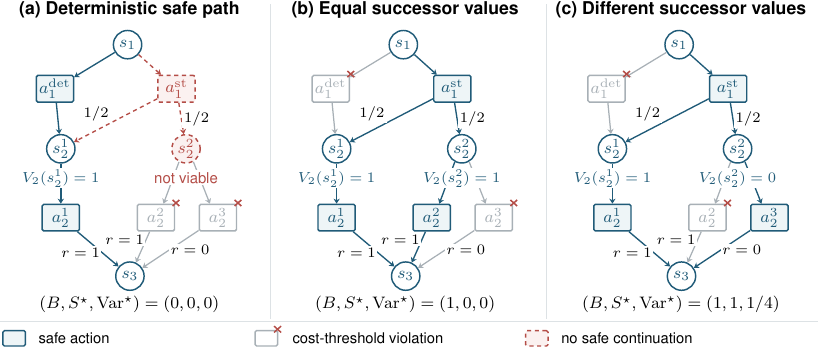}
\caption{
\textbf{The structure of the safe subgraph reshapes transition variance.}
All panels share the same reward function and transition kernel and differ only
in their costs. Changing only the costs removes the stochastic
branch in panel~(a) and retains it in panels~(b)--(c). Its successors have equal
continuation values and zero transition variance in panel~(b), but unequal
values and positive transition variance in panel~(c). With $\tau=1/2$, crossed
actions have cost $1>\tau$, while all other actions have cost $0$. Blue solid elements denote the safe subgraph, gray solid elements denote actions excluded by the cost threshold
and their transitions, and red dashed elements denote
exclusion because a successor has no safe continuation.
}
\label{fig:topologies}
\end{figure}%

\begin{proposition}
\label{prop:gadget-main-tightness}
For every $n\ge1$, there exists a family of CMDPs with $H=2n$ that share the
same transition kernel and deterministic rewards and differ only in the cost function $c$. For every $b\in\{0,\ldots,n\}$, the family contains an instance such
that, for every $\pi\in\Pi_{\mathrm{safe}}$, $B(\cG_{\mathrm{safe}})=b$,
$S^\pi=\Ind\{b>0\}$, $
\E^\pi\!\left[
\sum_{h=1}^H
\Var\!\left(P_h(\cdot\mid s_h,a_h),V_{h+1}^\pi\right)
\,\middle|\,s_1
\right]=\frac{b}{4}$.
Hence, Equation~\eqref{eq:variance-topology-perpolicy} holds with equality and
$\VarStar=b/4$.
\end{proposition}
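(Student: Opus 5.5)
We construct an explicit gadget: a chain of $n$ two-step blocks, each block occupying steps $2i-1,2i$ for $i\in[n]$. All blocks share the same transition kernel and the same deterministic rewards; only the costs will vary with $b$.

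\textbf{Gadget design.} At the start of block $i$ the learner sits at a hub state $u_i$ (with $u_1=s_1$). Two actions are available there.
\begin{itemize}
\item The ``risky'' action $a^{\mathrm{br}}$ moves uniformly at random to one of two intermediate states $x_i^+$ and $x_i^-$.
\item The ``sure'' action $a^{\mathrm{det}}$ moves deterministically to a third intermediate state $y_i$.
\end{itemize}
At step $2i$, every intermediate state has a single action leading deterministically to $u_{i+1}$; for $i=n$ it leads to an absorbing terminal state. The rewards are $r_{2i}(x_i^+,\cdot)=1$, with all other rewards equal to $0$. Hence the two successors of $a^{\mathrm{br}}$ have continuation values that differ by exactly $1$. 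This difference is unaffected by later blocks: the continuation from $u_{i+1}$ is common to both successors, so it cancels in the span. To realize an instance with parameter $b$, we set $c_{2i-1}(u_i,a^{\mathrm{det}})=1>\tau$ for $i\le b$ and $c_{2i-1}(u_i,a^{\mathrm{br}})=1>\tau$ for $i>b$. All other costs are $0$, with $\tau=1/2$.

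\textbf{Verification.} First we compute the safe subgraph by the backward recursion~\eqref{eq:safe-action}. Every successor vertex has a zero-cost continuation, so every vertex is safe. Moreover, exactly one action survives at each hub: $a^{\mathrm{br}}$ in blocks $i\le b$ and $a^{\mathrm{det}}$ in blocks $i>b$. Consequently $\Pi_{\mathrm{safe}}$ is a single policy up to actions at unreachable vertices, which do not affect any of the quantities involved.

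Next, $B(\cG_{\mathrm{safe}})=b$. Every safe path crosses exactly the $b$ stochastic hubs, and no other transition in the safe subgraph is stochastic.

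The span is read off directly. If $b>0$, then $V^\pi_{2i}(x_i^+)-V^\pi_{2i}(x_i^-)=1$ and every deterministic support has span $0$, so $S^\pi=1$. If $b=0$, every support is a singleton and $S^\pi=0$.

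The variance follows from the same structure. Each of the $b$ stochastic hubs is reached with probability one and contributes $\Var(\mathrm{Unif}\{v,v+1\})=1/4$. All other terms vanish. Summing gives $b/4=\tfrac14(S^\pi)^2B$, which is exactly equality in~\eqref{eq:variance-topology-perpolicy}.

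Finally, the rewards are deterministic, so $\Var(R_h)=0$. Together with the uniqueness of the safe policy, this gives $\VarStar=b/4$ from~\eqref{eq:safe-max-return-variance}. The horizon is $H=2n$ by construction.

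\textbf{Main obstacle.} The delicate step is making sure that successor-value differences across blocks do not interact, so that the span is exactly $1$ rather than accumulating. We handle this by merging both branches back into the common hub $u_{i+1}$ before the next block begins. This merging is the reason each block needs two steps and the horizon is $2n$. A second point to check is that unreachable vertices, such as $y_i$ for $i\le b$, do not enlarge $B$ or $S^\pi$. They do not: $S^\pi$ is defined over $\cV^\pi$, and $y_i$ leads only to deterministic safe paths, so it adds no stochastic branching.
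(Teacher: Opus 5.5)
Your proposal is correct and follows essentially the same route as the paper: a chain of $n$ two-step gadgets that re-merge at the next hub, so each hub is reached with probability one. In both, costs decide whether the stochastic or the deterministic hub action is the unique safe one, and a unit successor-value gap gives exactly $1/4$ per open gadget. The only difference is how a gadget is closed: the paper keeps the stochastic action cheap and removes it through successor non-viability (both actions at $s'_{B,i}$ cost one, and every unspecified pair costs one so safe actions are unique), whereas you exclude hub actions directly by their own cost and implicitly treat all actions at intermediate states as identical, which is simpler and equally valid here.
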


\subsection{Safety Identification Complexity}
\label{subsec:safety-identification}
The previous results reveal a potential benefit of safety: removing random branches or
successor-value variation can reduce reward uncertainty. However, even a low-variance safe subgraph can still be difficult to identify. %
Intuitively, a rare non-viable branch can
contribute little transition variance
and yet require many executions to discover. This creates a distinct safety identification problem.

For an unsafe action, we first define a local complexity that captures the number of executions needed to identify it as unsafe. The action can be excluded either directly, when its mean cost
exceeds the threshold, or recursively, through a successor state
at which every action is unsafe. %
Formally, for every step $h\in[H]$, state $s\in\cS$, and action
$a\notin\cA_{\mathrm{safe},h}(s)$, we define:
\begin{equation}
\kappa_h(s,a):=
\begin{cases}
\{c_h(s,a)-\tau\}^{-2}, & c_h(s,a)>\tau,\\[1mm]
\displaystyle
\min_{\substack{s'\in\supp(P_h(\cdot\mid s,a))\\
s'\notin\cV_{\mathrm{safe},h+1}}}
\dfrac{\kappa_{h+1}(s')}{P_h(s'\mid s,a)}, & c_h(s,a)\le\tau,
\end{cases}
\qquad
\kappa_h(s):=\sum_{a\notin\cA_{\mathrm{safe},h}(s)}\kappa_h(s,a).
\label{eq:recursive-kappa}
\end{equation}
This definition mirrors the recursive structure of safety. A positive cost gap
yields an inverse-squared term. Otherwise, one non-viable
successor suffices to exclude the parent action, giving the minimum over such
successors, while reaching that successor incurs the factor
$1/P_h(s'\mid s,a)$. At each state $s$, $\kappa_h(s)$ sums complexities of its unsafe actions.

\paragraph{Instance-dependent identification complexity}
The first unsafe action can only be selected after a safe path. Hence its
vertex is reachable under some safe policy. This motivates the \emph{safety identification
complexity}, which we define as follows:
\begin{equation}
\cC(\cM)
:=
\sum_{(s,h)\in\bigcup_{\pi\in\Pi_{\mathrm{safe}}}\cV^\pi}
\kappa_h(s),
\label{eq:C-wit}
\end{equation}
where we set $\kappa_{H+1}(s)=0$ for any state $s$. We write $\cC$ for $\cC(\cM)$ when the instance is clear from context. Thus $\cC$ quantifies the difficulty of ruling out unsafe
actions at the reachable boundary of the safe subgraph. Unlike the
variance quantities, it is a reward-independent instance quantity determined
by $(P,c,\tau,s_1)$.

The safety identification complexity also captures an unavoidable statistical
cost of eliminating unsafe actions. Even on a simple tree-structured family,
any $\delta$-valid elimination rule requires
$\Omega\!\left(\kappa_h(s,a)\log(1/\delta)\right)$ expected executions to
eliminate an unsafe root action, while for pairwise-disjoint trees these
requirements add to
$\Omega\!\left(\cC\log(1/\delta)\right)$ expected executions. Thus, the cost-gap and rare-successor effects in
$\kappa$, together with their accumulation in $\cC$,
capture genuine safety identification difficulty. The construction and proof
are given in Appendix~\ref{app:witness-elimination}.

\section{Safe Variance-Adaptive Exploration}
\label{sec:algorithm}

In this section, we provide our main algorithm, \emph{Safe Variance-Adaptive Exploration} (SVAE).
Section~\ref{sec:topology} shows that the safe subgraph can reduce reward uncertainty, while identifying it remains difficult even when the return variance is small. On the reward side, empirical variance can sharpen optimistic exploration, whereas on the cost side, empirical variance alone cannot reliably rule out unsafe actions. An unsafe action can exhibit zero empirical cost variance until a rare high-cost outcome is observed, as shown in Appendix~\ref{app:silent-cliff}.
SVAE therefore adopts an asymmetric design, combining
variance-independent cost lower confidence bounds for safe-subgraph learning with variance-adaptive bonuses
for reward exploration.
We present SVAE in Algorithm~\ref{alg:svae}.

\begin{algorithm}[t]
\caption{Safe Variance-Adaptive Exploration (SVAE)}
\label{alg:svae}
\begin{algorithmic}[1]
\small
\Require $\cS$, $\cA$, $H$, $K$, $s_1$, $\tau$, and $\delta$
\State Set $\iota\gets\log(32S^2AHK/\delta)$ and initialize all empirical statistics
\For{$k=1,2,\ldots,K$}
    \State Form $n_h^k$, $\widehat r_h^k$, $\widehat c_h^k$, $\widehat P_h^k$, $\widehat\sigma_{R,h}^{2,k}$, and $\widehat\supp_h^k$ for all $(s,a,h)\in\cS\times\cA\times[H]$
    \State Set $\underline c_h^k$ by~\eqref{eq:cost-lcb} for all $(s,a,h)\in\cS\times\cA\times[H]$
    \State $s_1^k\gets s_1$, $\hat\cV_{\mathrm{safe},H+1}^k\gets\cS$, and $V_{H+1}^k(\cdot)\gets0$
    \For{$h=H,H-1,\ldots,1$}
        \State $\hat\cA_{\mathrm{safe},h}^k(s)\gets\{a\in\cA:\underline c_h^k(s,a)\le\tau,\ \widehat\supp_h^k(s,a)\subseteq\hat\cV_{\mathrm{safe},h+1}^k\}$ for all $s\in\cS$
        \State $\hat\cV_{\mathrm{safe},h}^k\gets\{s\in\cS:\hat\cA_{\mathrm{safe},h}^k(s)\ne\varnothing\}$ \hfill {\color{blue}$\triangleright$\,\textit{Safe-subgraph learning}}
        \State Compute $b_h^k(s,a)$ by~\eqref{eq:variance-bonus} for all $(s,a)\in\cS\times\cA$ \hfill {\color{blue}$\triangleright$\,\textit{Variance-adaptive bonus}}
        \State $Q_h^k(s,a)\gets\min\{H-h+1,\widehat r_h^k(s,a)+\widehat P_h^k(\cdot\mid s,a)^\top V_{h+1}^k+b_h^k(s,a)\}$ for all $(s,a)\in\cS\times\cA$
        \For{$s\in\cS$}
            \State $\displaystyle
            \pi_h^k(s)\in
            \begin{cases}
            \argmax_{a\in\hat\cA_{\mathrm{safe},h}^k(s)} Q_h^k(s,a),
            & \hat\cA_{\mathrm{safe},h}^k(s)\ne\varnothing,\\
            \argmax_{a\in\cA} Q_h^k(s,a),
            & \text{otherwise},
            \end{cases}$
            \State $V_h^k(s)\gets Q_h^k(s,\pi_h^k(s))$
        \EndFor
    \EndFor
    \For{$h=1,2,\ldots,H$} \hfill {\color{blue}$\triangleright$\,\textit{Policy execution}}
        \State Play $a_h^k\gets\pi_h^k(s_h^k)$, observe $(R_h^k,C_h^k,s_{h+1}^k)$, and update the empirical statistics
    \EndFor
\EndFor
\end{algorithmic}
\end{algorithm}

\paragraph{Safe-subgraph learning}
\label{subsec:outer}

SVAE maintains an optimistic outer approximation of the unknown safe subgraph. Fix a confidence parameter $\delta\in(0,1)$ and set
$\iota:=\log(32S^2AHK/\delta)$.
Let $N_h^k(s,a)$ and $N_h^k(s,a,s')$
denote, respectively, the numbers of visits to $(s,a)$
and transitions from $(s,a)$ to $s'$ at step $h$
before episode $k$.
Write $n_h^k(s,a):=\max\{N_h^k(s,a),1\}$.
Let $\widehat r_h^k$, $\widehat c_h^k$, and
$\widehat P_h^k$ denote the corresponding empirical
estimates of $r_h$, $c_h$, and $P_h$, respectively, with detailed definitions given in
Appendix~\ref{app:empirical-estimators}.
The observed successor set is
$\widehat\supp_h^k(s,a):=
\{s'\in\cS:N_h^k(s,a,s')>0\}$.

We then compute the optimistic cost estimates for step $h\in [H]$, state $s\in\mathcal{S}$, and action $a\in \cA$ as:
\begin{equation}
\underline c_h^k(s,a)
:=\widehat c_h^k(s,a)-\sqrt{2\iota/n_h^k(s,a)}.
\label{eq:cost-lcb}
\end{equation}
SVAE constructs the candidate action sets
$\hat\cA_{\mathrm{safe},h}^k(s)$ and state sets
$\hat\cV_{\mathrm{safe},h}^k$ by applying the backward
recursion in Equation~\eqref{eq:safe-action}, with optimistic cost
estimates in place of true costs and observed successors
in place of the full successor sets.
The following lemma shows that this recursion retains
all safe actions and states with high probability.
Its proof is given in Appendix~\ref{app:safe-superset}.

\begin{lemma}[Optimism of the candidate sets]
\label{lem:safe-superset}
With probability at least $1-\delta/8$, it holds %
$\cA_{\mathrm{safe},h}(s)
\subseteq
\hat\cA_{\mathrm{safe},h}^k(s)$ and
$\cV_{\mathrm{safe},h}
\subseteq
\hat\cV_{\mathrm{safe},h}^k$ for every episode
$k\in[K]$, step $h\in[H]$, and state $s\in\cS$.
\end{lemma}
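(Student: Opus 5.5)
The plan is to define a single good event for the cost estimates, show it has probability at least $1-\delta/8$, and then derive both inclusions deterministically on that event by backward induction on $h$.

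\textbf{The good event.} Define $\Est^{c}$ as the event that for every $k\in[K]$, $h\in[H]$, $(s,a)\in\cS\times\cA$,
\[
\widehat c_h^k(s,a)-c_h(s,a)\le\sqrt{2\iota/n_h^k(s,a)} .
\]
When $N_h^k(s,a)=0$, the empirical cost is initialized (say to $0$) and $n_h^k=1$. Then $\underline c_h^k(s,a)\le 0\le\tau$ trivially, because $\sqrt{2\iota}\ge1$ for our choice of $\iota$.

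When $N_h^k(s,a)=n\ge1$, I would use the standard device of indexing samples by visit count: let $C^{(1)},C^{(2)},\ldots$ be the successive cost samples at $(s,a,h)$. These are conditionally independent $[0,1]$-valued with mean $c_h(s,a)$, so the centred partial sums form a martingale difference sequence. Azuma--Hoeffding gives, for each fixed $n\le K$,
\[
\Prob\Bigl(\tfrac1n\textstyle\sum_{i\le n}C^{(i)}-c_h(s,a)>\sqrt{2\iota/n}\Bigr)\le e^{-4\iota}.
\]
A union bound over $n\in[K]$, $(s,a,h)$ costs $SAHK\,e^{-4\iota}\le\delta/8$, by the definition $\iota=\log(32S^2AHK/\delta)$, with ample slack. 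This handles the randomness of $N_h^k$ without needing the episode index.

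The second ingredient is deterministic: every observed successor is a true successor, so $\widehat\supp_h^k(s,a)\subseteq\supp(P_h(\cdot\mid s,a))$ always holds.

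\textbf{Backward induction on $\Est^{c}$.} Fix $k$ and induct downward on $h$. The base case is $\hat\cV_{\mathrm{safe},H+1}^k=\cS=\cV_{\mathrm{safe},H+1}$.

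Suppose $\cV_{\mathrm{safe},h+1}\subseteq\hat\cV_{\mathrm{safe},h+1}^k$, and take $a\in\cA_{\mathrm{safe},h}(s)$. By definition $c_h(s,a)\le\tau$, so on the event $\underline c_h^k(s,a)\le c_h(s,a)\le\tau$. Moreover
\[
\widehat\supp_h^k(s,a)\subseteq\supp(P_h(\cdot\mid s,a))\subseteq\cV_{\mathrm{safe},h+1}\subseteq\hat\cV_{\mathrm{safe},h+1}^k .
\]
Hence $a\in\hat\cA_{\mathrm{safe},h}^k(s)$. Nonemptiness then transfers directly: if $s\in\cV_{\mathrm{safe},h}$, its safe action set is nonempty, so $s\in\hat\cV_{\mathrm{safe},h}^k$. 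This closes the induction for all $k$ simultaneously on the same event.

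\textbf{Main obstacle.} The only delicate point is the concentration step. $N_h^k(s,a)$ is random and the costs are adaptively sampled, so the visit-indexed (``stack of samples'') construction together with the union bound over counts is needed. One must also confirm the $\iota$ constant covers the stated $\delta/8$ budget, which it does comfortably.
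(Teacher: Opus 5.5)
Your proof is correct and follows essentially the same route as the paper. Both arguments use a uniform cost-concentration event built from visit-indexed independent sample copies with a union bound over deterministic counts, almost-sure inclusion of observed successors in the true support, and backward induction from $\hat\cV_{\mathrm{safe},H+1}^k=\cS$. The only difference is that your event is one-sided (failure probability $e^{-4\iota}$ per count), which is all this lemma needs; the paper uses the two-sided event $\cE_c$ (bounded by $2e^{-\iota}$) because it reuses $\cE_c$ in the violation analysis.
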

\paragraph{Variance-adaptive planning}
\label{subsec:inner}
SVAE then performs optimistic value iteration on the estimated
safe subgraph using variance-adaptive bonuses. For each step $h$ and state-action pair
$(s,a)$, define the empirical reward variance
$\widehat\sigma_{R,h}^{2,k}(s,a):=
\frac{1}{n_h^k(s,a)}
\sum_{\ell<k}
(R_h^\ell-\widehat r_h^k(s,a))^2
\Ind\{(s_h^\ell,a_h^\ell)=(s,a)\}$, where $\mathbf{1}\{\cdot\}$ denotes the indicator function.
For any function $V:\cS\to\mathbb{R}$, write
$\widehat P_h^kV(s,a):=
\sum_{s'\in\cS}\widehat P_h^k(s'\mid s,a)V(s')$.
Let $Q_h^k$ and $V_h^k$ denote the state-action value and state-value
estimates computed in Algorithm~\ref{alg:svae}.
SVAE uses the following empirical-Bernstein exploration bonus:
\begin{equation}
\begin{split}
b_h^k(s,a)
={}&
4\sqrt{
\frac{\Var(\widehat P_h^k(\cdot\mid s,a),V_{h+1}^k)\,\iota}
     {n_h^k(s,a)}
}
+2\sqrt{
\frac{\widehat\sigma_{R,h}^{2,k}(s,a)\,\iota}
     {n_h^k(s,a)}
}+\frac{\{16(H-h)+5\}\iota}{n_h^k(s,a)}.
\end{split}
\label{eq:variance-bonus}
\end{equation}
The first two terms account for transition and reward
uncertainty through the empirical variances of estimated
next-state values and reward observations, respectively.
The last term provides a finite-sample correction even
when the empirical variances vanish.

Using these bonuses, SVAE computes the state-action value
estimates $Q_h^k$ by optimistic Bellman updates in the same backward pass that
constructs the estimated safe subgraph.
At each state $s$, it chooses $\pi_h^k(s)$ to maximize
$Q_h^k(s,\cdot)$ over $\hat\cA_{\mathrm{safe},h}^k(s)$
when this set is nonempty, and over $\cA$ otherwise,
with $V_h^k(s)=Q_h^k(s,\pi_h^k(s))$. SVAE then executes the resulting policy $\pi^k$ for one
episode and uses the observed rewards, costs, and successor
states to update the empirical statistics for the next episode.

\begin{remark}%
Variance-adaptive RL uses empirical next-state value variance, and in some cases reward variance, to sharpen optimism in unconstrained MDPs
\citep{azar2017minimax,zanette2019tighter,zhou2023sharp}.
In SVAE, the continuation values entering the transition-variance term are obtained by planning over candidate safe action sets learned online.
Hence, the bonus adapts to the return uncertainty under the evolving safety-induced graph structure.
\end{remark}%

\section{Learning Guarantees}
\label{sec:main_results}

In this section, we establish an instance-dependent regret bound and cumulative step-wise violation bounds for SVAE. We show how the variance and identification complexity introduced in Section~\ref{sec:topology} govern its learning guarantees.
Throughout, we use $\Omega$ and $\Theta$ to hide universal constants.
The notation $\widetilde{\mathcal O}$ additionally hides polylogarithmic
factors in $S,A,H,K$, and $1/\delta$.

\subsection{Regret and Constraint Violation}
\label{subsec:regret}
\label{subsec:violation}
We first present the cumulative regret attained by Algorithm~\ref{alg:svae} over $K$ episodes.

\begin{theorem}[Regret bound]
\label{thm:regret}
Suppose Assumption~\ref{ass:init} holds.
For any integer $K\ge1$ and confidence parameter $\delta\in(0,1)$,
SVAE (Algorithm~\ref{alg:svae}) attains:
\[
\cR(K)\le\widetilde{\mathcal O}\!\left(
\sqrt{SAH\min\!\left\{\Var_\Sigma,K\VarStar\right\}}
+S\sqrt{AH^3\min\{K,\cC\}}
+S^2AH^2
\right),
\]
with probability at least $1-\delta$.
\end{theorem}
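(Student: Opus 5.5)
We follow the standard optimism-plus-decomposition template of variance-adaptive analyses \citep{zhou2023sharp}, and split each episode at the first unsafe step $\tau_k$.

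\textbf{Step 1: the good event.} We work on the intersection $\MainEvent$ of several events: Lemma~\ref{lem:safe-superset}; empirical-Bernstein concentration for rewards and for $\widehat P_h^k V_{h+1}^\star$; a Bernstein-type bound on $(\widehat P_h^k-P_h)(V_{h+1}^k-V_{h+1}^\star)$ that costs $\widetilde{\mathcal O}(S H\iota/n)$ plus a variance cross term; and Freedman martingale bounds for the trajectory differences. Each event fails with probability at most $\delta/8$.

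On $\MainEvent$ we show optimism, $V_h^k(s)\ge V_h^\star(s)$ for every safe vertex, by backward induction. Since $\cA_{\mathrm{safe},h}(s)\subseteq\hat\cA_{\mathrm{safe},h}^k(s)$, the action $\pi^\star_h(s)$ is available to the maximization. The bonus~\eqref{eq:variance-bonus} dominates the estimation error at $V^\star$. The step of replacing $\Var(\widehat P,V^\star)$ by $\Var(\widehat P,V^k)$ is handled by the $16(H-h)\iota/n$ term, as in \citet{azar2017minimax,zhou2023sharp}. Hence $\cR(K)\le\sum_k (V_1^k-V_1^{\pi^k})(s_1)$.

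\textbf{Step 2: decomposition along trajectories.} Unroll $V_h^k-V_h^{\pi^k}$ along the realized trajectory. This gives a sum over visited $(s_h^k,a_h^k)$ of $2b_h^k$ plus lower-order $S H\iota/n$ terms plus martingale differences.

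We split every episode into the steps $h<\tau_k$ and the suffix $h\ge\tau_k$.

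For the suffix, we bound the entire residual crudely by $H$ times the indicator that $\tau_k\le H$, and, more importantly, by $H\cdot\#\{\text{visits to an unsafe pair at a vertex in }\bigcup_\pi\cV^\pi\}$. The vertex $(s_{\tau_k},\tau_k)$ is reached by a safe prefix, so it lies in $\bigcup_{\pi}\cV^\pi$. For an unsafe $a$ to remain in $\hat\cA^k$, one of two things must hold:
\begin{itemize}
\item if $c>\tau$, the count satisfies $n\lesssim \kappa_h(s,a)\iota$ by the LCB~\eqref{eq:cost-lcb};
\item otherwise, an unviable successor $s'$ has not been observed or is still a candidate. Recursively, this needs roughly $\kappa_{h+1}(s')/P_h(s'\mid s,a)$ visits. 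We use the multiplicative Chernoff bound for the first sighting of $s'$, plus induction on $h$ that $s'$ is eliminated after its own complexity.
\end{itemize}
Summing over pairs gives $\widetilde{\mathcal O}(\cC)$ unsafe executions.

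The suffix regret, however, should not be paid at full price $H$ per episode. Instead, the suffix bonus sums $\sum b$ restricted to $h\ge\tau_k$, and Cauchy--Schwarz over at most $\min\{K,\cC\}$ such episodes yields $S\sqrt{AH^3\min\{K,\cC\}}$. This uses per-step bonus at most $H\sqrt{S\iota/n}$ in the worst case (Hoeffding-level for $V^k$), with the $S$ factor coming from the union over successors. The additive $S^2AH^2$ absorbs the $1/n$ terms and the Freedman lower-order terms.

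\textbf{Step 3: the variance term on the safe prefix.} On $h<\tau_k$, Cauchy--Schwarz gives
\[
\sum_{k,h<\tau_k} b_h^k\lesssim\sqrt{SAH\iota\sum_{k,h<\tau_k}\bigl(\widehat\sigma_R^2+\Var(\widehat P,V^k_{h+1})\bigr)}+S^2AH^2\iota,
\]
using $\sum 1/n\le SAH\log K$ (the factor $H$ reflects per-step non-stationary counts). We then replace the empirical variances by their true counterparts at $V^\star$, with error $\Var(P,V^k-V^\star)$. This error is bounded recursively by $H$ times the regret itself, the usual self-bounding trick, and solving the quadratic inequality gives $\sqrt{SAH\Var_\Sigma}$.

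To get $K\VarStar$ instead, we use the law of total variance along $\pi^k$ restricted to the safe prefix. In expectation, the prefix variance is at most the return variance of a safe policy that agrees with $\pi^k$ before $\tau_k$ and follows $\pi^\star$ afterwards, and this is at most $\VarStar$. Converting from $V^\star$ to $V^{\pi}$ variances costs another self-bounded term, and Freedman's inequality converts the realized sum to its expectation plus $\widetilde{\mathcal O}(H^2)$.

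\textbf{Main obstacle.} The hardest step is the recursive elimination count in Step~2, which must show that an unsafe-but-low-cost action leaves $\hat\cA^k$ after $\widetilde{\mathcal O}(\kappa_h(s,a))$ visits. This requires coupling the observation of rare successors with the induction on successor elimination, and avoiding a multiplicative blow-up of $\log$ factors across $H$ levels. We would first try a potential argument that charges each visit to the pair $(s',h+1)$ that witnesses it, showing that the total charge is at most $\kappa_h(s)\iota$ up to constants.
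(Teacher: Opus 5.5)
Your architecture matches the paper's: optimism via candidate-set inclusion, splitting each episode at $\tau_k$, a safe comparison policy for the $K\VarStar$ branch, and self-bounding. The gap is the step you flag as the main obstacle, which you leave open; the level-by-level induction you sketch does not close it. Suppose you convert visits to the minimizing non-viable successor $y$ into executions of $(s,a,h)$ with a Chernoff/Freedman step, and then invoke the induction hypothesis at $y$. Then every level costs a multiplicative constant (the factor $2$ in \eqref{eq:predictable-upper-publication}). For deep witness trees the count becomes $2^{H}\iota\kappa_h(s,a)$, or at best picks up an extra polynomial factor in $H$ if you sharpen the conversion to $(1+1/H)$. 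So the blow-up is in constants across depth, not merely in logarithms, and $\widetilde{\mathcal O}$ hides neither.

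The paper does no concentration at intermediate levels. It unrolls \eqref{eq:recursive-kappa} into a witness tree with $\kappa_h(s,a)=\sum_\ell 1/(q_\ell\Delta_\ell^2)$ (Lemma~\ref{lem:witness-tree-publication}). Each candidate execution of the root is assigned to the leaf $\ell_k$ fixed by $\pi^k$ before any transition on its path is observed. On $\cE_c$, the number $N_\ell$ of such executions whose whole path is realized is deterministically at most $10\iota/\Delta_\ell^2$. The reason is that the terminal action is permanently rejected after $8\iota/\Delta_\ell^2$ samples. One further completion then records the entire path, after which the root cannot be a candidate in any episode selecting that leaf. The importance-weighted variable $X_k=Z_k/q_{\ell_k}$ satisfies $\E[X_k\mid\cF_h^k]=I_k$, $0\le X_k\le\kappa_h(s,a)$, and $\sum_k X_k=\sum_\ell N_\ell/q_\ell\le 10\iota\kappa_h(s,a)$. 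A single application of \eqref{eq:predictable-upper-publication} per root therefore gives $O(\iota\kappa_h(s,a))$ with no dependence on tree depth (Lemma~\ref{lem:false-candidate-control}).

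Second, your suffix argument does not give the stated $H^3$. Write $N_{\mathrm{dep}}:=\sum_k\Ind\{\tau_k\le H\}$. With a per-step bonus of order $H\sqrt{S\iota/n}$ over at most $HN_{\mathrm{dep}}$ suffix steps, Cauchy--Schwarz gives $H\sqrt{S\iota}\cdot O(\sqrt{SAH\cdot HN_{\mathrm{dep}}})=O(SH^2\sqrt{A\iota N_{\mathrm{dep}}})$. That is $S\sqrt{AH^4\min\{K,\cC\}}$, a factor $\sqrt H$ too large. The paper keeps the variance form $\widetilde\beta_h^k$ on the suffix. It bounds $\sum_k\sum_{h\ge\tau_k}\Var(P_h(\cdot\mid s_h^k,a_h^k),V_{h+1}^k)$ by telescoping $(V^k)^2$ from $\tau_k$ to $H+1$. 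The square at $\tau_k$ enters with a negative sign and $V_{H+1}^k\equiv0$, so the sum is $O(H\widetilde B_K+H^2N_{\mathrm{dep}}+H^2\iota)$ (Lemma~\ref{lem:suffix-square-publication}). This is $O(H^2)$ rather than $O(H^3)$ per departing episode, and self-bounding then yields $S\sqrt{AH^3\min\{K,\cC\}}$.

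The same truncation affects your prefix step. Telescoping $[D^k]_+$ and $[E^k]_+$ only up to $\tau_k$ leaves a positive boundary square of up to $H^2$ per departing episode. The error is therefore $O(HB_K+H^2N_{\mathrm{dep}}+H^2\iota)$, not just $H$ times the regret, and it contributes a second $S\sqrt{AH^3\min\{K,\cC\}}$ term to the prefix bound.

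Finally, the comparison policy for $K\VarStar$ must be the Markov projection $\bar\pi^k$ of \eqref{eq:safe-projection-publication}, which swaps in $\pi_h^\star(s)$ only where $\pi_h^k(s)$ is unsafe and still agrees with $\pi^k$ before $\tau_k$. Your ``follow $\pi^k$ until $\tau_k$, then $\pi^\star$'' is history dependent. Its return variance is therefore not covered by the supremum over $\Pi_{\mathrm{safe}}$ that defines $\VarStar$.
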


Theorem~\ref{thm:regret} separates the variance-dependent contribution to regret from the cost of safety identification, and gives polylogarithmic regret on fixed instances with $\VarStar=0$. By taking the minimum of $\Var_\Sigma$ and $K\VarStar$, the bound refines the purely instance-dependent guarantee and can yield a smaller variance-dependent term when the learner encounters low-variance trajectories. When $\tau=1$, our setting degenerates to unconstrained RL, and the bound recovers the variance-dependent term of MVP-V~\citep{zhou2023sharp} in the time-inhomogeneous setting.
Theorem~\ref{thm:regret} also yields
$\widetilde{\mathcal O}(S\sqrt{AH^3K})$ regret in the worst case, matching the leading
regret of OMDPD~\citep{zhu2025optimistic}. For uniformly bounded $\cC$
and sufficiently large $K$, the leading regret term becomes
$\widetilde{\mathcal O}(\sqrt{SAH^3K})$, which matches the
corresponding minimax rate
~\citep{zhang2025settling}.
The proof is deferred to Appendix~\ref{app:regret-proof}.

For deterministic rewards, combining Theorem~\ref{thm:regret} with Theorem~\ref{thm:variance-topology} and the universal bound $\VarStar\le H^2/4$ yields the following graph-dependent corollary.

\begin{corollary}[Graph-dependent regret]
\label{cor:deterministic-topology-regret}
Under the conditions of Theorem~\ref{thm:regret},
if the rewards are deterministic, then with probability
at least $1-\delta$, the reward regret of SVAE satisfies
$\cR(K)\le\widetilde{\mathcal O}(
\sqrt{SAHK\min\{H^2,(S^\star)^2B(\cG_{\mathrm{safe}})\}}
+S\sqrt{AH^3\min\{K,\cC\}}
+S^2AH^2)$.
\end{corollary}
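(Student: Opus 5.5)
The corollary follows by substituting the structural bounds into the variance term of Theorem~\ref{thm:regret}; the other two terms carry over unchanged. Fix the high-probability event of Theorem~\ref{thm:regret}. On it,
\[
\cR(K)\le\widetilde{\mathcal O}\bigl(\sqrt{SAH\min\{\Var_\Sigma,K\VarStar\}}+S\sqrt{AH^3\min\{K,\cC\}}+S^2AH^2\bigr).
\]
It then suffices to show that $\min\{\Var_\Sigma,K\VarStar\}\le K\min\{H^2/4,\tfrac14(S^\star)^2B(\cG_{\mathrm{safe}})\}$, since the constant $1/4$ is absorbed into $\widetilde{\mathcal O}$.

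First, under deterministic rewards each $R_h(s,a)$ is a point mass, so $\Var(R_h(s,a))=0$ at every state--action pair. Both $\VarStar$ in~\eqref{eq:safe-max-return-variance} and $\Var_\Sigma$ in~\eqref{eq:trajectory-variance} then reduce to their transition-variance parts. Applying the second statement of Theorem~\ref{thm:variance-topology} to the realized interaction sequence gives
\[
\Var_\Sigma=\sum_{k=1}^K\sum_{h<\tau_k}\Var\bigl(P_h(\cdot\mid s_h^k,a_h^k),V_{h+1}^\star\bigr)\le \tfrac K4 (S^\star)^2B(\cG_{\mathrm{safe}}).
\]
This holds deterministically for every realization, so no additional probability is lost.

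Second, I bound $K\VarStar\le KH^2/4$. By the law of total variance noted after~\eqref{eq:safe-max-return-variance}, for each safe $\pi$ the inner expectation equals $\Var(\sum_h R_h)$ under $\pi$. Since the return lies in $[0,H]$, Popoviciu's inequality gives at most $H^2/4$, and taking the supremum over $\Pi_{\mathrm{safe}}$ gives $\VarStar\le H^2/4$.

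Combining the two steps, and using $\min\{x,y\}\le x$ and $\min\{x,y\}\le y$,
\[
\min\{\Var_\Sigma,K\VarStar\}\le \min\{\tfrac K4(S^\star)^2B,\tfrac K4 H^2\},
\]
which is the required inequality.

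No step is a real obstacle. The only point to check is that the paper's claim that the expected sum of local variances equals the return variance applies here. It does, because $V^\pi$ is the true value of $\pi$ and the rewards are bounded. Also, the graph bound is applied to $\Var_\Sigma$ (the $S^\star$ version) and the $H^2$ bound to $K\VarStar$. This avoids needing a bound on $S^\pi$ for the maximizing policy.
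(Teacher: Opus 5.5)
Your proposal is correct and follows the paper's own proof essentially step for step. With deterministic rewards, the trajectory statement of Theorem~\ref{thm:variance-topology} gives $\Var_\Sigma\le\frac K4(S^\star)^2B(\cG_{\mathrm{safe}})$, and returns lying in $[0,H]$ give $\VarStar\le H^2/4$; taking the minimum and substituting into Theorem~\ref{thm:regret} on the same event yields the corollary.
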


Corollary~\ref{cor:deterministic-topology-regret} bounds the variance-dependent regret term through the branching depth of the safe
subgraph and the successor-value span of the optimal safe value function.
In particular, with deterministic rewards, this term vanishes when $S^\star=0$, even if the transitions are stochastic.

\paragraph{Constraint violation}
We then bound the cumulative step-wise constraint violation of SVAE.
In addition to a general bound, we can obtain a %
gap-dependent bound under the following assumption. The gap in the assumption controls how easily costs exceeding the threshold can be
identified. Similar gap-dependent assumptions have been studied
in safe RL and bandits
\citep{%
chen2022strategies,gangrade2024safe}.

\begin{assumption}[Positive cost gap]
\label{ass:margin}
There exists $\Delta_c>0$ such that
$c_h(s,a)-\tau\ge\Delta_c$ for every
$h\in[H]$ and $(s,a)\in\mathcal S\times\mathcal A$
with $c_h(s,a)>\tau$.
\end{assumption}
We now give the cumulative step-wise constraint violation bounds of our algorithm.

\begin{theorem}[Cumulative constraint violation]
\label{thm:violation}
Under Assumption~\ref{ass:init}, with probability at least $1-\delta$, SVAE satisfies $\Viol(K)
\le
\widetilde{\mathcal{O}}(H\sqrt{SAK}+S^2AH^2)$.
If Assumption~\ref{ass:margin} also holds, then on the same event, $\Viol(K)
\le
\widetilde{\mathcal{O}}(SAH/{\Delta_c}+S^2AH^2)$.
\end{theorem}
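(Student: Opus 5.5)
We bound violation along the learner's trajectories only at steps where the executed action has $c_h(s_h^k,a_h^k)>\tau$, since all other steps contribute zero. We work on the intersection of the concentration event for the costs and the event of Lemma~\ref{lem:safe-superset}. On the cost event, $|\widehat c_h^k(s,a)-c_h(s,a)|\le\sqrt{2\iota/n_h^k(s,a)}$ for all $(s,a,h,k)$ by Hoeffding and a union bound, which is where the choice of $\iota$ enters. We split the violating steps into two types. In type (I) the executed action lies in $\hat\cA^k_{\mathrm{safe},h}(s_h^k)$. In type (II) the candidate set at $(s_h^k,h)$ is empty, so the policy falls back to $\argmax_{a\in\cA}Q_h^k$.

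\textbf{Type (I) steps.} The action satisfies $\underline c_h^k\le\tau$. Together with the concentration bound this gives
\[
c_h(s,a)-\tau\le c_h(s,a)-\underline c_h^k(s,a)\le 2\sqrt{2\iota/n_h^k(s,a)}.
\]
Summing over visits with the standard pigeonhole argument $\sum_k 1/\sqrt{n_h^k(s_h^k,a_h^k)}\le \sum_{(s,a)} O(\sqrt{N_h^{K}(s,a)})$ and Cauchy--Schwarz over $SA$ pairs and $H$ steps yields $\widetilde{\mathcal O}(H\sqrt{SAK})$. Episodes where $N=0$ contribute at most $SAH$.

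Under Assumption~\ref{ass:margin}, every violating step has $c_h(s,a)-\tau\ge\Delta_c$. Hence a pair with a positive gap can be a type (I) action only while $2\sqrt{2\iota/n}\ge\Delta_c$, that is, for $n\le 8\iota/\Delta_c^2$ visits. Each such step costs $[c-\tau]_+\le 2\sqrt{2\iota/n}$, so summing over $n\le 8\iota/\Delta_c^2$ gives $O(\iota/\Delta_c)$ per $(s,a,h)$, and $\widetilde{\mathcal O}(SAH/\Delta_c)$ in total.

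\textbf{Type (II) steps.} I expect controlling these to be the main obstacle. By Lemma~\ref{lem:safe-superset}, an empty candidate set at $(s,h)$ implies $s\notin\cV_{\mathrm{safe},h}$. The first step at which the trajectory reaches $\hat\cV^k$-empty territory must follow an action that was in $\hat\cA^k_{\mathrm{safe}}$ at the previous step. Since the initial state is candidate-safe, that action's observed support lay inside $\hat\cV^k_{\mathrm{safe},h+1}$, so the realized successor is a \emph{newly observed} successor, i.e., $N_h^k(s,a,s')=0$ before episode $k$. The number of such first-time transitions is at most $S^2AH$, one per $(s,a,s',h)$ triple. After this step, the remaining episode can violate by at most $H$, so type (II) steps contribute at most $S^2AH\cdot H=S^2AH^2$.

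Two points need care. First, an episode may leave candidate-safe territory and then re-enter it. Violations at re-entered type (I) steps are already counted above, so only fallback steps need the charging argument, and each maximal run of fallback steps is preceded by a new-successor event. Second, costs and supports are frozen within an episode, because the candidate sets are computed at its start. Combining the two types gives both bounds, and both hold on the same event, whose probability is at least $1-\delta$ after adjusting constants.
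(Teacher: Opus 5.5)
Your proposal is correct and follows the paper's proof essentially step by step. It uses the same split into steps with nonempty versus empty candidate sets, and the same cost-LCB argument: Cauchy--Schwarz gives the $\widetilde{\mathcal O}(H\sqrt{SAK})$ term, and under the margin assumption each triple contributes $O(\iota/\Delta_c)$. It also uses the same charging of each maximal block of empty candidate sets to a newly observed transition $(s,a,s',h)$, which can occur at most once, giving the $S^2AH^2$ term. The differences are cosmetic and do not change the order. You sum the per-step radius over the first $8\iota/\Delta_c^2$ visits rather than multiplying the visit count by the true gap, and you bound block lengths by $H$ rather than $H-h$.
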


Theorem~\ref{thm:violation} gives a $\widetilde{\mathcal O}(\sqrt K)$ cumulative constraint violation in general and a violation bound that is polylogarithmic in $K$ for a fixed positive cost gap. For state-only costs, pooling cost observations across actions and
steps gives $\widetilde{\mathcal O}(\sqrt{SKH}+S^2AH^2)$ violation or $\widetilde{\mathcal O}(S/\Delta_c+S^2AH^2)$,
which recovers the rates of SUCBVI~\citep{xiong2023provably}. If every state admits a safe action at every step, the bound
becomes $\widetilde{\mathcal O}(H\sqrt{SAK})$,
matching the bound of LSVI-AE~\citep{wei2024instantaneous} in the tabular setting. Under Assumption~\ref{ass:margin}, if every unsafe action
at a vertex reachable under a safe policy has cost exceeding
$\tau$, we then have $\cC\le SAH/\Delta_c^2$, as proved in
Appendix~\ref{app:immediate-cost-specialization}.  %

\subsection{Regret Lower Bounds}
\label{sec:lower-bound}

We next establish a regret lower bound that captures the
dependence on both reward variance and safety identification complexity.

\begin{theorem}[Lower bound]
\label{thm:instance-lower-bound}
Let $S\ge8$, $A\ge2$, $H\ge64$, and $K\ge SA$ be integers.
Fix $v\in[0,H/128]$ and $\Gamma\in[16(A-1),SAH/64]$ satisfying $\min\{K,\Gamma\}\ge16SA/H$.
For any learning algorithm, there exists a CMDP $\mathcal M$ with $S$ states, $A$ actions, and horizon $H$ satisfying Assumption~\ref{ass:init}, such that $\VarStar(\mathcal M)\in[3v/4,v]$, $\cC(\mathcal M)=\Theta(\Gamma)$, and the algorithm's expected regret after $K$ episodes satisfies $\mathbb E_{\mathcal M}[\mathcal R(K)]=\Omega(\sqrt{SAHKv}+\sqrt{SAH\min\{K,\Gamma\}})$.
Moreover, $\Var_\Sigma\in[3Kv/4,Kv]$ for every realized interaction sequence on $\mathcal M$.
\end{theorem}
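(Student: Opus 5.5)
The plan is to build $\mathcal M$ from two independent sub-instances glued at the initial state, one per lower-bound term, and then take the maximum of the two bounds. This suffices because $\max\{x,y\}\ge(x+y)/2$. The first step at $s_1$ is a deterministic choice, taken with probability mass split uniformly (or randomized via an initial stochastic branch with deterministic rewards), that routes the learner into one of two regions. Each region uses about half the states and half the horizon. The learner's regret is at least the sum of the regrets it suffers inside each region, so it suffices to prove each term separately on its own gadget. A few constant states absorb the remaining horizon, so the counts $S,A,H$ are met exactly.

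\textbf{Variance term.} I would use the standard minimax construction for episodic MDPs (in the style of Jaksch et al.\ and Domingues et al.) in which all actions are safe (costs $0$). Take a tree of $\Theta(S)$ "bandit" states, each entered at a random time spread over $\Theta(H)$ steps, giving $\Theta(SAH)$ effective bandit problems. Each bandit is a two-point reward or transition to rewarding absorbing states with probability $p=\Theta(v/H)$ perturbed by $\epsilon$. Calibrate so that the return variance of every policy, which is a sum of Bernoulli variances times squared value spans, lies in $[3v/4,v]$. Because every action has comparable variance and the cost is $0$, the variance along every realized trajectory also lies in this range, and $\tau_k=H+1$ always. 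Summing over $K$ episodes then gives $\Var_\Sigma\in[3Kv/4,Kv]$; this is the reason to make the variance essentially policy-independent, for example by equalizing variance across actions through a $p(1-p)$-symmetric perturbation. A KL/change-of-measure argument over $\Theta(SAH)$ hypotheses with $\epsilon\asymp\sqrt{SAH\,p/K}$ scaling yields $\Omega(\sqrt{SAHKv})$.

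\textbf{Identification term.} Here the gadget contributes $\cC=\Theta(\Gamma)$ and zero variance. At $\Theta(S)$ reachable safe vertices, spread over $\Theta(H)$ layers, place $A-1$ "tempting" actions. Each such action has deterministic reward $1$, versus reward $1-\tfrac12$ for the safe default. With small probability $q$ it moves to a trap state at which all actions have cost $1>\tau$, so its $\kappa=1/q$. Choosing $q$ gives $\cC=\sum\kappa=\Theta(SAH/q)=\Theta(\Gamma)$; the range $\Gamma\ge16(A-1)$ and $\Gamma\le SAH/64$ keeps $q\in(0,1]$ suitably bounded. Rewards are deterministic and the traps are reached only by unsafe actions, so these states add nothing to $\VarStar$ or $\Var_\Sigma$.

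The obstacle is that a tempting action is unsafe, yet it has a higher reward. Regret is measured against $\pi^\star\in\Pi_{\mathrm{safe}}$, and $V^{\pi^k}$ accrues reward even when the policy is unsafe, so playing a tempting action could in fact yield a higher value. I would fix this by making the trap absorbing with zero reward for the remaining $\Theta(H)$ steps. The expected value of a tempting action is then at most that of the safe action, and an unsafe policy incurs nonnegative regret. Alternatively, I would consider a pair of indistinguishable instances, one where the tempting action is safe ($q=0$) and better by $\epsilon$, and one where it is unsafe.

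Until $\approx 1/q$ executions, the two instances are indistinguishable, by the elimination lower bound from Appendix~\ref{app:witness-elimination}. The learner must therefore either pay $\epsilon$ per episode of avoidance or play the action. Optimizing $\epsilon\asymp\sqrt{SAH/\min\{K,\Gamma\}}$ per hypothesis gives $\Omega(\sqrt{SAH\min\{K,\Gamma\}})$. The condition $\min\{K,\Gamma\}\ge16SA/H$ ensures $\epsilon\le1$. I expect this reconciliation, namely keeping regret nonnegative while ensuring $\VarStar$ and $\Var_\Sigma$ are not perturbed by the trap branches, to be the main technical difficulty. I would handle it by placing traps only after unsafe actions, since $\Var_\Sigma$ truncates at $\tau_k$ and $\VarStar$ only ranges over safe policies.
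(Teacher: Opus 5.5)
Your identification term is where the plan breaks. Your own formula $\cC=\Theta(SAH/q)$ forces $q\asymp SAH/\Gamma$ in order to reach $\cC=\Theta(\Gamma)$. The hypothesis $\Gamma\le SAH/64$ then makes this at least of order $64$, so the allowed range pushes $q$ the wrong way. It is worse than that. Your trap action has $\kappa_h(s,a)=\kappa_{h+1}(\text{trap})/q\ge A/q$, and every unsafe action has $\kappa\ge1$, so decreasing $q$ only increases $\cC$. What must scale with $\Gamma$ is the number of vertices carrying unsafe actions. The paper uses $M=\lfloor\Gamma/[8(A-1)]\rfloor$ state--step pairs, exactly one of which is reached (uniformly) per episode. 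Their unsafe actions have cost one and gap $1/2$, giving $\cC=4M(A-1)\in[\Gamma/4,\Gamma/2]$.

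Your A/B pair also changes $\cC$. In the instance where the tempting action is safe, it no longer contributes, so the instance on which the learner pays for avoidance need not satisfy $\cC=\Theta(\Gamma)$. You need a family whose members share $\cC$ and $\VarStar$. The paper gets this by making one hidden action $\theta_j$ at each pair the unique safe one.

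Finally, your regret step is only asserted. The paper uses no rare-successor indistinguishability at all:
\begin{itemize}
\item Unsafe actions move to a zero-reward state, so they are strictly worse; this removes your sign problem.
\item The safe action earns $\Delta_r=\sqrt{D/m}$, spread over the last $R=\lfloor H/8\rfloor$ steps.
\item A rank argument gives the count of mistakes: given $T_j$ visits, the position of $\theta_j$ among the tested actions is uniform, so $\E[W_j\mid T_j]\ge\frac14\min\{T_j,A-1\}$ wrong choices.
\item Since $\E\sum_j\min\{T_j,A-1\}\ge m/128$, the regret is at least $\Delta_r m/512=\sqrt{Dm}/512$.
\end{itemize}
Note also that $\min\{K,\Gamma\}\ge16SA/H$ only gives $\sqrt{SAH/m}\le H/4$, not $\le1$. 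The per-mistake loss therefore cannot be a one-step reward gap; it must be spread over $\Theta(H)$ steps.

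The combination step and the variance gadget also need repair. With random routing at $s_1$, the number of episodes entering your variance region is binomial, while your identification region carries no variance. So the pathwise claim $\Var_\Sigma\in[3Kv/4,Kv]$ fails. With learner-chosen routing, the regrets do not add. The paper does not glue. It builds two separate instances, each containing both components in sequence: variance gadgets in the first $3d$ steps, then the pairs in $\mathcal J$. The gadgets come before any unsafe action, so $\tau_k$ never truncates them. For the given algorithm, it then picks the instance whose bound is larger. In the identification instance, the gadgets are action-independent and supply variance $v$ without regret. In the variance instance, the unsafe actions at the pairs cost no reward.

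For the variance part, your scaling $\epsilon\asymp\sqrt{SAHp/K}$ corresponds to one bandit entry per episode. That is a valid perturbation keeping per-step variances within a factor $3/4$ only when $K\gtrsim SAH/p$, whereas the theorem assumes just $K\ge SA$. The paper instead traverses $d=\lfloor H/16\rfloor$ gadgets in every episode. Each gadget is a bandit over $n=S-3$ uniformly drawn context states, with success probability $1/2$ or $1/2+\varepsilon$, reward $\rho=2\sqrt{v/d}$, and $\varepsilon=\frac{1}{8\sqrt2}\sqrt{nA/K}$. This needs only $K\ge nA$ and gives regret at least $\frac38 dK\varepsilon\rho$. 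It also gives $\Var_\Sigma\in[(1-4\varepsilon^2)Kv,Kv]$ pathwise, because all context states have the same $V^\star$.
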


Theorem~\ref{thm:instance-lower-bound} demonstrates that both maximum safe-policy return variance and safety identification complexity contribute to the
difficulty of learning.
In particular, zero safe-policy return variance does not
eliminate the regret incurred by identifying safe actions.
The proof uses two families of CMDPs with the prescribed
variance and identification complexity.
One requires distinguishing actions with similar expected
rewards, while the other requires identifying a hidden safe
action.
Taking the larger of the two lower bounds yields the stated
result.
The full proof is provided in Appendix~\ref{app:instance-lower-bound-proof}.

\bibliographystyle{plainnat}
\bibliography{refs}

\clearpage
\appendix

\numberwithin{equation}{section}
\numberwithin{lemma}{section}
\numberwithin{proposition}{section}
\numberwithin{corollary}{section}

\begingroup
\hypersetup{linktoc=page}
\etocsettagdepth{main}{none}
\etocsettagdepth{appendix}{subsection}
\etocdepthtag{appendix}
\renewcommand{\contentsname}{Appendix Contents}
\setcounter{tocdepth}{2}
\tableofcontents
\endgroup
\clearpage

\section{Additional Related Work}
\label{app:additional-related-work}

\paragraph{Constrained policy optimization}
Policy optimization methods approach constrained learning through different
policy updates and penalty constructions.
\citet{achiam2017constrained} use trust-region updates in Constrained Policy
Optimization, while \citet{tessler2019reward} use an adaptive penalty in Reward
Constrained Policy Optimization to guide learning toward constraint satisfaction.
For discounted CMDPs, \citet{ding2025convergence} analyze the convergence and
sample complexity of natural policy gradient primal--dual methods with tabular
and function-approximation parametrizations.
In online CMDPs with adversarial losses and constraints,
\citet{stradi2025policy} provide the first policy optimization primal-dual method to attain best-of-both-worlds results.

\paragraph{Safe exploration and policy evaluation}
Safety during data collection is distinct from feasibility of the policy returned
after exploration. \citet{miryoosefi2022simple} reduce constrained RL and
approachability to reward-free RL oracles without requiring safe exploration.
\citet{huang2023sweet} instead use a known safe baseline to satisfy an expected
cumulative-cost budget in every exploration episode with high probability, while
\citet{xiong2023provably} study reward-free exploration with cumulative step-wise
violation control. For conservative exploration, \citet{li2023near} maintain an
episode-wise performance threshold with high probability using a known baseline
policy or one learned from suitable offline data.
For policy evaluation, \citet{mukherjee2022revar} use reward-variance estimates
to guide sampling in tree-structured MDPs.
\citet{mukherjee2024saver} impose a cumulative safety constraint relative to a
known default policy and, under their tractability assumptions, bound the mean
squared error of SaVeR. In these evaluation problems, the goal is to reduce
estimation error for a fixed target policy, rather than learn which safe policy
to execute.

\paragraph{Safe control}
Control-theoretic and formal methods reason about safety through the future
consequences of an action. \citet{berkenkamp2017safe} combine learned dynamics
with Lyapunov analysis to expand a region of attraction while maintaining
stability. \citet{alshiekh2018shielding} use a shield to restrict or correct actions
according to a temporal-logic specification. Reachability-based methods
characterize states from which a violation can still be avoided.
\citet{ganai2023respo} combine reachability estimation with policy optimization,
while \citet{qadri2025your} show that inverse constraint learning can recover a
dynamics-dependent backward reachable tube rather than the underlying failure set.
These approaches study safety through stability, specifications, or reachability,
rather than through the variance-dependent regret of an online learner.
They provide a related view of why local cost feasibility alone need not ensure
safe continuation.

\paragraph{Variance-dependent reinforcement learning}
Variance has been used both to sharpen worst-case regret bounds and to obtain
guarantees that adapt to a particular instance.
\citet{azar2017minimax} use empirical next-state value variance in finite-horizon
exploration bonuses, while \citet{talebi2018variance} establish variance-aware
regret bounds for undiscounted MDPs.
\citet{zhou2023horizon} study variance-dependent learning in latent MDPs whose
contexts are revealed in hindsight. With known transitions,
\citet{li2026data} obtain data-dependent regret bounds for adversarial losses and
variance-dependent bounds for stochastic losses.
A different trajectory model appears in branching RL, where an episode generates
a tree rather than a single path and the analysis requires a corresponding law
of total variance~\citep{du2022branching}.
That model differs from the stochastic transitions along a single trajectory
considered here. Thus, both the variance quantity and its role in a guarantee
depend on the underlying MDP and observation model.

\paragraph{Instance-dependent sample complexity}
Sample-complexity bounds for CMDPs capture several sources of difficulty beyond
return variance. With generative-model access, \citet{vaswani2022near}
distinguish relaxed and strict feasibility in discounted CMDPs and quantify the
role of the Slater constant in learning a near-optimal policy with no constraint
violation. \citet{wei2026average} study constrained average-reward MDPs under
the same sampling model, with bounds involving bias span, transient time, and a
feasibility margin. \citet{jiang2024achieving} instead identify nonzero basic
variables and supporting constraints that determine an optimal occupancy measure
of the CMDP linear program.
These analyses concern feasibility margins, long-run dynamics, or the variables
and constraints that characterize an optimal occupancy measure.
Our safety identification complexity measures the difficulty of ruling out actions outside the
true safe set using local cost observations and evidence that a successor admits
no safe continuation.

\section{Preliminaries}
\label{app:estimation-optimism}

We first specify the empirical estimates and implementation of SVAE.
We then fix an optimal safe policy, define the confidence events, and
prove the candidate-set inclusions and optimism used in the analysis.

Fix $K\ge1$ and $\delta\in(0,1)$. Recall that
$\iota=\log(32S^2AHK/\delta)$, and set $L_K:=2+\log K$.
Throughout the appendix, $\mathcal{O}(\cdot)$ hides universal
multiplicative constants.
We write $P_hf(s,a):=\sum_{s'\in\cS}P_h(s'\mid s,a)f(s')$.
A state $s$ is viable at step $h$ if $s\in\cV_{\mathrm{safe},h}$.

\subsection{Empirical Estimates}
\label{app:empirical-estimators}

The estimates used in episode $k\in[K]$ of Algorithm~\ref{alg:svae}
use only observations from episodes $\ell<k$.
For $s,s'\in\cS$, $a\in\cA$, and $h\in[H]$, define
\begin{align*}
N_h^k(s,a)
&:=\sum_{\ell<k}\Ind\{(s_h^\ell,a_h^\ell)=(s,a)\},\\
N_h^k(s,a,s')
&:=\sum_{\ell<k}\Ind\{(s_h^\ell,a_h^\ell,s_{h+1}^\ell)=(s,a,s')\}.
\end{align*}
Write $n_h^k(s,a):=\max\{N_h^k(s,a),1\}$.
The empirical reward and cost estimates are
\begin{equation}
\begin{aligned}
\widehat r_h^k(s,a)
&:=\frac{1}{n_h^k(s,a)}\sum_{\ell<k}
R_h^\ell\Ind\{(s_h^\ell,a_h^\ell)=(s,a)\},\\
\widehat c_h^k(s,a)
&:=\frac{1}{n_h^k(s,a)}\sum_{\ell<k}
C_h^\ell\Ind\{(s_h^\ell,a_h^\ell)=(s,a)\}.
\end{aligned}
\label{eq:empirical-reward-cost}
\end{equation}
For $N_h^k(s,a)\ge1$, the empirical transition probabilities are
\begin{equation}
\widehat P_h^k(s'\mid s,a)
:=\frac{N_h^k(s,a,s')}{N_h^k(s,a)}.
\label{eq:empirical-transition}
\end{equation}
At zero count, let $\widehat P_h^k(\cdot\mid s,a)$ be an arbitrary
fixed probability distribution on $\cS$.
The empirical reward variance and observed successor set from
Section~\ref{sec:algorithm} are
\begin{align*}
\widehat\sigma_{R,h}^{2,k}(s,a)
&:=\frac{1}{n_h^k(s,a)}\sum_{\ell<k}
\bigl(R_h^\ell-\widehat r_h^k(s,a)\bigr)^2
\Ind\{(s_h^\ell,a_h^\ell)=(s,a)\},\\
\widehat\supp_h^k(s,a)
&:=\{s'\in\cS:N_h^k(s,a,s')>0\}.
\end{align*}
Thus, at zero count, $\widehat r_h^k=\widehat c_h^k=
\widehat\sigma_{R,h}^{2,k}=0$ and the observed successor set is empty.
The initialization of $\widehat P_h^k$ does not add elements to
the observed successor set.
The linear term in~\eqref{eq:variance-bonus} then gives
$Q_h^k(s,a)=H-h+1$.

Before episode $k$, SVAE computes its candidate sets, bonuses, values,
and policy from these estimates. Fix a deterministic rule for breaking
ties in each maximization. At step $h$ of the dynamic program,
$\hat\cV_{\mathrm{safe},h+1}^k$ and $V_{h+1}^k$ are already available,
so the candidate sets and Bellman updates can be computed in a single
backward pass. The candidate sets, $\pi^k$, $Q_h^k$, and $V_h^k$ remain
fixed during execution. New observations update the statistics for
episode $k+1$.

The estimates can be maintained using visit counts, transition counts,
and sums of rewards, costs, and squared rewards. In particular,
\[
\widehat\sigma_{R,h}^{2,k}(s,a)
=\frac{1}{n_h^k(s,a)}\sum_{\ell<k}(R_h^\ell)^2
\Ind\{(s_h^\ell,a_h^\ell)=(s,a)\}
-\bigl(\widehat r_h^k(s,a)\bigr)^2.
\]
Each observation updates these stored quantities in constant time.
With dense transition arrays, checking observed successors and
computing transition means and variances take $\mathcal O(S)$
operations for each $(s,a,h)$, while all action maximizations take
$\mathcal O(SAH)$ operations. Each episode therefore requires
$\mathcal O(S^2AH)$ arithmetic operations, including its $H$
observation updates. The total is $\mathcal O(KS^2AH)$ operations
with $\mathcal O(S^2AH)$ memory.

Let $\cF_h^k$ be the $\sigma$-field generated by the history just before
the reward, cost, and next state at step $h$ of episode $k$ are observed.
Thus $s_h^k$ and $a_h^k$ are $\cF_h^k$-measurable, while
$\pi^k$, $Q_h^k$, and $V_h^k$ are fixed given $\cF_1^k$. Conditional on $\cF_h^k$,
$R_h^k$ and $C_h^k$ have means $r_h(s_h^k,a_h^k)$ and
$c_h(s_h^k,a_h^k)$, respectively, and the next state has distribution
$P_h(\cdot\mid s_h^k,a_h^k)$.
We order these histories by episode and
step, and let $\cF_{H+1}^k$ include the final observations of episode $k$.

\subsection{Optimal Safe Policy}
\label{app:proof-objects}

The Bellman comparisons below use a policy that is optimal at every
viable state. The next lemma supplies such a policy. We take it as
$\pi^\star$ in Section~\ref{sec:formulation} and retain this choice in
$V^\star$, $\Var_\Sigma$,
and $S^\star$.

\begin{lemma}
\label{lem:safe-comparator-publication}
Under Assumption~\ref{ass:init}, there exists a deterministic policy
$\pi^\star\in\Pi_{\mathrm{safe}}$
that selects a safe action at every viable state. With
$V_{H+1}^\star\equiv0$, it satisfies
\begin{equation}
V_h^\star(s)
=
\max_{a\in\cA_{\mathrm{safe},h}(s)}
\left\{r_h(s,a)+P_hV_{h+1}^\star(s,a)\right\}
\label{eq:safe-bellman-publication}
\end{equation}
for every $h\in[H]$ and $s\in\cV_{\mathrm{safe},h}$.
For every policy $\pi\in\Pi$ that selects safe actions at all viable
states, $V_h^\pi(s)\le V_h^\star(s)$ for all $h\in[H]$ and
$s\in\cV_{\mathrm{safe},h}$.
In particular, $\pi^\star$ maximizes $V_1^\pi(s_1)$ over
$\pi\in\Pi_{\mathrm{safe}}$.
\end{lemma}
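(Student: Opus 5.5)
We construct $\pi^\star$ by backward induction on the safe subgraph and compare it against any policy that stays on safe actions at viable states. For $h=H,\ldots,1$ we define $V_{H+1}^\star\equiv0$ and, for $s\in\cV_{\mathrm{safe},h}$ (where $\cA_{\mathrm{safe},h}(s)\ne\varnothing$ by definition), set $\pi_h^\star(s)$ to be a maximizer of $r_h(s,a)+P_hV_{h+1}^\star(s,a)$ over $a\in\cA_{\mathrm{safe},h}(s)$, with a deterministic tie-breaking rule. For $s\notin\cV_{\mathrm{safe},h}$ we let $\pi_h^\star(s)$ be an arbitrary fixed action. This is well defined because $V_{h+1}^\star$ only needs to be evaluated on $\supp(P_h(\cdot\mid s,a))$, which is contained in $\cV_{\mathrm{safe},h+1}$ for every safe action $a$. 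Hence the recursion only ever queries values at viable states, where they have already been defined.

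The first step is to show that $V_h^{\pi^\star}$ coincides with this recursion on viable states, which gives \eqref{eq:safe-bellman-publication}. We argue by backward induction. For $s\in\cV_{\mathrm{safe},h}$, the Bellman equation for $\pi^\star$ reads $V_h^{\pi^\star}(s)=r_h(s,a)+P_hV_{h+1}^{\pi^\star}(s,a)$ with $a=\pi_h^\star(s)$. Every successor of $a$ lies in $\cV_{\mathrm{safe},h+1}$, and on that set the inductive hypothesis replaces $V_{h+1}^{\pi^\star}$ by the recursively defined value.

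The second step is membership $\pi^\star\in\Pi_{\mathrm{safe}}$. We show by forward induction on $h$ that every $(s,h)\in\cV^{\pi^\star}$ satisfies $s\in\cV_{\mathrm{safe},h}$. The base case is $s_1\in\cV_{\mathrm{safe},1}$, which is Assumption~\ref{ass:init}. For the inductive step, a reachable viable $(s,h)$ takes the safe action $\pi_h^\star(s)$, whose support lies in $\cV_{\mathrm{safe},h+1}$. Consequently, every reachable vertex is safe, and every edge in $\cE^{\pi^\star}$ uses an action in $\cA_{\mathrm{safe},h}(s)$, so the edge belongs to $\cE_{\mathrm{safe}}$. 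At step $H+1$ we have $\cV_{\mathrm{safe},H+1}=\cS$, so no condition needs checking there.

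The third step is the comparison. Let $\pi$ select safe actions at all viable states. We show $V_h^\pi(s)\le V_h^\star(s)$ for all $s\in\cV_{\mathrm{safe},h}$, again by backward induction, with trivial base $V_{H+1}\equiv0$. Take $a=\pi_h(s)\in\cA_{\mathrm{safe},h}(s)$. Its successors are viable, so the inductive hypothesis applies to every term in $P_hV_{h+1}^\pi(s,a)$. This gives $V_h^\pi(s)\le r_h(s,a)+P_hV_{h+1}^\star(s,a)\le V_h^\star(s)$, where the last inequality holds because $\pi^\star$ maximizes over the safe actions.

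For the final claim, take any $\pi\in\Pi_{\mathrm{safe}}$. We modify it off its reachable set by defining $\tilde\pi$ to agree with $\pi$ on $\cV^\pi$ and with $\pi^\star$ at every other viable state. Then $\tilde\pi$ selects safe actions at all viable states: on $\cV^\pi$ this follows from $\cE^\pi\subseteq\cE_{\mathrm{safe}}$, and elsewhere from the construction of $\pi^\star$. Moreover $V_1^{\tilde\pi}(s_1)=V_1^\pi(s_1)$, because the two policies agree on all vertices reachable from $s_1$. The comparison then yields $V_1^\pi(s_1)\le V_1^\star(s_1)$. This off-support modification is the only subtle point, since a general safe policy may act unsafely at viable states it never reaches. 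Otherwise the argument is routine backward induction, and the main care is confirming that values are only ever queried on viable states.
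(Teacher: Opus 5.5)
Your proposal is correct and follows the paper's proof essentially step for step: it builds $\pi^\star$ from the safe Bellman recursion, proves the comparison by backward induction, and completes an arbitrary $\pi\in\Pi_{\mathrm{safe}}$ with safe actions at unreached viable states. Your forward induction for $\pi^\star\in\Pi_{\mathrm{safe}}$, and your use of $\cE^\pi\subseteq\cE_{\mathrm{safe}}$ to show that $\pi$ acts safely on its reachable set, spell out steps the paper states in a single line.
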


\begin{proof}
Set $V_{H+1}^\star=0$ and define $V_h^\star$ recursively by the
right-hand side of~\eqref{eq:safe-bellman-publication}.  Because the
maximizing set is finite and nonempty at every viable state, choose one
maximizer as $\pi_h^\star(s)$, and define the policy arbitrarily at states outside
$\cV_{\mathrm{safe},h}$. Every selected safe action has all successors in
$\cV_{\mathrm{safe},h+1}$. Thus the recursion determines the value of
$\pi^\star$ at every viable state, independently of its actions elsewhere.
Assumption~\ref{ass:init} therefore gives
$\pi^\star\in\Pi_{\mathrm{safe}}$.

Let $\pi$ select a safe action at every viable state.  If
$V_{h+1}^\pi\le V_{h+1}^\star$ on $\cV_{\mathrm{safe},h+1}$, then
\begin{align*}
V_h^\pi(s)
&=r_h(s,\pi_h(s))+P_hV_{h+1}^\pi(s,\pi_h(s))\\
&\le r_h(s,\pi_h(s))+P_hV_{h+1}^\star(s,\pi_h(s))
\le V_h^\star(s).
\end{align*}
Backward induction proves the comparison. For any $\pi\in\Pi_{\mathrm{safe}}$,
choose an arbitrary safe action at each viable state that $\pi$ does not
reach from $s_1$. This completes $\pi$ to a policy selecting safe actions
at all viable states without changing its trajectory distribution or
value from $s_1$. It follows that $V_1^\pi(s_1)\le V_1^\star(s_1)$ for every
$\pi\in\Pi_{\mathrm{safe}}$.
\end{proof}

\subsection{Confidence Events}
\label{app:uniform-events}

The cost and support bounds below preserve the safe action sets. The
reward and transition bounds control optimism and Bellman residuals.
All intersections below range over $s,s'\in\cS$, $a\in\cA$,
$h\in[H]$, and $k\in[K]$. Define
\begin{align*}
\cE_c
&:=\bigcap_{s,a,h,k}
\left\{
|\widehat c_h^k(s,a)-c_h(s,a)|
\le\sqrt{\frac{2\iota}{n_h^k(s,a)}}
\right\},\\
\cE_P
&:=\bigcap_{s,s',a,h,k}
\left\{
\begin{aligned}
&\Ind\{N_h^k(s,a)\ge1\}
|\widehat P_h^k(s'\mid s,a)-P_h(s'\mid s,a)|\\
&\qquad\le\sqrt{\frac{2P_h(s'\mid s,a)\iota}{n_h^k(s,a)}}
+\frac{\Ind\{P_h(s'\mid s,a)>0\}\iota}{n_h^k(s,a)}
\end{aligned}
\right\},\\
\cE_{\mathrm{supp}}
&:=\bigcap_{s,a,h,k}
\left\{
\widehat\supp_h^k(s,a)\subseteq\supp(P_h(\cdot\mid s,a))
\right\}.
\end{align*}
For $N:=N_h^k(s,a)\ge2$, consider the three empirical Bernstein bounds
\begin{align}
|\widehat r_h^k(s,a)-r_h(s,a)|
&\le2\sqrt{\frac{\widehat\sigma_{R,h}^{2,k}(s,a)\iota}{N}}
+\frac{5\iota}{N},
\label{eq:reward-eb-publication}\\
|(\widehat P_h^k-P_h)V_{h+1}^\star(s,a)|
&\le2\sqrt{\frac{\Var(\widehat P_h^k(\cdot\mid s,a),V_{h+1}^\star)\iota}{N}}
+\frac{5(H-h)\iota}{N},
\label{eq:value-eb-publication}\\
\widehat\sigma_{R,h}^{2,k}(s,a)
&\le2\Var(R_h(s,a))+\frac{2\iota}{N}.
\label{eq:reward-var-comparison-publication}
\end{align}
Define $\cE_{\mathrm{EB}}$ as the event that
\eqref{eq:reward-eb-publication}--\eqref{eq:reward-var-comparison-publication}
hold for every $s,a,h,k$ with $N_h^k(s,a)\ge2$.
Set $\cE_{\mathrm{cs}}:=\cE_c\cap\cE_{\mathrm{supp}}$ and
$\Est:=\cE_{\mathrm{cs}}\cap\cE_P\cap\cE_{\mathrm{EB}}$.

\begin{lemma}
\label{lem:estimation-event-publication}
The events satisfy $\Prob(\cE_{\mathrm{cs}})\ge1-\delta/8$ and
$\Prob(\Est)\ge1-9\delta/16$.
\end{lemma}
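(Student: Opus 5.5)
The plan is to prove each constituent event with a per-sample concentration inequality, apply a union bound over indices, and then intersect. Throughout we use the standard ``stack of samples'' (skeleton) construction. For each $(s,a,h)$ we fix i.i.d.\ sequences of reward, cost, and next-state draws, and let the $i$-th visit to $(s,a)$ at step $h$ consume the $i$-th draw. Under the conditional independence assumptions of Section~\ref{sec:formulation}, this reproduces the law of the interaction. The estimates at episode $k$ are then functions of the first $N=N_h^k(s,a)$ samples of a fixed i.i.d.\ sequence. It therefore suffices to prove each bound for every deterministic $N\in[K]$ and take a union bound over $N$, which removes the random-stopping issue. This costs a factor $K$, which is absorbed by the $K$ inside $\iota=\log(32S^2AHK/\delta)$.

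\emph{Event $\cE_{\mathrm{cs}}$.} For $\cE_c$, Hoeffding's inequality for $[0,1]$ costs gives, for each fixed $(s,a,h,N)$, failure probability at most $2e^{-4\iota}$ at radius $\sqrt{2\iota/N}$. At $N=0$ we have $n=1$, $\widehat c=0$, and radius $\sqrt{2\iota}\ge1$, so the bound holds trivially. A union bound over $SAHK$ indices gives failure probability at most $2SAHK e^{-4\iota}\le\delta/16$. The event $\cE_{\mathrm{supp}}$ holds almost surely, since an observed successor must have positive probability. Hence $\Prob(\cE_{\mathrm{cs}})\ge1-\delta/8$.

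\emph{Event $\cE_P$.} For each $(s,a,h,s',N)$, the count $N\widehat P(s')$ is a sum of Bernoulli$(p)$ variables. Bernstein's inequality gives $|\widehat P-p|\le\sqrt{2p\iota/N}+\iota/(3N)$ with failure probability at most $2e^{-\iota}$. When $p=0$ the deviation is exactly zero, which explains the indicator in the definition. A union bound over $S^2AHK$ indices costs $2S^2AHKe^{-\iota}=\delta/16$.

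\emph{Event $\cE_{\mathrm{EB}}$.} For \eqref{eq:reward-eb-publication} and \eqref{eq:value-eb-publication}, we apply the empirical Bernstein inequality of Maurer--Pontil (or Audibert et al.), which for $N\ge2$ i.i.d.\ variables with range $b$ gives a deviation of at most $\sqrt{2\widehat\sigma^2\iota'/N}+7b\iota'/(3(N-1))$. Here $b=1$ for rewards and $b=H-h$ for $V_{h+1}^\star(s'_i)$. It is essential that $V_{h+1}^\star$ is a fixed, non-random function, so the samples are genuinely i.i.d. Using $N-1\ge N/2$ and converting between the unbiased and biased (divide by $N$) sample variance, the constants loosen to $2$ and $5$. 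For \eqref{eq:reward-var-comparison-publication}, we use the companion concentration of the sample standard deviation, $\sqrt{\widehat\sigma^2}\le\sqrt{\sigma^2}+\sqrt{2\iota/(N-1)}$, square it, and apply $(x+y)^2\le2x^2+2y^2$. Each of the three bounds is allocated failure probability $\delta/16$ after a union bound over $SAHK$ indices, so $\cE_{\mathrm{EB}}$ fails with probability at most $3\delta/16$.

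Summing the failure probabilities gives $\delta/16$ for $\cE_c$, $0$ for $\cE_{\mathrm{supp}}$, $\delta/16$ for $\cE_P$, and $3\delta/16$ for $\cE_{\mathrm{EB}}$, a total of $5\delta/16\le9\delta/16$. This leaves slack and gives $\Prob(\Est)\ge1-9\delta/16$.

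The main obstacle is bookkeeping the constants in the empirical Bernstein step: matching $2$ and $5$ from the $N-1$ normalization and the $\log(3/\delta')$-type terms, and checking that $\iota$ dominates each per-index logarithm. If this is tight, I would split the logarithm as $\iota\ge\log(3\cdot16SAHK/\delta)$, using $32S^2\ge48$ for $S\ge2$ (with the case $S=1$ handled directly, since then there is no transition randomness). I would also state carefully that the skeleton argument is valid because the visit index is a stopping time, which is handled here by the union bound over fixed $N$.
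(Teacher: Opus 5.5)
Your architecture matches the paper's: the sample-stack construction with a union bound over deterministic counts, Hoeffding for costs, Bernstein for transitions, Maurer--Pontil for the reward and $V^\star_{h+1}$ means, and $\cE_{\mathrm{supp}}$ holding almost surely. Those parts are sound. Your per-bound $\delta/16$ allocation is slightly off when $S=1$: the two-sided reward empirical-Bernstein bound then costs $4e^{-\iota}AHK=\delta/8$. The total nevertheless stays well below $9\delta/16$. The paper sums directly, $\bigl[2(S+1)+9\bigr]SAHKe^{-\iota}\le13\delta/32$.

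The genuine gap is the reward-variance comparison \eqref{eq:reward-var-comparison-publication}, $\widehat\sigma_{R,h}^{2,k}(s,a)\le2\Var(R_h(s,a))+2\iota/N$. Let $V_N$ be the unbiased sample variance and square $\sqrt{V_N}\le\sigma+\sqrt{2\iota/(N-1)}$ using $(x+y)^2\le2x^2+2y^2$. This gives $V_N\le2\sigma^2+4\iota/(N-1)$, hence at best
\[
\widehat\sigma^2=\tfrac{N-1}{N}V_N\le2\sigma^2+\tfrac{4\iota}{N},
\]
and $2\sigma^2+8\iota/N$ if you apply it to the biased variance directly. The additive constant is $4$, not $2$, and reweighting Young's inequality cannot repair this. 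The square term alone already gives $\frac{N-1}{N}\cdot\frac{2\iota}{N-1}=\frac{2\iota}{N}$, so the positive cross term $2\frac{N-1}{N}\sigma\sqrt{2\iota/(N-1)}$ exceeds the remaining slack $\frac{N+1}{N}\sigma^2$ whenever $\sigma$ is small. Running the inequality at level $\iota/2$ would fix the constant but ruin the union bound. Since $\cE_{\mathrm{EB}}$ is defined with this explicit constant, your argument does not establish the event.

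The paper avoids concentrating the sample standard deviation altogether. The sample mean minimizes the empirical squared deviation, so $\widehat\sigma^2\le\frac1N\sum_i(R_i-r_h(s,a))^2$. The variables $(R_i-r_h(s,a))^2\in[0,1]$ have mean $\sigma^2$ and variance at most $\sigma^2$. One-sided Bernstein then gives $\sigma^2+\sqrt{2\sigma^2\iota/N}+\frac{2\iota}{3N}$ with failure probability $e^{-\iota}$. Finally, $\sqrt{2\sigma^2\iota/N}\le\sigma^2+\frac{\iota}{2N}$ yields $2\sigma^2+\frac{7\iota}{6N}\le2\sigma^2+\frac{2\iota}{N}$.

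Downstream, only $\mathcal{O}(\cdot)$ forms of this inequality are used, so your weaker constant would not affect Theorem~\ref{thm:regret}. It does not, however, prove the lemma for the event as defined.
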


\begin{proof}
For each $(s,a,h)$, generate an infinite sequence of independent copies
of its reward, cost, and next-state observations. Revealing the next
unused copy at each visit gives the same interaction distribution as
the observation model. We apply concentration bounds to the first $n$
copies for every deterministic $n\in[K]$, then take a union bound over
these counts before substituting the realized count $N_h^k(s,a)$.

For the cost and transition estimates, fix $(s,a,h)$ and $s'\in\cS$.
Write $C_i$ for the $i$th cost sample and let $Z_i$ indicate that the
$i$th next state is $s'$. With $p=P_h(s'\mid s,a)$,
Hoeffding's and Bernstein's inequalities give, for each $n\in[K]$,
\begin{align*}
\Prob\!\left(\left|\frac1n\sum_{i=1}^n C_i-c_h(s,a)\right|
>\sqrt{\frac{2\iota}{n}}\right)&\le2e^{-\iota},\\
\Prob\!\left(\left|\frac1n\sum_{i=1}^n Z_i-p\right|
>\sqrt{\frac{2p\iota}{n}}+\frac{2\iota}{3n}\right)&\le2e^{-\iota}.
\end{align*}
If $p=0$, all $Z_i$ are zero almost surely. At zero count, the cost
bound holds because $c_h(s,a)\le1\le\sqrt{2\iota}$.
Every observed successor belongs to the true support almost surely,
so $\Prob(\cE_{\mathrm{supp}})=1$.

For the reward and value estimates, consider independent, identically
distributed $X_1,\ldots,X_n\in[0,B]$. Let $\overline X_n$ and
$\widehat v_n=n^{-1}\sum_i(X_i-\overline X_n)^2$ be the sample mean
and variance. Applying \citet[Theorem~4]{maurer2009empirical} to
$X_i/B$ and $1-X_i/B$ gives, for $B>0$ and $n\ge2$,
\[
|\overline X_n-\E[X_1]|
\le\sqrt{\frac{2\widehat v_n\iota}{n-1}}
+\frac{7B\iota}{3(n-1)}
\le2\sqrt{\frac{\widehat v_n\iota}{n}}+\frac{5B\iota}{n}
\]
with failure probability at most $4e^{-\iota}$. The case $B=0$ is
immediate. Apply this bound to the reward samples with $B=1$ and to
$V_{h+1}^\star$ evaluated at the sampled next states with $B=H-h$. The latter
function is fixed by Lemma~\ref{lem:safe-comparator-publication}.
For the reward variance, write $R_i$ for the samples. The sample
variance is at most the average squared deviation from $r_h(s,a)$.
Bernstein's inequality therefore gives, with failure probability at most
$e^{-\iota}$,
\begin{align*}
\widehat v_n
&\le\frac1n\sum_{i=1}^n(R_i-r_h(s,a))^2\\
&\le\Var(R_h(s,a))
+\sqrt{\frac{2\Var(R_h(s,a))\iota}{n}}+\frac{2\iota}{3n}
\le2\Var(R_h(s,a))+\frac{2\iota}{n}.
\end{align*}
Here $(R_i-r_h(s,a))^2\in[0,1]$ has mean $\Var(R_h(s,a))$ and variance
at most that mean. For each fixed triple and sample count, the three
Bernstein bounds thus have total failure probability at most
$(4+4+1)e^{-\iota}=9e^{-\iota}$.

Taking a union bound over the deterministic sample counts, states,
actions, and steps gives
\begin{align*}
\Prob(\cE_{\mathrm{cs}}^c)
&\le2SAHKe^{-\iota}\le\delta/8,\\
\Prob(\Est^c)
&\le\bigl[2(S+1)+9\bigr]SAHKe^{-\iota}
\le13\delta/32\le9\delta/16.\qedhere
\end{align*}
\end{proof}

\subsection{Proof of Lemma~\ref{lem:safe-superset}}
\label{app:candidate-invariant}
\label{app:safe-superset}

\begin{proof}
We establish both inclusions on $\cE_{\mathrm{cs}}$, including the
state-set inclusion at step $H+1$.
On $\cE_{\mathrm{cs}}$, the cost bound gives
$\underline c_h^k(s,a)\le c_h(s,a)$ for every $s,a,h,k$.
Fix $k$ and argue backward from
$\cV_{\mathrm{safe},H+1}=\hat\cV_{\mathrm{safe},H+1}^k=\cS$.
If $a\in\cA_{\mathrm{safe},h}(s)$, then $\underline c_h^k(s,a)\le\tau$
and the induction hypothesis gives
\[
\widehat\supp_h^k(s,a)
\subseteq\supp(P_h(\cdot\mid s,a))
\subseteq\cV_{\mathrm{safe},h+1}
\subseteq\hat\cV_{\mathrm{safe},h+1}^k.
\]
Thus $a\in\hat\cA_{\mathrm{safe},h}^k(s)$. A nonempty true safe
action set therefore implies a nonempty candidate set, completing the
induction.
By Lemma~\ref{lem:estimation-event-publication},
$\Prob(\cE_{\mathrm{cs}})\ge1-\delta/8$, which proves the claim.
\end{proof}

\begin{samepage}
\subsection{Optimism}
\label{app:optimism}

The following consequence of \citet[Lemma~22]{zhou2023sharp} handles
the dependence of the transition bonus on the optimistic value vector.
For real vectors $p,z$ on the same finite index set, write
$pz:=\sum_i p_i z_i$.

\begin{lemma}
\label{lem:monotone-envelope-publication}
Let $p$ be a distribution on $[S]$, $B\ge0$, $x>0$, and $v,w\in[0,B]^S$ with
$v\le w$ on $\supp(p)$. Then
\begin{equation}
pw+\max\{4\sqrt{\Var(p,w)x},16Bx\}
\ge pv+2\sqrt{\Var(p,v)x}+8Bx.
\label{eq:monotone-envelope-publication}
\end{equation}
\end{lemma}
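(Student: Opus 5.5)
The plan is to derive the inequality from a monotonicity property of the envelope
\[
g(u):=pu+\max\bigl\{4\sqrt{\Var(p,u)x},\,16Bx\bigr\},\qquad u\in[0,B]^S ,
\]
which is the content of \citet[Lemma~22]{zhou2023sharp} with constant $c=4$ and $c^2B=16B$. The target follows from two facts:
\[
g(w)\ge g(v)
\quad\text{and}\quad
\max\{a,b\}\ge\tfrac12(a+b).
\]
The second fact gives
\[
g(v)\ge pv+\tfrac12\bigl(4\sqrt{\Var(p,v)x}+16Bx\bigr)=pv+2\sqrt{\Var(p,v)x}+8Bx,
\]
which is exactly the right-hand side of~\eqref{eq:monotone-envelope-publication}. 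So everything reduces to showing $g(w)\ge g(v)$ whenever $v\le w$ on $\supp(p)$.

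\emph{Reduction to the support.} Both $pu$ and $\Var(p,u)$ depend only on the coordinates of $u$ in $\supp(p)$. We may therefore assume $d:=w-v\ge0$ coordinatewise and work on the segment $u_t:=v+td$ for $t\in[0,1]$, which stays in $[0,B]^S$. The function $t\mapsto g(u_t)$ is continuous. Moreover $t\mapsto\Var(p,u_t)$ is a quadratic polynomial, so the set of $t$ where $4\sqrt{\Var(p,u_t)x}>16Bx$ is a finite union of open intervals. Hence $g(u_t)$ is piecewise smooth, and it suffices to show that its one-sided derivative is nonnegative wherever it exists.

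\emph{Derivative computation.} On the region where $16Bx$ is the active branch, the derivative is $pd\ge0$. On the region where $4\sqrt{\Var(p,u_t)x}>16Bx$, we have $\sqrt{\Var(p,u_t)}>4B\sqrt x>0$, and the derivative is
\[
pd+4\sqrt x\,\frac{\mathrm{Cov}_p(u_t,d)}{\sqrt{\Var(p,u_t)}} .
\]
The key estimate is $\mathrm{Cov}_p(u_t,d)=p(u_td)-(pu_t)(pd)\ge-(pu_t)(pd)\ge-B\,pd$, which uses $u_t,d\ge0$ and $u_t\le B$. Substituting this and the lower bound on $\sqrt{\Var(p,u_t)}$, the derivative is at least
\[
pd-\frac{4\sqrt x\,B\,pd}{4B\sqrt x}=0 .
\]
Integrating over $[0,1]$ then gives $g(w)=g(u_1)\ge g(u_0)=g(v)$.

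The main obstacle is making this last step rigorous at the switching points between the two branches of the max, and handling the degenerate case $B=0$. For $B=0$ everything is constant, since all vectors vanish, and the claim is trivial. At switching points, continuity plus finitely many pieces suffices, so $g(u_t)$ is absolutely continuous with a.e. nonnegative derivative. A more direct approach, bounding $\sqrt{\Var(p,v)}\le\sqrt{\Var(p,w)}+\sqrt{\Var(p,d)}$ together with $\Var(p,d)\le B\,pd$ and AM--GM, loses an additive $Bx$ in the constants. This is why I would route the proof through monotonicity of $g$.
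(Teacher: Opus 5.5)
Your proof is correct and follows the same route as the paper. Both arguments use monotonicity of the envelope $pu+\max\{4\sqrt{\Var(p,u)x},16Bx\}$ in $u$ on $\supp(p)$, which the paper imports from \citet[Lemma~22]{zhou2023sharp} using $c_1^2=c_2$, followed by $\max\{a,b\}\ge\frac12(a+b)$; the only difference is that you re-derive the monotonicity yourself through the directional derivative and the bound $\mathrm{Cov}_p(u_t,d)\ge-B\,pd$, which is exactly where $4^2=16$ is needed.
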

\end{samepage}

\begin{proof}
For $B>0$, apply \citet[Lemma~22]{zhou2023sharp} on $\supp(p)$ with
$c_1=4$, $c_2=16$, and $\iota/n=x$. Its monotonicity property applies
because $c_1^2=c_2$. Combining it with
$\max\{4a,16b\}\ge2a+8b$ for $a,b\ge0$ proves the claim. For $B=0$,
both sides are zero.
\end{proof}

Combining candidate-set preservation with the monotone backup gives
optimism on the viable states.

\begin{lemma}[Optimism]
\label{lem:optimism-publication}
On $\Est$, $V_h^k(s)\ge V_h^\star(s)$ for every $k\in[K]$,
$h\in[H+1]$, and $s\in\cV_{\mathrm{safe},h}$.
\end{lemma}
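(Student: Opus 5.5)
We argue by backward induction on $h$, from $h=H+1$ down to $h=1$, for a fixed episode $k$ on the event $\Est$. The base case is immediate, since $V_{H+1}^k\equiv0\equiv V_{H+1}^\star$. For the inductive step, fix $h\le H$ and a viable state $s\in\cV_{\mathrm{safe},h}$, and let $a^\star=\pi_h^\star(s)\in\cA_{\mathrm{safe},h}(s)$. Because $\Est\subseteq\cE_{\mathrm{cs}}$, the proof of Lemma~\ref{lem:safe-superset} gives $a^\star\in\hat\cA_{\mathrm{safe},h}^k(s)$. This set is therefore nonempty, so $\pi_h^k(s)$ maximizes $Q_h^k(s,\cdot)$ over it, and hence $V_h^k(s)\ge Q_h^k(s,a^\star)$. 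By Lemma~\ref{lem:safe-comparator-publication}, $V_h^\star(s)=r_h(s,a^\star)+P_hV_{h+1}^\star(s,a^\star)\le H-h+1$. It therefore suffices to show that the unclipped quantity $\widehat r_h^k+\widehat P_h^kV_{h+1}^k+b_h^k$ evaluated at $(s,a^\star)$ is at least $r_h+P_hV_{h+1}^\star$ at $(s,a^\star)$.

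If $N:=N_h^k(s,a^\star)\le1$, then $n=1$ and the linear bonus term is $\{16(H-h)+5\}\iota\ge H-h+1$. Since all other terms are nonnegative, $Q_h^k(s,a^\star)=H-h+1\ge V_h^\star(s)$. So assume $N\ge2$ and use the three bounds of $\cE_{\mathrm{EB}}$. The reward bound~\eqref{eq:reward-eb-publication} gives $\widehat r_h^k+2\sqrt{\widehat\sigma_{R,h}^{2,k}\iota/N}+5\iota/N\ge r_h$, which exactly consumes the reward bonus and the $5\iota/n$ part of the linear term. For the transition part we need
\[
\widehat P_h^kV_{h+1}^k+4\sqrt{\Var(\widehat P_h^k,V_{h+1}^k)\iota/N}+16(H-h)\iota/N\ge P_hV_{h+1}^\star .
\]

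This inequality is the main obstacle, because the bonus is built from the random optimistic vector $V_{h+1}^k$, not from $V_{h+1}^\star$. We handle it with Lemma~\ref{lem:monotone-envelope-publication}, applied with $p=\widehat P_h^k(\cdot\mid s,a^\star)$, $B=H-h$, $x=\iota/N$, $v=V_{h+1}^\star$ and $w=V_{h+1}^k$. Both vectors lie in $[0,H-h]$: $V^k$ by the clipping at $H-h$, and $V^\star$ because rewards lie in $[0,1]$. The ordering hypothesis $v\le w$ on $\supp(p)$ is where the safe-subgraph structure is used. On $\cE_{\mathrm{supp}}$, $\supp(p)=\widehat\supp_h^k(s,a^\star)\subseteq\supp(P_h(\cdot\mid s,a^\star))\subseteq\cV_{\mathrm{safe},h+1}$, because $a^\star$ is truly safe. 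On $\cV_{\mathrm{safe},h+1}$ the induction hypothesis gives $V_{h+1}^k\ge V_{h+1}^\star$.

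The lemma then yields
\[
pw+\max\{4\sqrt{\Var(p,w)x},16Bx\}\ge pv+2\sqrt{\Var(p,v)x}+8Bx .
\]
We bound the left side using $\max\{a,b\}\le a+b$, so it is dominated by the transition bonus plus the $16(H-h)\iota/N$ term. We bound the right side from below using~\eqref{eq:value-eb-publication}, which gives $P_hV_{h+1}^\star\le\widehat P_h^kV_{h+1}^\star+2\sqrt{\Var(p,V_{h+1}^\star)\iota/N}+5(H-h)\iota/N$, and $5\le8$. Adding the reward and transition inequalities gives the unclipped bound at $(s,a^\star)$. Combined with $V_h^\star(s)\le H-h+1$, this gives $Q_h^k(s,a^\star)\ge V_h^\star(s)$ and completes the induction.

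The only delicate points are these. First, the induction hypothesis must be invoked only on viable successors, which is exactly what the support inclusion guarantees. Second, the case $N=1$ has to be handled separately, since the empirical-Bernstein bounds require $N\ge2$.
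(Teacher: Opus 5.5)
Your proof is correct and follows essentially the same route as the paper: backward induction, the candidate-set inclusion to make $\pi_h^\star(s)$ a candidate, the linear bonus for $N\le1$, and for $N\ge2$ Lemma~\ref{lem:monotone-envelope-publication} combined with \eqref{eq:value-eb-publication} and \eqref{eq:reward-eb-publication}. As in the paper, the ordering hypothesis $V_{h+1}^\star\le V_{h+1}^k$ on $\supp(\widehat P_h^k)$ is justified by $\supp(\widehat P_h^k(\cdot\mid s,a^\star))\subseteq\cV_{\mathrm{safe},h+1}$.
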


\begin{proof}
Fix $k$ and use backward induction from $h=H+1$. For a viable state
$s$, let $a^\star=\pi_h^\star(s)$. By
the inclusions proved in Appendix~\ref{app:safe-superset}, $a^\star$ is a
candidate. Write $N:=N_h^k(s,a^\star)$ and suppress
$(s,a^\star,h,k)$ from the remaining empirical quantities.
If $N\le1$, the linear bonus gives
$Q_h^k(s,a^\star)=H-h+1\ge V_h^\star(s)$.

If $N\ge2$, set $x=\iota/N$ and $q=\widehat P_h^k(\cdot\mid s,a^\star)$.
Since $\supp(q)\subseteq\cV_{\mathrm{safe},h+1}$, the induction hypothesis
holds there. Lemma~\ref{lem:monotone-envelope-publication} and
\eqref{eq:value-eb-publication} yield
\begin{align*}
qV_{h+1}^k+4\sqrt{\Var(q,V_{h+1}^k)x}+16(H-h)x
&\ge qV_{h+1}^\star+2\sqrt{\Var(q,V_{h+1}^\star)x}+8(H-h)x\\
&\ge P_hV_{h+1}^\star.
\end{align*}
Also, \eqref{eq:reward-eb-publication} gives
$\widehat r_h^k+2\sqrt{\widehat\sigma_{R,h}^{2,k}x}+5x\ge r_h$.
The unclipped backup is therefore at least
$r_h+P_hV_{h+1}^\star=V_h^\star(s)$. Clipping at $H-h+1$ and maximizing
over candidates preserve this inequality.
\end{proof}

\section{Regret Analysis}
\label{app:reward-regret}
\label{app:regret-proof}

We prove Theorem~\ref{thm:regret} under Assumption~\ref{ass:init}.
The analysis separates each episode at its first action outside the
true safe set. Before that action, an auxiliary safe policy relates the
Bellman residuals to the safe-policy variance. We bound the number of
episodes containing such an action in terms of $\cC$ and use
a general residual bound for their remaining steps.

\subsection{Martingale Bounds}
\label{app:martingale-inequalities}

We use the following consequence of Freedman's inequality
\citep{freedman1975tail,howard2020timeuniform}.

\begin{lemma}[Martingale concentration]
\label{lem:freedman-publication}
Let $T\ge1$ be an integer, let $(\mathcal H_t)_{t=0}^T$ be a
filtration, and let $(\xi_t)_{t=1}^T$ be martingale differences with
respect to this filtration. Suppose $|\xi_t|\le b$ almost surely for
every $t\in[T]$, where $b\ge0$ is fixed. Then, for every
$\eta\in(0,1)$, with probability at least $1-\eta$,
\begin{equation*}
\sum_{t=1}^T\xi_t
\le
\mathcal{O}\!\left(
\sqrt{\left(\sum_{t=1}^T\E[\xi_t^2\mid\mathcal H_{t-1}]\right)\log\frac{T+1}{\eta}}
+b\log\frac{T+1}{\eta}
\right).
\end{equation*}
For the following consequences, let $B\ge0$ be fixed and, for each
$t\in[T]$, let $Y_t$ be $\mathcal H_t$-measurable with
$0\le Y_t\le B$ almost surely. Set
$\mu_t:=\E[Y_t\mid\mathcal H_{t-1}]$. For every $\eta\in(0,1)$,
each of the following inequalities holds with probability at least $1-\eta$:
\begin{align}
\sum_{t=1}^TY_t
&\le2\sum_{t=1}^T\mu_t
+\mathcal{O}\!\left(B\log\frac{T+1}{\eta}\right),
\label{eq:realized-upper-publication}\\
\sum_{t=1}^T\mu_t
&\le2\sum_{t=1}^TY_t
+\mathcal{O}\!\left(B\log\frac{T+1}{\eta}\right).
\label{eq:predictable-upper-publication}
\end{align}
\end{lemma}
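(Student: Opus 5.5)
The plan is to derive everything from Freedman's inequality as stated in the first display, applied to suitably chosen martingale differences. This gives the two one-sided consequences \eqref{eq:realized-upper-publication} and \eqref{eq:predictable-upper-publication}.

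For \eqref{eq:realized-upper-publication}, set $\xi_t:=Y_t-\mu_t$. These are martingale differences with respect to $(\mathcal H_t)$ and satisfy $|\xi_t|\le B$, since $Y_t,\mu_t\in[0,B]$. The key variance estimate is
\[
\E[\xi_t^2\mid\mathcal H_{t-1}]\le\E[Y_t^2\mid\mathcal H_{t-1}]\le B\mu_t.
\]
Write $\ell:=\log\frac{T+1}{\eta}$. Freedman then gives, with probability at least $1-\eta$,
\[
\sum_tY_t-\sum_t\mu_t\le C\sqrt{B\ell\sum_t\mu_t}+CB\ell .
\]
Applying AM--GM in the form $C\sqrt{B\ell M}\le M+\tfrac{C^2}{4}B\ell$ with $M=\sum_t\mu_t$ yields $\sum_tY_t\le2\sum_t\mu_t+\mathcal O(B\ell)$.

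For \eqref{eq:predictable-upper-publication}, use $\xi_t:=\mu_t-Y_t$, which has the same bound $|\xi_t|\le B$ and the same conditional variance bound $B\mu_t$. Freedman gives
\[
M-\sum_tY_t\le C\sqrt{B\ell M}+CB\ell .
\]
Using $C\sqrt{B\ell M}\le M/2+\tfrac{C^2}{2}B\ell$, we get $M/2\le\sum_tY_t+\mathcal O(B\ell)$, that is, $M\le2\sum_tY_t+\mathcal O(B\ell)$.

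The only genuine obstacle is the first display itself. The standard Freedman inequality requires a fixed variance level $\sigma^2$, whereas here the predictable quadratic variation $W_T=\sum_t\E[\xi_t^2\mid\mathcal H_{t-1}]$ is random and can be as large as $Tb^2$. I would handle this by a peeling argument, citing the time-uniform, variance-adaptive version of \citet{howard2020timeuniform} or proving it directly. The plan is as follows.

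First, if $b=0$ the sum is zero and there is nothing to prove, so assume $b>0$.

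Next, partition the range of $W_T$ into dyadic shells $W_T\in(2^{j-1}b^2,2^jb^2]$ for $j=1,\dots,\lceil\log_2T\rceil$, together with a base shell $W_T\le b^2$.

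On each shell, apply the fixed-level Freedman bound
\[
\Prob\!\left(\textstyle\sum\xi_t\ge x,\ W_T\le\sigma^2\right)\le\exp\!\left(-\frac{x^2}{2(\sigma^2+bx/3)}\right)
\]
with $\sigma^2=2^jb^2$ and confidence level $\eta/(\lceil\log_2T\rceil+1)$.

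Finally, take a union bound over the shells. Because $2^jb^2\le2W_T+2b^2$ on the $j$th shell, the resulting deviation is $\mathcal O\bigl(\sqrt{W_T\log(\log T/\eta)}+b\log(\log T/\eta)\bigr)$, which is dominated by the stated bound with $\log\frac{T+1}{\eta}$.
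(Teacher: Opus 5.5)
Your proposal is correct and follows essentially the same route as the paper: dyadic peeling of the predictable quadratic variation at thresholds $b^2,2b^2,\ldots,2^{\lceil\log_2T\rceil}b^2$ with fixed-level Freedman and a union bound, then applying the result to $Y_t-\mu_t$ and $\mu_t-Y_t$ using $\E[\xi_t^2\mid\mathcal H_{t-1}]\le B\mu_t$ and Young's inequality. The only difference is that you split $\eta$ over $\lceil\log_2T\rceil+1$ shells instead of $T+1$. This gives a slightly smaller logarithm, which is dominated by the stated $\log\frac{T+1}{\eta}$ because that quantity is at least $\log 2$.
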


\begin{proof}
The case $b=0$ is immediate. Otherwise, apply Freedman's inequality
at variance thresholds $b^2,2b^2,\ldots,2^{\lceil\log_2 T\rceil}b^2$,
with failure probability $\eta/(T+1)$ at each threshold. The conditional
variance sum is at most $Tb^2$, so these at most $T+1$ thresholds cover
all its values. A union bound yields the first inequality, with the
lowest interval absorbed by the term $b\log((T+1)/\eta)$. For the
first consequence, apply it to $\xi_t=Y_t-\mu_t$ and use
$\E[\xi_t^2\mid\mathcal H_{t-1}]
\le\E[Y_t^2\mid\mathcal H_{t-1}]\le B\mu_t$.
Young's inequality absorbs the resulting term
$\mathcal{O}(\sqrt{B(\sum_t\mu_t)\log((T+1)/\eta)})$.  Applying the same argument
to $\xi_t=\mu_t-Y_t$ proves the second consequence.
\end{proof}

For the applications below, $t\le KH$ and $\eta=\delta/16$ imply
$\log((t+1)/\eta)\le\log(32KH/\delta)\le2\iota$.

\subsection{Safety Identification}
\label{app:recursive-certificate}
\label{app:first-departure}

We use $\kappa_h(s,a)$ and $\cC$ from
Section~\ref{subsec:safety-identification} to bound the number of
selections of unsafe actions in the candidate sets and, in turn, the number of
episodes containing an unsafe action. The recursion for $\kappa_h$
ends at an action with a positive cost gap. Otherwise, it selects a
non-viable successor and sums the sample requirements for ruling out all
actions there.
The next lemma expands this recursion into paths ending at positive-gap
actions. Each leaf contributes its sample requirement
$\Delta_\ell^{-2}$ divided by its path probability $q_\ell$.

\begin{lemma}[Witness-tree representation]
\label{lem:witness-tree-publication}
Fix $h\in[H]$, $s\in\cS$, and $a\notin\cA_{\mathrm{safe},h}(s)$.
There is a finite witness tree rooted at $(s,a,h)$ with a nonempty
leaf set $\mathcal L$. For each leaf $\ell$, let $q_\ell\in(0,1]$
be the product of the transition probabilities on its path and
$\Delta_\ell\in(0,1]$ the cost gap of its terminal action. Then
\begin{equation}
\kappa_h(s,a)
=
\sum_{\ell\in\mathcal L}
\frac{1}{q_\ell\Delta_\ell^2}.
\label{eq:witness-tree-publication}
\end{equation}

For SVAE, each execution of $(s,a,h)$ as a candidate action is assigned
to the leaf determined by its episode policy, before the transitions
on that path are observed. Across the $K$ episodes, let $N_\ell$ count the executions assigned
to leaf $\ell$ for which every transition on that path occurs.
On $\cE_c$, the following bound holds for all $\ell\in\mathcal L$:
\begin{equation}
N_\ell
\le
2+\frac{8\iota}{\Delta_\ell^2}
\le
\frac{10\iota}{\Delta_\ell^2}.
\label{eq:witness-leaf-count-publication}
\end{equation}
\end{lemma}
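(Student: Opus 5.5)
The plan is to build the witness tree by unrolling the recursion~\eqref{eq:recursive-kappa} and to prove~\eqref{eq:witness-tree-publication} by backward induction on $h$. Then I bound each leaf count using $\cE_c$ and the candidate-set construction.

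\emph{Construction and identity.} Fix the argmin in~\eqref{eq:recursive-kappa} with a deterministic tie-breaking rule. If $c_h(s,a)>\tau$, the tree is the single leaf $(s,a,h)$ with $q_\ell=1$ and $\Delta_\ell=c_h(s,a)-\tau\in(0,1]$, so~\eqref{eq:witness-tree-publication} holds trivially. Otherwise $a$ is unsafe only because some successor lies outside $\cV_{\mathrm{safe},h+1}$; this set is nonempty by~\eqref{eq:safe-action}. Let $s'$ be the chosen minimizer, with $p=P_h(s'\mid s,a)>0$.

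At step $h+1$ the set $\cA_{\mathrm{safe},h+1}(s')$ is empty, so every action at $s'$ is unsafe and $\kappa_{h+1}(s')=\sum_{a'}\kappa_{h+1}(s',a')$. At step $H$ every successor is viable, so this branch cannot arise there; hence the recursion terminates within $H-h$ levels and the tree is finite. For each $a'$, attach the subtree of $(s',a',h+1)$ given by the induction hypothesis, and multiply each of its leaf path probabilities by $p$. Then
\[
\kappa_h(s,a)=\frac1p\sum_{a'}\sum_{\ell}\frac{1}{q_\ell\Delta_\ell^2}=\sum_{\ell}\frac{1}{(pq_\ell)\Delta_\ell^2}.
\]
The leaf set is nonempty because every subtree has a nonempty leaf set.

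\emph{Assignment and leaf counts.} When $\pi^k$ executes $(s,a,h)$, the path continues deterministically. At each internal node $(s'',a'',g)$ the path moves to the designated successor $s'''$, and at $s'''$ the action $\pi^k_{g+1}(s''')$ selects the child subtree; the path stops at a positive-gap action. This defines a unique leaf as a function of $\pi^k$ alone, which is fixed given $\cF_1^k$, i.e.\ before the transitions are observed. A counted execution actually reaches the terminal action $(\bar s,\bar a,\bar h)$ of leaf $\ell$. I will show that, except possibly on the very first path-completing execution, the terminal action was selected from a nonempty candidate set. Consequently $\underline c_{\bar h}^k(\bar s,\bar a)\le\tau$.

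On $\cE_c$ this gives $\widehat c\ge c-\sqrt{2\iota/n}$, and therefore
\[
\Delta_\ell\le 2\sqrt{2\iota/n_{\bar h}^k(\bar s,\bar a)},
\qquad\text{i.e.}\qquad
n\le 8\iota/\Delta_\ell^2 .
\]
Each counted execution increments the visit count of $(\bar s,\bar a,\bar h)$, so at most $1+8\iota/\Delta_\ell^2$ counted executions occur while this inequality is possible. Adding one more for the $\max\{\cdot,1\}$ convention gives the $+2$. Since $\iota\ge1$ and $\Delta_\ell\le1$, the bound is at most $10\iota/\Delta_\ell^2$.

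\emph{Main obstacle.} The delicate step is justifying that the terminal action (and the intermediate ones) was chosen within a nonempty candidate set, since $\pi^k$ maximizes over $\cA$ when $\hat\cA^k_{\mathrm{safe}}$ is empty. I will proceed as follows. After the first completed transition to the designated successor $s'$, that successor belongs to $\widehat\supp$. If $\hat\cV^k_{\mathrm{safe},h+1}$ did not contain $s'$, then the parent action would leave the candidate set and could not be executed as a candidate. So in every counted execution beyond the first, each visited witness state has a nonempty candidate set and the chosen action is a candidate; in particular the terminal action has LCB at most $\tau$. Because the statement only counts executions where $(s,a,h)$ is itself a candidate, the root satisfies this by hypothesis, and induction down the path handles the rest. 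I expect the bookkeeping of this "first observation" slack to be what yields the additive constant $2$.
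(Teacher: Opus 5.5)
Your proposal is correct and follows essentially the paper's route. The identity comes from unrolling the recursion for $\kappa_h$ by backward induction, and each root execution is assigned to a leaf through $\pi^k$. The leaf count combines the cost lower confidence bound on $\cE_c$ with the propagation argument: once the path has been recorded, a candidate root forces each designated successor into the candidate state set, and hence forces every action on the path, including the terminal one, to be a candidate.

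The only difference is bookkeeping. The paper splits at the time the terminal action's cost test fails permanently, with at most $1+8\iota/\Delta_\ell^2$ terminal executions before that and at most one completion after. You split at the first completed path instead. Every later counted execution then has a prior terminal count of at least one, so the $\max\{\cdot,1\}$ convention costs nothing and your argument actually gives $N_\ell\le 1+8\iota/\Delta_\ell^2$.
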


\begin{proof}
We prove~\eqref{eq:witness-tree-publication} by backward induction.
For the direct-cost branch, take one leaf with
$q_\ell=1$ and $\Delta_\ell=c_h(s,a)-\tau$. Then $\kappa_h(s,a)=\frac{1}{q_\ell\Delta_\ell^2}$.

Suppose now that the recursive branch applies. The minimizing set in
\eqref{eq:recursive-kappa} is nonempty because
$a\notin\cA_{\mathrm{safe},h}(s)$ and $c_h(s,a)\le\tau$.
Choose a minimizing successor $y$. Since
$y\notin\cV_{\mathrm{safe},h+1}$,
$\cA_{\mathrm{safe},h+1}(y)=\varnothing$. By the induction hypothesis,
each $\kappa_{h+1}(y,b)$, $b\in\cA$, has a finite witness
representation. Taking the union of these representations and
multiplying every path probability by $P_h(y\mid s,a)$ gives
\[
\kappa_h(s,a)
=
\frac{\kappa_{h+1}(y)}{P_h(y\mid s,a)}
=
\sum_{\ell\in\mathcal L}
\frac{1}{q_\ell\Delta_\ell^2}.
\]
The construction terminates after at most $H-h+1$ layers. Each internal
action has the selected successor $y$, whose children are indexed by
all actions at $y$. A fixed episode policy selects the child
$\pi_{h+1}^k(y)$ at each such successor and hence determines a unique
leaf before any transition on the path is observed. The product of
these transition probabilities is $q_\ell$.

Fix $\ell\in\mathcal L$ and write $c=\tau+\Delta_\ell$ for the mean
cost of its terminal action. After $n\ge1$ visits, its lower confidence
bound $\underline c$ satisfies
$\underline c\ge c-2\sqrt{2\iota/n}$ on $\cE_c$.
Thus the terminal action is permanently removed by the cost test once
$n>8\iota/\Delta_\ell^2$. Before it is removed, it can be executed at
most $1+8\iota/\Delta_\ell^2$ times.

It remains to account for paths whose transitions have not all been
observed. After the terminal action is permanently rejected, a selected path to
it can be completed at most once more. The first such completion records
every transition on the path. In any later episode selecting the same
leaf, suppose the root action is a candidate. Its recorded successor
must then have a nonempty candidate set, so the algorithm selects a
candidate action there. Repeating this argument along the recorded path
would make the permanently rejected terminal action a candidate, a
contradiction. Recorded supports only grow, and the terminal cost bound holds at all
later sample counts. Adding this possible completion to the visits
before removal proves
\eqref{eq:witness-leaf-count-publication}.
\end{proof}

We now convert completed leaf paths into a bound on root executions.
Weighting a completed path by $1/q_\ell$ makes its conditional mean
equal to the indicator that the root is executed as a candidate.

\begin{lemma}
\label{lem:false-candidate-control}
For SVAE, there exists an event $\Cert$ with
$\Prob(\Cert)\ge1-\delta/16$ such that, on $\Est\cap\Cert$,
the following bound holds simultaneously for all $h\in[H]$,
$s\in\cS$, and $a\notin\cA_{\mathrm{safe},h}(s)$:
\begin{equation}
\sum_{k=1}^{K}
\Ind\!\left\{
 s_h^k=s,\ a_h^k=a,\
 a\in\hat\cA_{\mathrm{safe},h}^k(s)
\right\}
\le \mathcal{O}(\iota\kappa_h(s,a)).
\label{eq:recursive-action-count-revised}
\end{equation}
\end{lemma}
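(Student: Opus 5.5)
We will reduce the count of root executions to the witness-tree leaf counts of Lemma~\ref{lem:witness-tree-publication}, using an importance-weighted martingale argument. Fix $(s,a,h)$ with $a\notin\cA_{\mathrm{safe},h}(s)$, and take its witness tree with leaf set $\mathcal L$. For each episode $k$, let $X_k:=\Ind\{s_h^k=s,\ a_h^k=a,\ a\in\hat\cA_{\mathrm{safe},h}^k(s)\}$. This indicator is $\cF_h^k$-measurable. When $X_k=1$, the episode policy $\pi^k$, which is fixed given $\cF_1^k$, determines a unique leaf $\ell_k\in\mathcal L$. Let $Z_k$ indicate that every transition on the path to $\ell_k$ occurs. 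Conditional on $\cF_h^k$ and $X_k=1$, the path transitions are drawn from the true kernel, because the actions along the path are prescribed by $\pi^k$. Hence $\E[Z_k\mid\cF_h^k]=q_{\ell_k}X_k$.

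We then split the sum over episodes by assigned leaf, writing $\sum_k X_k=\sum_{\ell\in\mathcal L}\sum_k X_k\Ind\{\ell_k=\ell\}$. For each leaf $\ell$, set $Y_k^\ell:=Z_k\Ind\{\ell_k=\ell\}$, so that $\E[Y_k^\ell\mid\cF_h^k]=q_\ell X_k\Ind\{\ell_k=\ell\}$. We apply the predictable-upper consequence~\eqref{eq:predictable-upper-publication} of Lemma~\ref{lem:freedman-publication}, with $B=1$ and the filtration ordered by episodes, where episode $k$ is revealed from step $h$ until the path completes. This gives, with high probability,
\[
q_\ell\sum_k X_k\Ind\{\ell_k=\ell\}\le 2N_\ell+\mathcal O(\iota),
\]
where $N_\ell=\sum_k Y_k^\ell$ is the count from Lemma~\ref{lem:witness-tree-publication}. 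On $\cE_c\supseteq\Est$, \eqref{eq:witness-leaf-count-publication} gives $N_\ell\le10\iota/\Delta_\ell^2$. Dividing by $q_\ell$ yields
\[
\sum_k X_k\Ind\{\ell_k=\ell\}\le\mathcal O\!\left(\frac{\iota}{q_\ell\Delta_\ell^2}\right),
\]
using $\Delta_\ell\le1$ to absorb the additive $\mathcal O(\iota)/q_\ell$ term. Summing over $\ell$ and invoking~\eqref{eq:witness-tree-publication} gives $\mathcal O(\iota\kappa_h(s,a))$.

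The main obstacle is the union bound defining $\Cert$, since the witness trees can have many leaves. We address it in two ways. First, we fix the witness trees deterministically, since they depend only on the instance $(P,c,\tau)$ through deterministic minimizer choices. Second, we bound the number of leaves by counting leaf paths: each leaf is determined by a sequence of actions at the selected successors, so there are at most $A^{H}$ leaves per root. This is too many for $\iota=\log(32S^2AHK/\delta)$.

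To avoid the leaf-level union bound, we will instead apply the martingale inequality once per root to the single weighted sequence $\sum_k Z_k/q_{\ell_k}$, whose conditional mean is $\sum_k X_k$. The difficulty here is that the range is $1/\min_\ell q_\ell$, which is not bounded by $\kappa$. We plan to handle this with Freedman's variance term: the conditional variance is at most $\sum_k X_k/q_{\ell_k}$. This splits per leaf into terms $N_\ell$-controlled only through the realized counts. The realized weighted sum itself is deterministically at most $\sum_\ell N_\ell/q_\ell\le10\iota\kappa_h(s,a)$ by~\eqref{eq:witness-leaf-count-publication}.

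Concretely, we will use a multiplicative lower-tail bound for nonnegative weighted sums, $\sum_k\mu_k\le2\sum_k W_k+\mathcal O(b\iota)$ with $b=\max_\ell 1/q_\ell$. The additive term $b\iota$ is itself at most $\iota\kappa_h(s,a)$, since every leaf contributes $1/(q_\ell\Delta_\ell^2)\ge1/q_\ell$ to $\kappa_h(s,a)$. This gives the bound with a union bound over only $SAH$ roots and the time-uniform thresholds, so that $\Prob(\Cert)\ge1-\delta/16$.
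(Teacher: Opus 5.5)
Your final argument is exactly the paper's proof: apply \eqref{eq:predictable-upper-publication} once per root to the importance-weighted variables $Z_k/q_{\ell_k}$, whose conditional mean given $\cF_h^k$ is the candidate-execution indicator and whose realized sum is at most $\sum_{\ell}N_\ell/q_\ell\le10\iota\kappa_h(s,a)$ on $\cE_c$, then take a union bound over the $SAH$ triples. The per-leaf detour is unnecessary, and your intermediate claim that the range $1/\min_\ell q_\ell$ is not bounded by $\kappa_h(s,a)$ is wrong: as you note at the end, $\Delta_\ell\le1$ gives $\max_\ell 1/q_\ell\le\kappa_h(s,a)$, which is why the paper simply takes $B=\kappa_h(s,a)$.
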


\begin{proof}
Fix $(s,a,h)$ and the representation in
Lemma~\ref{lem:witness-tree-publication}. Let $I_k$ denote the indicator
in~\eqref{eq:recursive-action-count-revised}. If $I_k=1$, let $\ell_k$ be the leaf determined by $\pi^k$, and let
$Z_k$ indicate that all transitions along its path occur. Set
$X_k=Z_k/q_{\ell_k}$ when $I_k=1$ and $X_k=0$ otherwise. Since $\pi^k$ is $\cF_1^k$-measurable,
$\ell_k$ and $q_{\ell_k}$ are $\cF_h^k$-measurable. The Markov
property and~\eqref{eq:witness-tree-publication} give $\E[X_k\mid\cF_h^k]=I_k$, and $0\le X_k\le\kappa_h(s,a)$,
where the second bound also uses $\Delta_{\ell_k}\le1$.
On $\cE_c$, \eqref{eq:witness-leaf-count-publication} gives
\begin{equation*}
\sum_{k=1}^{K}X_k
=
\sum_{\ell\in\mathcal L}\frac{N_\ell}{q_\ell}
\le
10\iota
\sum_{\ell\in\mathcal L}\frac{1}{q_\ell\Delta_\ell^2}
=
10\iota\kappa_h(s,a).
\end{equation*}
Apply~\eqref{eq:predictable-upper-publication} in episode order,
conditioning the $k$th increment on $\cF_h^k$. The variable $X_k$ is known by the end of episode $k$, before
$\cF_h^{k+1}$ for $k<K$. For the final increment, use the terminal
history $\cF_{H+1}^K$. Take $B=\kappa_h(s,a)$ and failure
probability $\delta/(16SAH)$. Let $\Cert$ be the intersection of the
events supplied by these applications over all unsafe $(s,a,h)$.
A union bound gives $\Prob(\Cert)\ge1-\delta/16$. On $\Cert$,
\[
\sum_{k=1}^{K}I_k
\le
2\sum_{k=1}^{K}X_k
+\mathcal{O}\!\left(\kappa_h(s,a)\log\frac{16SAH(K+1)}{\delta}\right)
\]
for every such triple. Since $\log(16SAH(K+1)/\delta)\le2\iota$,
the bound on $\sum_kX_k$ above yields, on $\Est\cap\Cert$,
$\sum_k I_k\le \mathcal{O}(\iota\kappa_h(s,a))$, as required.
\end{proof}

We now apply this count to the first unsafe action in each episode.
For episode $k$, define
\begin{equation*}
\tau_k
:=
\min\left\{h\in[H]:
a_h^k\notin\cA_{\mathrm{safe},h}(s_h^k)\right\},
\qquad
\min\varnothing:=H+1.
\end{equation*}
The event $\{\tau_k\le h\}$ belongs to $\cF_h^k$, so $\tau_k$ is a
stopping time for the within-episode history. All actions before $\tau_k$ are safe, so the state at step $\tau_k$
is reachable under a safe policy. The bound on unsafe candidate
selections therefore controls the number of episodes with $\tau_k\le H$.

\begingroup
\SVAEtheoremheadingtarget
\begin{proposition}
\label{prop:first-departure-publication}
On $\Est\cap\Cert$,
\begin{equation}
\sum_{k=1}^{K}\Ind\{\tau_k\le H\}
\le
\mathcal{O}\!\left(\iota\min\!\left\{K,\cC\right\}\right).
\label{eq:departure-realized-publication}
\end{equation}
\end{proposition}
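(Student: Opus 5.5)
The bound $\sum_k\Ind\{\tau_k\le H\}\le K$ is trivial, so it suffices to show $\sum_k\Ind\{\tau_k\le H\}\le\mathcal O(\iota\,\cC)$ on $\Est\cap\Cert$. The plan is to charge each episode with $\tau_k\le H$ to the unique triple $(s_{\tau_k}^k,a_{\tau_k}^k,\tau_k)$. We then verify that this triple satisfies two conditions: its vertex lies in $\bigcup_{\pi\in\Pi_{\mathrm{safe}}}\cV^\pi$, and the action was selected as a candidate. With these, Lemma~\ref{lem:false-candidate-control} can be applied and the resulting counts summed against the definition~\eqref{eq:C-wit} of $\cC$.

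\emph{Reachability.} Fix $k$ with $\tau_k=h\le H$ and write $s=s_h^k$, $a=a_h^k$. For every $t<h$ we have $a_t^k\in\cA_{\mathrm{safe},t}(s_t^k)$, so $s_t^k\in\cV_{\mathrm{safe},t}$, and every successor of a safe action is viable. Define a policy $\pi'$ by $\pi'_t(s_t^k)=a_t^k$ for $t<h$, and let it pick an arbitrary safe action at every other viable state. To make this well defined, we first need that the states $s_1^k,\ldots,s_{h-1}^k$ occur at distinct steps; this is automatic, since each sits at its own step $t$. Under $\pi'$ every reachable vertex is viable with a safe chosen action, so $\pi'\in\Pi_{\mathrm{safe}}$. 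The realized path has positive probability, so $\mu_h^{\pi'}(s)>0$, and hence $(s,h)\in\cV^{\pi'}$.

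\emph{Candidacy.} On $\cE_c\subseteq\Est$, Lemma~\ref{lem:safe-superset} gives $s\in\cV_{\mathrm{safe},h}\subseteq\hat\cV^k_{\mathrm{safe},h}$, because $s$ is viable by the previous step. The candidate set at $s$ is therefore nonempty, and the first branch of the policy rule selects $a\in\hat\cA^k_{\mathrm{safe},h}(s)$. Thus the indicator in~\eqref{eq:recursive-action-count-revised} equals one.

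\emph{Summation.} Each episode with $\tau_k\le H$ contributes exactly one such indicator at a triple with $a\notin\cA_{\mathrm{safe},h}(s)$ and $(s,h)$ reachable under a safe policy. Therefore
\[
\sum_k\Ind\{\tau_k\le H\}\le\sum_{(s,h)\in\bigcup_{\pi}\cV^\pi}\ \sum_{a\notin\cA_{\mathrm{safe},h}(s)}\ \sum_k\Ind\{s_h^k=s,\ a_h^k=a,\ a\in\hat\cA^k_{\mathrm{safe},h}(s)\},
\]
where the outer union is over $\pi\in\Pi_{\mathrm{safe}}$. Applying Lemma~\ref{lem:false-candidate-control} bounds each inner sum by $\mathcal O(\iota\kappa_h(s,a))$, and summing gives $\mathcal O(\iota\,\cC)$. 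The only delicate point is the reachability step: the witness policy must be a genuine safe policy whose generated subgraph contains $(s,h)$. That construction is what restricts the sum to the reachable boundary rather than to all unsafe triples.
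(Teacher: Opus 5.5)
Your proposal is correct and follows the paper's proof essentially step for step: you complete the safe prefix to a safe policy to place $(s_{\tau_k}^k,\tau_k)$ in the safe-reachable set, use the candidate-set inclusion to show $a_{\tau_k}^k$ is an unsafe candidate, apply Lemma~\ref{lem:false-candidate-control} to each such triple and sum against the definition of $\cC$, and then combine with the trivial bound $K$. Two cosmetic points: Lemma~\ref{lem:safe-superset} is proved on $\cE_{\mathrm{cs}}=\cE_c\cap\cE_{\mathrm{supp}}$ rather than on $\cE_c$ alone (this event is still contained in $\Est$), and moving the minimum with $K$ inside $\mathcal{O}(\cdot)$ uses $\iota\ge1$.
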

\endgroup

\begin{proof}
If $\tau_k\le H$, the preceding safe actions reach
$(s_{\tau_k}^k,\tau_k)$ with positive probability. Complete those
actions with arbitrary safe actions at other viable states. The
resulting policy is safe, so this vertex is included in the definition
of $\cC$. Its true safe action set is nonempty and,
by the inclusions proved in Appendix~\ref{app:safe-superset}, so is its candidate
set. Thus $a_{\tau_k}^k$ is an unsafe candidate action.

Apply Lemma~\ref{lem:false-candidate-control} to each unsafe action at
a vertex reachable under a safe policy and sum the bounds. Each
episode with $\tau_k\le H$ contributes one such execution, giving
\begin{equation}
\sum_{k=1}^K\Ind\{\tau_k\le H\}
\le\mathcal{O}\!\left(
\iota\sum_{(s,h)\in\bigcup_{\pi\in\Pi_{\mathrm{safe}}}\cV^\pi}
\kappa_h(s)\right)
=\mathcal{O}(\iota \cC).
\label{eq:departure-charge-chain}
\end{equation}
The count is also at most $K$. Combining these bounds and using
$\iota\ge1$ proves the claim.
\end{proof}

\subsection{Bellman Residual Bounds}
\label{app:bellman-envelope}

We first control the transition estimation error, then derive the two
Bellman residual bounds used before and after the first unsafe action.
The next lemma converts coordinatewise transition bounds into mean
and variance bounds for every bounded value vector, including the
data-dependent value estimates used below.

\begin{lemma}
\label{lem:transition-interface-publication}
Let $B\ge0$, $x>0$, and let $p,q$ be distributions on $[S]$ satisfying
\begin{equation}
|q_i-p_i|
\le\sqrt{2p_ix}+\Ind\{p_i>0\}x
\qquad(i\in[S]).
\label{eq:coordinatewise-assumption-publication}
\end{equation}
Then, for every $z\in[0,B]^S$,
\begin{align}
|(q-p)z|
&\le2\sqrt{S\Var(p,z)x}+3SBx,
\label{eq:functional-mean-publication}\\
\Var(q,z)
&\le2\Var(p,z)+4SB^2x.
\label{eq:functional-variance-publication}
\end{align}
\end{lemma}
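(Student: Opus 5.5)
The plan is to center $z$ at its mean under $p$ and work coordinatewise. Let $\bar z:=pz$ and $y:=z-\bar z\mathbf 1$. Since $p$ and $q$ are both probability vectors, $(q-p)\mathbf 1=0$, so $(q-p)z=(q-p)y$. Moreover $\sum_ip_iy_i^2=\Var(p,z)$, and $|y_i|\le B$ for every $i$ in $\supp(p)$, because $\bar z\in[0,B]$. Assumption~\eqref{eq:coordinatewise-assumption-publication} forces $q_i=p_i=0$ outside $\supp(p)$, so every sum can be restricted to $\supp(p)$; this is what allows the $\Ind\{p_i>0\}$ factor to be used.

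For the mean bound~\eqref{eq:functional-mean-publication}, write
\[
|(q-p)y|\le\sum_{i\in\supp(p)}\bigl(\sqrt{2p_ix}+x\bigr)|y_i|.
\]
The first part is handled by Cauchy--Schwarz:
\[
\sum_i\sqrt{2p_ix}\,|y_i|\le\sqrt{2x\,|\supp(p)|\sum_ip_iy_i^2}\le\sqrt{2S\Var(p,z)x}.
\]
The second part is at most $SBx$. Together these give the claim with constants $\sqrt2\le2$ and $1\le3$.

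For the variance bound~\eqref{eq:functional-variance-publication}, use that the variance under $q$ is the minimum mean squared deviation over all centers. Hence
\[
\Var(q,z)\le\sum_iq_iy_i^2=\Var(p,z)+\sum_i(q_i-p_i)y_i^2 .
\]
Apply the coordinatewise bound to the second sum, using $y_i^2\le B|y_i|$ on $\supp(p)$. This gives
\[
\sum_i\sqrt{2p_ix}\,y_i^2+\sum_ix\,y_i^2\le B\sqrt{2S\Var(p,z)x}+SB^2x,
\]
where the first term uses the same Cauchy--Schwarz step as before. Finally, by AM--GM,
\[
B\sqrt{2S\Var(p,z)x}\le\Var(p,z)+\tfrac12SB^2x.
\]
The total is at most $2\Var(p,z)+\tfrac32SB^2x\le2\Var(p,z)+4SB^2x$.

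The only delicate point is making sure the $x$-terms are restricted to $\supp(p)$, where the bound $|y_i|\le B$ holds; the support restriction noted at the start handles this, and the rest is routine. No probabilistic argument is needed, since the statement is deterministic given~\eqref{eq:coordinatewise-assumption-publication}.
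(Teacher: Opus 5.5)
Your proof is correct and follows the paper's argument essentially step for step. You center $z$ at $pz$ using $(q-p)\mathbf 1=0$, apply Cauchy--Schwarz to the $\sqrt{2p_ix}$ terms, bound $\Var(q,z)$ by the $q$-weighted second moment about $pz$, and absorb the cross term with Young's inequality; your AM--GM split even gives the slightly sharper $\tfrac32SB^2x$ (the paper gets $2SB^2x$), and since $|y_i|\le B$ holds for every $i$, the support restriction is harmless but not actually needed.
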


\begin{proof}
Let $d_i=z_i-pz$. Since $(q-p)\mathbf1=0$ and $|d_i|\le B$,
\begin{align*}
|(q-p)z|
&\le\sqrt{2x}\sum_i\sqrt{p_i}|d_i|+x\sum_{i:p_i>0}|d_i|\\
&\le\sqrt{2S\Var(p,z)x}+SBx,
\end{align*}
which implies~\eqref{eq:functional-mean-publication}. Also,
\begin{align*}
\Var(q,z)
&\le\sum_{i=1}^S q_i d_i^2\\
&\le\Var(p,z)+\sqrt{2x}\sum_i\sqrt{p_i}d_i^2
+x\sum_{i:p_i>0}d_i^2\\
&\le\Var(p,z)+B\sqrt{2S\Var(p,z)x}+SB^2x
\le2\Var(p,z)+2SB^2x.
\end{align*}
The last two inequalities use Cauchy--Schwarz and Young's inequality.
This proves~\eqref{eq:functional-variance-publication}.
\end{proof}

Before the first unsafe action, we bound the Bellman residual using
$V^\star$. For the remaining steps, we use a bound that applies to every
action. For $h\in[H+1]$, define
$D_h^k(s):=V_h^k(s)-V_h^\star(s)$ for $s\in\cV_{\mathrm{safe},h}$
and $D_h^k(s):=0$ otherwise. On $\Est$,
Lemma~\ref{lem:optimism-publication} gives $0\le D_h^k\le H-h+1$.

Fix a sufficiently large universal constant $C_0$. Define
\begin{align}
\beta_h^k(s,a)
:=&C_0\Bigg[
\sqrt{\frac{\{\Var(R_h(s,a))+
\Var(P_h(\cdot\mid s,a),V_{h+1}^\star)\}\iota}
{n_h^k(s,a)}}
\notag\\
&\quad+\sqrt{\frac{S\Var(P_h(\cdot\mid s,a),D_{h+1}^k)\iota}
{n_h^k(s,a)}}
+\frac{SH\iota}{n_h^k(s,a)}
\Bigg],
\label{eq:beta-publication}
\end{align}
and
\begin{equation}
\widetilde\beta_h^k(s,a)
:=C_0\left[
\sqrt{\frac{\{\Var(R_h(s,a))+
S\Var(P_h(\cdot\mid s,a),V_{h+1}^k)\}\iota}
{n_h^k(s,a)}}
+\frac{SH\iota}{n_h^k(s,a)}
\right].
\label{eq:all-action-beta-publication}
\end{equation}
The same $C_0$ is used in both definitions.
The three terms in $\beta_h^k$ account for the reward and optimal-value
variances, the next-state variance of $D_{h+1}^k$, and the linear
concentration terms. The bound $\widetilde\beta_h^k$ uses
$V_{h+1}^k$ directly.

\begin{lemma}[Bellman residual bounds]
\label{lem:bellman-envelope-publication}
On $\Est$, for every $k\in[K]$, $h\in[H]$, and $s\in\cS$,
the following two bounds hold. For every
$a\in\cA_{\mathrm{safe},h}(s)$,
\begin{equation}
Q_h^k(s,a)-r_h(s,a)-P_hV_{h+1}^k(s,a)
\le\beta_h^k(s,a).
\label{eq:bellman-residual-publication}
\end{equation}

For every $a\in\cA$,
\begin{equation}
Q_h^k(s,a)-r_h(s,a)-P_hV_{h+1}^k(s,a)
\le\widetilde\beta_h^k(s,a).
\label{eq:all-action-residual-publication}
\end{equation}
\end{lemma}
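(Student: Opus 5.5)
The plan is to bound the unclipped backup, since clipping only lowers $Q_h^k$. Write $N=N_h^k(s,a)$, $x=\iota/n_h^k(s,a)$, $p=P_h(\cdot\mid s,a)$ and $q=\widehat P_h^k(\cdot\mid s,a)$. The case $N\le1$ is trivial: the right-hand sides contain $C_0SH\iota/n\ge C_0 H$, which dominates $Q_h^k\le H-h+1$ once $C_0\ge1$. The case $N\ge2$ handles $N\ge1$ as well via $\cE_P$; for the reward part of the bound we need $N\ge2$, and otherwise we use the linear term. In the main case, the residual splits as
\[
(\widehat r_h^k-r_h)+(q-p)V_{h+1}^k+b_h^k(s,a).
\]

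The reward error is handled by \eqref{eq:reward-eb-publication}. We then replace $\widehat\sigma_{R,h}^{2,k}$ (both there and in the bonus) by $2\Var(R_h)+2x$ via \eqref{eq:reward-var-comparison-publication}, which gives $\mathcal O(\sqrt{\Var(R_h)x}+x)$. For the bonus transition term, we apply \eqref{eq:functional-variance-publication} under $\cE_P$ with $B=H-h$ to get
\[
\Var(q,V_{h+1}^k)\le2\Var(p,V_{h+1}^k)+4SH^2x.
\]
Hence $4\sqrt{\Var(q,V^k_{h+1})x}\le\mathcal O(\sqrt{\Var(p,V^k_{h+1})x}+\sqrt S Hx)$, and the linear bonus term is $\mathcal O(Hx)$. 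For \eqref{eq:all-action-residual-publication} we bound $(q-p)V_{h+1}^k$ directly by \eqref{eq:functional-mean-publication}, giving $2\sqrt{S\Var(p,V_{h+1}^k)x}+3SHx$. Collecting terms yields $\widetilde\beta_h^k$. This bound holds for every action, since it never uses the safe structure.

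For \eqref{eq:bellman-residual-publication}, with $a$ safe, we use that $\supp(p)\subseteq\cV_{\mathrm{safe},h+1}$. On that support, $V_{h+1}^k=V_{h+1}^\star+D_{h+1}^k$, and on $\cE_{\mathrm{supp}}$ also $\supp(q)\subseteq\supp(p)$. Therefore the quantities $(q-p)V^k_{h+1}$, $\Var(p,V^k_{h+1})$ and $\Var(q,V^k_{h+1})$ depend only on values on the safe support. Split
\[
(q-p)V_{h+1}^k=(q-p)V_{h+1}^\star+(q-p)D_{h+1}^k.
\]
The first piece is bounded by \eqref{eq:value-eb-publication}. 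This gives $2\sqrt{\Var(q,V^\star_{h+1})x}+5Hx$, and \eqref{eq:functional-variance-publication} converts it to $\mathcal O(\sqrt{\Var(p,V^\star_{h+1})x}+\sqrt SHx+Hx)$. This first piece is where the non-$S$ variance term comes from, and it is legitimate because $V^\star_{h+1}$ is a fixed function. The second piece involves the data-dependent $D_{h+1}^k\in[0,H]$, so we use \eqref{eq:functional-mean-publication}, which gives $2\sqrt{S\Var(p,D^k_{h+1})x}+3SHx$.

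For the bonus we use $\sqrt{\Var(p,V^\star+D)}\le\sqrt{\Var(p,V^\star)}+\sqrt{\Var(p,D)}$, which follows from Minkowski applied to centered functions, together with $\sqrt{ab}\le$ sums. This bounds the bonus transition term by $\mathcal O(\sqrt{\Var(p,V^\star)x}+\sqrt{\Var(p,D)x}+\sqrt SHx)$. Finally we merge the reward and $V^\star$ square roots using $\sqrt u+\sqrt v\le\sqrt{2(u+v)}$, and absorb every $Hx$ and $\sqrt SHx$ term into $SHx$. Choosing $C_0$ large gives $\beta_h^k$.

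The main obstacle is keeping the $\sqrt{S}$ factor off the $V^\star$ variance term. This requires applying the empirical Bernstein bound only to the fixed $V^\star_{h+1}$, routing the data-dependent part through $D^k_{h+1}$, and checking carefully that the support inclusion makes the substitution $V^k=V^\star+D$ exact on $\supp(p)\cup\supp(q)$. Outside the viable set, $D$ is set to $0$ while $V^k$ is not, so the substitution fails there; this is exactly why safety of $a$ is needed.
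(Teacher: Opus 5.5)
Your proposal is correct and follows essentially the same route as the paper. For safe actions you write $V^k_{h+1}=V^\star_{h+1}+D^k_{h+1}$ on the true support, using $\supp(q)\subseteq\supp(p)\subseteq\cV_{\mathrm{safe},h+1}$. You then apply the empirical Bernstein bound only to the fixed $V^\star_{h+1}$, use the coordinatewise transition lemma for the data-dependent $D^k_{h+1}$, and for arbitrary actions apply that lemma directly to $V^k_{h+1}$.

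The one difference is cosmetic. You convert $\Var(q,V^k_{h+1})$ to $\Var(p,V^k_{h+1})$ before splitting it with the standard-deviation triangle inequality. The paper instead splits first via $\Var(q,V+D)\le2\Var(q,V)+2\Var(q,D)$ and then converts each piece.
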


\begin{proof}
Fix $(s,a,h,k)$ and suppress these indices when unambiguous. Let
$N=N_h^k(s,a)$ and $B=H-h$. If $N\le1$, the residual is at most $H$
by clipping and nonnegative rewards. The linear term in either bound
covers this case. Suppose $N\ge2$, and set $x=\iota/N$,
$p=P_h(\cdot\mid s,a)$, and $q=\widehat P_h^k(\cdot\mid s,a)$.
The reward terms satisfy
\begin{align}
2\sqrt{\widehat\sigma_{R,h}^{2,k}x}+5x+|\widehat r_h^k-r_h|
&\le4\sqrt{\widehat\sigma_{R,h}^{2,k}x}+10x\notag\\
&\le \mathcal{O}\!\left(\sqrt{\Var(R_h(s,a))x}+x\right)
\label{eq:reward-residual-bound-complete}
\end{align}
by~\eqref{eq:reward-eb-publication} and
\eqref{eq:reward-var-comparison-publication}.

For a safe action $a\in\cA_{\mathrm{safe},h}(s)$, write
$V=V_{h+1}^\star$ and $D=D_{h+1}^k$. On the true support,
$V_{h+1}^k=V+D$ and $V,D\in[0,B]$. The empirical support is contained
in the true support. Clipping gives
\begin{equation}
Q_h^k-r_h-pV_{h+1}^k
\le b_h^k+(\widehat r_h^k-r_h)+(q-p)V+(q-p)D.
\label{eq:residual-expansion-publication}
\end{equation}
By~\eqref{eq:functional-variance-publication}, for $z=V,D$,
$\sqrt{\Var(q,z)x}\le\sqrt{2\Var(p,z)x}+2\sqrt S\,Bx$.
Using $\Var(q,V+D)\le2\Var(q,V)+2\Var(q,D)$, we obtain
\begin{align}
&4\sqrt{\Var(q,V+D)x}+16Bx+|(q-p)V|+|(q-p)D|\notag\\
&\quad\le(4\sqrt2+2)\sqrt{\Var(q,V)x}
+4\sqrt2\sqrt{\Var(q,D)x}
+2\sqrt{S\Var(p,D)x}+(21+3S)Bx\notag\\
&\quad\le \mathcal{O}\!\left(\sqrt{\Var(p,V)x}
+\sqrt{S\Var(p,D)x}+SBx\right).
\label{eq:transition-residual-final-complete}
\end{align}
The first inequality uses~\eqref{eq:value-eb-publication} for the fixed
function $V$ and~\eqref{eq:functional-mean-publication} for $D$.
The second uses the two variance comparisons above.
Combining~\eqref{eq:residual-expansion-publication},
\eqref{eq:reward-residual-bound-complete}, and
\eqref{eq:transition-residual-final-complete} proves
\eqref{eq:bellman-residual-publication}.

Now let $a$ be arbitrary and write $V=V_{h+1}^k$.
Lemma~\ref{lem:transition-interface-publication} applies directly to
this data-dependent function and gives
\[
4\sqrt{\Var(q,V)x}+16Bx+|(q-p)V|
\le \mathcal{O}\!\left(\sqrt{S\Var(p,V)x}+SBx\right).
\]
Together with~\eqref{eq:reward-residual-bound-complete} and the clipped
backup, this proves~\eqref{eq:all-action-residual-publication}.
A sufficiently large $C_0$ makes both residual bounds hold.
\end{proof}

\subsection{Regret Decomposition}
\label{app:safe-projection}

At viable states, we replace each unsafe action of $\pi^k$ by the
corresponding action of $\pi^\star$. This gives a safe policy that
agrees with $\pi^k$ before the first unsafe action and allows us to
use the safe-policy variance. For $h\in[H]$ and
$s\in\cV_{\mathrm{safe},h}$, define
\begin{equation}
\bar\pi_h^k(s)
:=
\begin{cases}
\pi_h^k(s),&\pi_h^k(s)\in\cA_{\mathrm{safe},h}(s),\\
\pi_h^\star(s),&\pi_h^k(s)\notin\cA_{\mathrm{safe},h}(s),
\end{cases}
\label{eq:safe-projection-publication}
\end{equation}
with arbitrary actions at states outside $\cV_{\mathrm{safe},h}$.
For $h\in[H+1]$, define
$E_h^k(s):=V_h^k(s)-V_h^{\bar\pi^k}(s)$ for
$s\in\cV_{\mathrm{safe},h}$ and $E_h^k(s):=0$ otherwise.
Thus $D_h^k$ compares with the optimal safe value, whereas
$E_h^k$ compares with the value of $\bar\pi^k$.
For every $h\in[H]$, define the Bellman residual at the executed pair
\begin{equation*}
f_h^k
:=V_h^k(s_h^k)-r_h(s_h^k,a_h^k)
-P_hV_{h+1}^k(s_h^k,a_h^k).
\end{equation*}

\begin{lemma}
\label{lem:safe-projection-publication}
For every $k\in[K]$, the policy $\bar\pi^k$ is
$\cF_1^k$-measurable and belongs to $\Pi_{\mathrm{safe}}$.
Couple $\pi^k$ and $\bar\pi^k$ using the same transition randomness.
Their states then agree through step $\tau_k$, and their actions
agree for all $h\in[H]$ with $h<\tau_k$.
\par\nopagebreak
For every $k\in[K]$ and $h\in[H]$ with $h<\tau_k$,
\begin{equation}
\begin{aligned}
D_h^k(s_h^k)-P_hD_{h+1}^k(s_h^k,a_h^k)&\le f_h^k,\\
E_h^k(s_h^k)-P_hE_{h+1}^k(s_h^k,a_h^k)&=f_h^k.
\end{aligned}
\label{eq:gap-drifts-publication}
\end{equation}
On $\Est$, $0\le D_h^k(s)\le E_h^k(s)\le H-h+1$
for every $k\in[K]$, $h\in[H+1]$, and
$s\in\cV_{\mathrm{safe},h}$.
\end{lemma}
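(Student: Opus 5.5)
We prove the four claims in the statement in turn: measurability, safety of $\bar\pi^k$, the coupling, and the drift identities with the ordering $0\le D\le E$.

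\emph{Measurability and safety.} Measurability is immediate. $\pi^k$ is $\cF_1^k$-measurable, and $\bar\pi^k$ is a deterministic function of $\pi^k$ together with the fixed sets $\cA_{\mathrm{safe},h}$ and the fixed policy $\pi^\star$ from Lemma~\ref{lem:safe-comparator-publication}. For safety, note that $\bar\pi^k$ selects a safe action at every viable state, by construction in~\eqref{eq:safe-projection-publication}. We then argue by forward induction on $h$ that every vertex reached from $s_1$ is viable. The base case is Assumption~\ref{ass:init}. For the inductive step, a safe action has all its successors in $\cV_{\mathrm{safe},h+1}$. Hence $\cV^{\bar\pi^k}\subseteq\cV_{\mathrm{safe}}$ and $\cE^{\bar\pi^k}\subseteq\cE_{\mathrm{safe}}$.

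\emph{Coupling.} Drive both policies with the same transition randomness. Suppose the states agree at some step $h<\tau_k$. That state is viable, by the same forward induction applied to the safe actions taken so far. Moreover $a_h^k=\pi_h^k(s_h^k)$ is safe because $h<\tau_k$, so $\bar\pi_h^k(s_h^k)=\pi_h^k(s_h^k)$. The actions therefore agree, and so do the next states. Induction gives agreement of states through step $\tau_k$ and of actions for all $h<\tau_k$.

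\emph{Drift identities.} Fix $h<\tau_k$ and write $s=s_h^k$ and $a=a_h^k$. Then $s$ is viable, $a$ is safe, and $V_h^k(s)=Q_h^k(s,a)$. All successors of $a$ lie in $\cV_{\mathrm{safe},h+1}$, so on $\supp(P_h(\cdot\mid s,a))$ we have $D_{h+1}^k=V_{h+1}^k-V_{h+1}^\star$ and $E_{h+1}^k=V_{h+1}^k-V_{h+1}^{\bar\pi^k}$. Hence
\[
P_hD_{h+1}^k(s,a)=P_hV_{h+1}^k(s,a)-P_hV_{h+1}^\star(s,a),
\qquad
P_hE_{h+1}^k(s,a)=P_hV_{h+1}^k(s,a)-P_hV_{h+1}^{\bar\pi^k}(s,a).
\]
For the $E$-identity, since $\bar\pi_h^k(s)=a$, the Bellman equation gives $V_h^{\bar\pi^k}(s)=r_h(s,a)+P_hV_{h+1}^{\bar\pi^k}(s,a)$. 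Subtracting this from $V_h^k(s)$ and using the display above yields $E_h^k(s)-P_hE_{h+1}^k(s,a)=f_h^k$ exactly. For the $D$-inequality, the safe Bellman equation~\eqref{eq:safe-bellman-publication} gives $V_h^\star(s)\ge r_h(s,a)+P_hV_{h+1}^\star(s,a)$ because $a$ is safe. Substituting this lower bound produces the inequality $D_h^k(s)-P_hD_{h+1}^k(s,a)\le f_h^k$.

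\emph{Ordering.} Work on $\Est$ and restrict to viable $s$. Lemma~\ref{lem:optimism-publication} gives $D_h^k\ge0$. Since $\bar\pi^k$ selects safe actions at all viable states, Lemma~\ref{lem:safe-comparator-publication} gives $V_h^{\bar\pi^k}\le V_h^\star$, hence $D_h^k\le E_h^k$. Finally, $V_h^k\le H-h+1$ by clipping and $V_h^{\bar\pi^k}\ge0$ because rewards are nonnegative, so $E_h^k\le H-h+1$. The case $h=H+1$ is trivial, since both sides vanish.

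The main obstacle is small: it is keeping track of the fact that $D$ and $E$ are defined as $0$ off viable states. The identities go through because every successor of a safe action is viable, so the zero-extension is never actually used inside $P_h(\cdot\mid s,a)$.
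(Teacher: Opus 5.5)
Your proof is correct and follows the same route as the paper's. The steps match: safety of $\bar\pi^k$ because it selects safe actions at viable states, agreement of actions before $\tau_k$, the safe Bellman inequality~\eqref{eq:safe-bellman-publication} for $D$, the Bellman equation of $\bar\pi^k$ for $E$, and optimism plus Lemma~\ref{lem:safe-comparator-publication} plus clipping for the ordering. Your observation that the zero extension of $D^k$ and $E^k$ is never used inside $P_h(\cdot\mid s,a)$ for a safe action makes explicit a detail the paper leaves implicit.
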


\begin{proof}
At every viable state, both cases in
\eqref{eq:safe-projection-publication} select an action in
$\cA_{\mathrm{safe},h}(s)$. The support condition therefore makes $\bar\pi^k$ safe, and its
definition makes it $\cF_1^k$-measurable. For $h<\tau_k$, its action
agrees with the executed action, so the coupled trajectories agree
through the state at step $\tau_k$. Applying
\eqref{eq:safe-bellman-publication} to the safe action $a_h^k$ gives
the inequality for $D_h^k$ in~\eqref{eq:gap-drifts-publication}.
The Bellman equation for $\bar\pi^k$ gives the equality for $E_h^k$.
On $\Est$, Lemmas~\ref{lem:optimism-publication} and
\ref{lem:safe-comparator-publication} give
$V_h^k\ge V_h^\star\ge V_h^{\bar\pi^k}$ on $\cV_{\mathrm{safe},h}$.
Clipping and nonnegative rewards give the upper bound.
\end{proof}

Define
\begin{align*}
B_K
&:=\sum_{k=1}^K\sum_{h<\tau_k}
\min\{\beta_h^k(s_h^k,a_h^k),H\},\\
\widetilde B_K
&:=\sum_{k=1}^K\sum_{h=\tau_k}^{H}
\min\{\widetilde\beta_h^k(s_h^k,a_h^k),H\}.
\end{align*}
The second inner sum is empty when $\tau_k=H+1$. To pass from expected
residuals to these realized sums, define the episode contribution
\begin{equation*}
Y_k
:=\sum_{h<\tau_k}\min\{\beta_h^k(s_h^k,a_h^k),H\}
+\sum_{h=\tau_k}^{H}
\min\{\widetilde\beta_h^k(s_h^k,a_h^k),H\},
\qquad 0\le Y_k\le H^2.
\end{equation*}

\begin{proposition}
\label{prop:deterministic-regret-reduction-publication}
On $\Est$, the reward regret satisfies
$\cR(K)\le\sum_{k=1}^K\E[Y_k\mid\cF_1^k]$.
\end{proposition}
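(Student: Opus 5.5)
We need $\cR(K)\le\sum_k \E[Y_k\mid\cF_1^k]$ on $\Est$. The plan is to bound each per-episode regret $V_1^\star(s_1)-V_1^{\pi^k}(s_1)$ by $V_1^k(s_1)-V_1^{\pi^k}(s_1)$ and then unroll this gap along the executed trajectory. Before $\tau_k$ we use the safe residual bound; from $\tau_k$ on we use the all-action bound.

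Fix $k$. Since $s_1\in\cV_{\mathrm{safe},1}$ by Assumption~\ref{ass:init}, Lemma~\ref{lem:optimism-publication} gives $V_1^\star(s_1)\le V_1^k(s_1)$. Define $G_h^k(s):=V_h^k(s)-V_h^{\pi^k}(s)$ for all $s\in\cS$, with $G_{H+1}^k\equiv0$. Using $V_h^k(s)=Q_h^k(s,\pi_h^k(s))$ and the Bellman equation for $\pi^k$, we get at the executed pair
\[
G_h^k(s_h^k)=f_h^k+P_hG_{h+1}^k(s_h^k,a_h^k).
\]
Conditional on $\cF_1^k$, the policy $\pi^k$ is fixed and the trajectory follows $\pi^k$. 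Hence iterating the tower property gives
\[
G_1^k(s_1)=\E\Big[\sum_{h=1}^H f_h^k\,\Big|\,\cF_1^k\Big].
\]
This is exact, so no martingale concentration is needed here.

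Next we bound $f_h^k$ pointwise. For $h<\tau_k$ the action $a_h^k$ is in $\cA_{\mathrm{safe},h}(s_h^k)$, so \eqref{eq:bellman-residual-publication} gives $f_h^k\le\beta_h^k(s_h^k,a_h^k)$. For $h\ge\tau_k$, \eqref{eq:all-action-residual-publication} gives $f_h^k\le\widetilde\beta_h^k(s_h^k,a_h^k)$. The truncation at $H$ is justified by a direct bound: $f_h^k\le Q_h^k-r_h\le H-h+1\le H$, using the clipping in the algorithm and $V_{h+1}^k\ge0$. The latter holds because $Q$ is built from nonnegative quantities, given $\widehat r,b\ge0$ and $V_{H+1}^k=0$ by induction. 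So $f_h^k\le\min\{\beta_h^k,H\}$ or $\min\{\widetilde\beta_h^k,H\}$ in the respective ranges.

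Summing over $h$, with $\tau_k$ a stopping time so the split is well defined pathwise, gives $\sum_h f_h^k\le Y_k$ pointwise. Taking conditional expectation and summing over $k$ yields the claim.

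The main subtlety is the measurability of $\beta_h^k$ and $\widetilde\beta_h^k$ given $\cF_1^k$. Both depend only on counts before episode $k$ and on $D_{h+1}^k,V_{h+1}^k$, which are fixed at the start of the episode. Thus the random parts inside the conditional expectation are only the trajectory and $\tau_k$, as intended. The second point to check is that the residual bounds of Lemma~\ref{lem:bellman-envelope-publication} hold simultaneously for all $(s,a,h,k)$ on $\Est$, not just in expectation. They do, so the pointwise substitution inside the expectation is valid on $\Est$.
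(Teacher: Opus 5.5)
Your proposal is correct and follows the paper's proof essentially step for step. It uses optimism at $s_1$, the exact Bellman identity $V_1^k(s_1)-V_1^{\pi^k}(s_1)=\E[\sum_h f_h^k\mid\cF_1^k]$, and the pathwise bound $\sum_h f_h^k\le Y_k$ via $\beta_h^k$ before $\tau_k$, $\widetilde\beta_h^k$ from $\tau_k$ on, and $f_h^k\le H$. One point of phrasing: the substitution inside $\E[\cdot\mid\cF_1^k]$ is justified because the episode-$k$ residual bounds at every state and action form an $\cF_1^k$-measurable condition implied by $\Est$ ($\Est$ itself also involves future episodes), so the bound holds for every continuation trajectory, as the paper states.
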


\begin{proof}
Fix the history at the start of episode $k$. The Bellman equations for
$\pi^k$, with $V_{H+1}^k=V_{H+1}^{\pi^k}=0$, give
\begin{equation}
V_1^k(s_1)-V_1^{\pi^k}(s_1)
=\E\!\left[\sum_{h=1}^H f_h^k\,\middle|\,\cF_1^k\right].
\label{eq:full-bellman-identity-publication}
\end{equation}
If the episode's estimates satisfy the inequalities defining $\Est$,
Lemma~\ref{lem:bellman-envelope-publication} bounds the residual at
every state and action. Hence, for every continuation trajectory,
$\sum_h f_h^k\le Y_k$: use $\beta_h^k$ before $\tau_k$,
$\widetilde\beta_h^k$ from $\tau_k$ onward, and $f_h^k\le H$ throughout.
On $\Est$, optimism therefore yields
\[
V_1^\star(s_1)-V_1^{\pi^k}(s_1)
\le V_1^k(s_1)-V_1^{\pi^k}(s_1)
\le\E[Y_k\mid\cF_1^k].
\]
Summing over $k$ proves the result.
\end{proof}

\begin{samepage}
\subsection{Variance Bounds}
\label{app:prefix-suffix-variance}

Proposition~\ref{prop:deterministic-regret-reduction-publication}
reduces regret to conditional expectations of residual sums.
We first control the visit-count and variance terms in these sums,
then collect the concentration bounds that relate them to their
conditional expectations. We write $\sum_{k,h}$ for
$\sum_{k=1}^K\sum_{h=1}^H$; a restriction such as $h<\tau_k$ is
applied within each episode.

The reward and optimal-value variances are fixed for each state,
action, and step, whereas the value-error variances can change between
episodes. The next lemma gives visit-count bounds for both types of weights.

\begin{lemma}[Visit-count sums]
\label{lem:visit-count-summation-publication}
Fix $(s,a,h)$ and enumerate its visits during the $K$ episodes.
Set $n_j:=\max\{j-1,1\}$, the visit count before visit $j$. For any subset $J$ of
these visit indices, write $m:=|J|$. Then
$\sum_{j\in J}n_j^{-1/2}\le2\sqrt m$ and
$\sum_{j\in J}n_j^{-1}\le L_K$.
\par\nopagebreak
Consequently, for any nonnegative weights $v_h(s,a)$ fixed across
episodes and any nonnegative weights $z_h^k(s,a)$, the following
bounds hold for every realized interaction sequence:
\begin{align*}
\sum_{k,h<\tau_k}
\sqrt{\frac{v_h(s_h^k,a_h^k)}{n_h^k(s_h^k,a_h^k)}}
&\le2\sqrt{SAH
\sum_{k,h<\tau_k}v_h(s_h^k,a_h^k)},\\
\sum_{k,h<\tau_k}
\sqrt{\frac{z_h^k(s_h^k,a_h^k)}{n_h^k(s_h^k,a_h^k)}}
&\le\sqrt{SAHL_K
\sum_{k,h<\tau_k}z_h^k(s_h^k,a_h^k)},\\
\sum_{k,h<\tau_k}
\frac1{n_h^k(s_h^k,a_h^k)}
&\le SAHL_K.
\end{align*}
The same bounds hold with $h\ge\tau_k$.
\end{lemma}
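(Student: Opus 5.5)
The plan is to prove the per-$(s,a,h)$ scalar bounds first and then aggregate them with Cauchy--Schwarz. Fix $(s,a,h)$ and list its visits as $j=1,2,\ldots$. Before visit $j$ the count is $j-1$, so $n_j=\max\{j-1,1\}$, and $n_1=n_2=1$ while $n_j=j-1$ for $j\ge3$.

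For any index set $J$ of size $m$, the map $j\mapsto n_j^{-1/2}$ is nonincreasing. The sum over $J$ is therefore largest when $J=\{1,\ldots,m\}$, which gives
\[
\sum_{j\in J} n_j^{-1/2}\le 1+\sum_{i=1}^{m-1} i^{-1/2}\le 1+2\sqrt{m-1}-1\le 2\sqrt m,
\]
using $\sum_{i\le t}i^{-1/2}\le 2\sqrt t-1$. The same rearrangement gives
\[
\sum_{j\in J}n_j^{-1}\le 1+\sum_{i=1}^{K-1} i^{-1}\le 2+\log K=L_K,
\]
since each triple is visited at most once per episode, so $m\le K$.

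\textbf{First aggregate bound.} Group the pairs $(k,h)$ with $h<\tau_k$ by the visited triple $(s,a,h)$, and let $J_{s,a,h}$ be the corresponding visit indices, of size $m_{s,a,h}$. The index $n_h^k(s_h^k,a_h^k)$ is exactly $n_j$ for that visit. Because $v_h(s,a)$ does not depend on $k$, it factors out of each group. Applying the first scalar bound gives
\[
\sum_{k,h<\tau_k}\sqrt{v/n}\le \sum_{(s,a,h)}\sqrt{v_h(s,a)}\,2\sqrt{m_{s,a,h}}.
\]
Cauchy--Schwarz over the at most $SAH$ triples bounds the right side by $2\sqrt{SAH\sum v_h(s,a)m_{s,a,h}}$, and $\sum v_h(s,a)m_{s,a,h}$ equals the realized sum $\sum_{k,h<\tau_k}v_h(s_h^k,a_h^k)$.

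\textbf{Second and third aggregate bounds.} When the weights $z_h^k$ vary with $k$ they cannot be factored out of a group, so I would apply Cauchy--Schwarz directly over all summands:
\[
\sum\sqrt{z/n}\le \sqrt{\textstyle\sum z}\,\sqrt{\textstyle\sum 1/n}.
\]
The factor $\sum_{k,h<\tau_k}1/n$ is, by the second scalar bound summed over triples, at most $SAHL_K$. This yields the second aggregate inequality and, on its own, the third.

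\textbf{The case $h\ge\tau_k$.} Nothing above uses the restriction $h<\tau_k$ beyond the fact that it selects a subset of visits. The complementary restriction $h\ge\tau_k$ also selects a subset, so the same argument applies verbatim. The only step needing care is the rearrangement for an arbitrary subset $J$, which holds because $n_j$ is nondecreasing in $j$; once that is in place the rest is routine.
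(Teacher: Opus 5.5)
Your proposal is correct and follows essentially the same route as the paper. You reduce each scalar bound to the first $m$ visits by monotonicity, factor out the fixed weights per triple before applying Cauchy--Schwarz over the $SAH$ triples, and apply Cauchy--Schwarz over all summands for the varying weights. The only slip is that $\sum_{i\le t}i^{-1/2}\le 2\sqrt t-1$ fails at $t=0$, so the case $m\le 1$ should be handled separately; there the bound is immediate, which is how the paper treats it.
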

\end{samepage}

\begin{proof}
Both reciprocal sequences are nonincreasing, so the sum over a subset
of size $m$ is maximized by the first $m$ visits.  For $m\ge2$,
\[
2+\sum_{q=2}^{m-1}q^{-1/2}
\le2+\int_1^{m-1}x^{-1/2}\,dx
=2\sqrt{m-1}\le2\sqrt m.
\]
Similarly, $m\le K$ gives
$2+\sum_{q=2}^{m-1}q^{-1}\le2+\log K=L_K$.
The case $m\le1$ is immediate. For the fixed weights $v_h(s,a)$,
first apply the bound $2\sqrt m$ at each $(s,a,h)$, and then apply
Cauchy--Schwarz over the $SAH$ choices. For the varying weights,
apply Cauchy--Schwarz to all visited terms and use
$\sum_{k,h<\tau_k}1/n_h^k(s_h^k,a_h^k)\le SAHL_K$.
This proves all three bounds and also applies to $h\ge\tau_k$.
\end{proof}

After summing the visit-count factors, we need bounds on the
next-state variances of $D^k$ and $E^k$. The next lemma applies to
bounded nonnegative functions. Telescoping their squares before
$\tau_k$ leaves a drift term and a boundary term at $\tau_k$.
We will apply it to the positive parts of $D^k$ and $E^k$.

\begin{lemma}
\label{lem:safe-prefix-square-publication}
For each $k\in[K]$, let $g_h^k:\cS\to[0,H]$, $h\in[H+1]$, be
$\cF_1^k$-measurable functions. Assume $g_{H+1}^k\equiv0$ and
$g_h^k(s)=0$ outside $\cV_{\mathrm{safe},h}$.
Let $u_h^k$, $k\in[K]$ and $h\in[H]$, be random variables.
Suppose that, almost surely, for every $k\in[K]$ and
$h\in[H]$ with $h<\tau_k$, $0\le u_h^k\le H$ and
\[
g_h^k(s_h^k)-P_hg_{h+1}^k(s_h^k,a_h^k)\le u_h^k.
\]
Then, for every $\eta\in(0,1)$, with probability at least $1-\eta$,
\begin{align}
&\sum_{k,h<\tau_k}
\Var(P_h(\cdot\mid s_h^k,a_h^k),g_{h+1}^k)\le
\mathcal{O}\!\left(
H\sum_{k,h<\tau_k}u_h^k
+H^2\sum_{k=1}^K\Ind\{\tau_k\le H\}
+H^2\log\frac{KH+1}{\eta}
\right).
\label{eq:safe-prefix-square-publication}
\end{align}
\end{lemma}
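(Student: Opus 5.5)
We use the standard square-telescoping argument for total variance, truncated at the stopping time $\tau_k$. Write $s=s_h^k$, $a=a_h^k$, and $g_h=g_h^k$ within episode $k$. The starting identity is
\[
\Var(P_h(\cdot\mid s,a),g_{h+1})
=P_h g_{h+1}^2(s,a)-\bigl(P_hg_{h+1}(s,a)\bigr)^2 .
\]
We add and subtract the realized quantities $g_{h+1}^2(s_{h+1}^k)$ and $g_h^2(s_h^k)$ to split the variance as
\[
\Var(P_h,g_{h+1})
=\bigl[P_hg_{h+1}^2-g_{h+1}^2(s_{h+1}^k)\bigr]
+\bigl[g_{h+1}^2(s_{h+1}^k)-g_h^2(s_h^k)\bigr]
+\bigl[g_h^2(s_h^k)-(P_hg_{h+1})^2\bigr].
\]
We then sum over $h<\tau_k$ and bound the three brackets separately.

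The middle bracket telescopes to $g_{\tau_k}^2(s_{\tau_k}^k)-g_1^2(s_1)$. If $\tau_k=H+1$, this is at most $0$ because $g_{H+1}\equiv 0$. Otherwise it is at most $H^2$, which gives the $H^2\sum_k\Ind\{\tau_k\le H\}$ term.

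For the third bracket, we use $x^2-y^2=(x+y)(x-y)$ with $x=g_h(s_h^k)\in[0,H]$ and $y=P_hg_{h+1}\in[0,H]$. If $x\le y$, the term is nonpositive. Otherwise it is at most $2H(x-y)\le 2Hu_h^k$ by the drift hypothesis. This gives $\mathcal O(H\sum u_h^k)$.

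The first bracket is a martingale difference sum, which needs some care because the sum is truncated at $\tau_k$. We define $\xi_{k,h}:=\Ind\{h<\tau_k\}\bigl[P_hg_{h+1}^2(s_h^k,a_h^k)-g_{h+1}^2(s_{h+1}^k)\bigr]$ and order the terms by episode and step. The event $\{h<\tau_k\}$ is $\cF_h^k$-measurable, since $\tau_k$ is a stopping time and $a_h^k$ is $\cF_h^k$-measurable. The functions $g^k$ are $\cF_1^k$-measurable. Hence $\E[\xi_{k,h}\mid\cF_h^k]=0$, and $|\xi_{k,h}|\le H^2$.

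Applying Lemma~\ref{lem:freedman-publication} with $T=KH$ gives, with probability $1-\eta$,
\[
\sum\xi\le\mathcal O\bigl(\sqrt{W\log((KH+1)/\eta)}+H^2\log((KH+1)/\eta)\bigr),
\]
where $W$ is the conditional variance sum. Each conditional variance is at most $\Var(P_h,g_{h+1}^2)\le 4H^2\Var(P_h,g_{h+1})$, because $x\mapsto x^2$ is $2H$-Lipschitz on $[0,H]$. So $W\le 4H^2 V$, where $V$ denotes the left-hand side of~\eqref{eq:safe-prefix-square-publication}.

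Combining the three bounds gives
\[
V\le \mathcal O\bigl(H\textstyle\sum u+H^2\sum_k\Ind\{\tau_k\le H\}+H\sqrt{V\log}+H^2\log\bigr).
\]
Young's inequality absorbs $H\sqrt{V\log}\le V/2+\mathcal O(H^2\log)$, which yields the stated bound.

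The main obstacle is the martingale step. We must check that the truncation indicator and the random functions $g^k$ are predictable, so that $\xi_{k,h}$ is a genuine martingale difference. We must also reduce the variance of $g^2$ to the variance of $g$ so that the self-bounding (Young) argument closes. The assumption that $g$ vanishes outside $\cV_{\mathrm{safe}}$ is not needed beyond boundedness.
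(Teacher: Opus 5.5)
Your proposal is correct and follows essentially the same route as the paper: the same truncated martingale differences, the same telescoping of squares up to $\tau_k$ with a boundary term of at most $H^2$ and the drift bound $x^2-y^2\le 2H[x-y]_+\le 2Hu_h^k$, the same reduction $\Var(G^2)\le 4H^2\Var(G)$ (your Lipschitz argument is the paper's independent-copy computation), and Freedman's inequality followed by Young's inequality to absorb the square-root term. Your closing remark is also accurate: the paper's proof likewise never uses that $g_h^k$ vanishes outside $\cV_{\mathrm{safe},h}$.
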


\begin{proof}
Set
$I_h^k=\Ind\{h<\tau_k\}$ and
$q_h^k=P_hg_{h+1}^k(s_h^k,a_h^k)$. The indicator $I_h^k$ is measurable
after the step-$h$ action is chosen and before the successor is generated.
In episode and step order, define the martingale differences
\[
\xi_h^k
:=I_h^k\left\{
P_h[(g_{h+1}^k)^2](s_h^k,a_h^k)
-(g_{h+1}^k(s_{h+1}^k))^2
\right\}.
\]
Within episode $k$, telescoping gives
\begin{align*}
&\sum_{h<\tau_k}\left\{(g_{h+1}^k(s_{h+1}^k))^2-(q_h^k)^2\right\}\\
&\quad=\sum_{h<\tau_k}\left\{(g_h^k(s_h^k))^2-(q_h^k)^2\right\}
-(g_1^k(s_1))^2+(g_{\tau_k}^k(s_{\tau_k}^k))^2\\
&\quad\le2H\sum_{h<\tau_k}u_h^k+H^2\Ind\{\tau_k\le H\}.
\end{align*}
Indeed, $(g_h^k)^2-(q_h^k)^2\le2H[g_h^k-q_h^k]_+\le2Hu_h^k$.
The terminal square is zero when $\tau_k=H+1$ and at most $H^2$
otherwise. When $\tau_k=1$, both sums are empty and the two boundary squares cancel.

Let
\[
Z_g:=\sum_{k,h<\tau_k}
\Var(P_h(\cdot\mid s_h^k,a_h^k),g_{h+1}^k).
\]
Adding the martingale differences to the deterministic sum above
gives
\[
Z_g
\le\sum_{k,h}\xi_h^k
+2H\sum_{k,h<\tau_k}u_h^k
+H^2\sum_k\Ind\{\tau_k\le H\}.
\]
The increments have absolute value at most $H^2$. For any
$G\in[0,H]$ and an independent copy $G'$, the variance identity gives
\[
\Var(G^2)=\frac12\E[(G^2-(G')^2)^2]
\le2H^2\E[(G-G')^2]=4H^2\Var(G).
\]
Apply this conditionally to $G=g_{h+1}^k(s_{h+1}^k)$. The
predictable quadratic variation of the martingale is therefore at most
$4H^2Z_g$.  Lemma~\ref{lem:freedman-publication} yields
\[
\sum_{k,h}\xi_h^k
\le \mathcal{O}\!\left(
H\sqrt{Z_g\log\frac{KH+1}{\eta}}
+H^2\log\frac{KH+1}{\eta}
\right).
\]
Young's inequality absorbs the square-root term into $Z_g/2$ and proves
\eqref{eq:safe-prefix-square-publication}.
\end{proof}

For the steps from $\tau_k$ onward, we telescope the squares of $V^k$
up to step $H+1$. The terminal value is zero, and the square at
$\tau_k$ enters with a negative sign, so no positive boundary term remains.

\begin{lemma}
\label{lem:suffix-square-publication}
There exists an event $\cE_{\mathrm{suf}}$ with
$\Prob(\cE_{\mathrm{suf}})\ge1-\delta/16$.
On $\Est\cap\cE_{\mathrm{suf}}$, the accumulated variance satisfies
\begin{align}
&\sum_{k=1}^K\sum_{h=\tau_k}^{H}
\Var(P_h(\cdot\mid s_h^k,a_h^k),V_{h+1}^k)\le
\mathcal{O}\!\left(
H\widetilde B_K
+H^2\sum_{k=1}^K\Ind\{\tau_k\le H\}
+H^2\iota
\right).
\label{eq:suffix-square-publication}
\end{align}
\end{lemma}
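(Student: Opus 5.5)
We mirror the proof of Lemma~\ref{lem:safe-prefix-square-publication}, but telescope the squares of $V_h^k$ over the suffix $h\ge\tau_k$. Set $J_h^k:=\Ind\{h\ge\tau_k\}$; it is determined once the step-$h$ action is chosen, because $\{\tau_k\le h\}\in\cF_h^k$. In episode and step order, define
\[
\xi_h^k:=J_h^k\bigl\{P_h[(V_{h+1}^k)^2](s_h^k,a_h^k)-(V_{h+1}^k(s_{h+1}^k))^2\bigr\}.
\]
These are martingale differences, since $V_{h+1}^k$ is $\cF_1^k$-measurable and $0\le V_{h+1}^k\le H$. Write $q_h^k:=P_hV_{h+1}^k(s_h^k,a_h^k)$, so that $\Var(P_h(\cdot\mid s_h^k,a_h^k),V_{h+1}^k)=P_h[(V_{h+1}^k)^2]-(q_h^k)^2$. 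The suffix variance sum $Z$ then equals $\sum_{k,h}\xi_h^k+\sum_k\sum_{h\ge\tau_k}\{(V_{h+1}^k(s_{h+1}^k))^2-(q_h^k)^2\}$.

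Within each episode with $\tau_k\le H$, telescoping gives
\[
\sum_{h=\tau_k}^H\{(V_{h+1}^k(s_{h+1}^k))^2-(q_h^k)^2\}=\sum_{h=\tau_k}^H\{(V_h^k(s_h^k))^2-(q_h^k)^2\}-(V_{\tau_k}^k(s_{\tau_k}^k))^2,
\]
using $V_{H+1}^k=0$. The boundary term is nonpositive and can be dropped. For each summand, note $V_h^k(s_h^k)=Q_h^k(s_h^k,a_h^k)$, since the executed action is $\pi_h^k(s_h^k)$. Hence $V_h^k(s_h^k)-q_h^k=f_h^k\le\widetilde\beta_h^k(s_h^k,a_h^k)$ on $\Est$, by~\eqref{eq:all-action-residual-publication}. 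Since also $f_h^k\le H$, we get $(V_h^k)^2-(q_h^k)^2\le 2H[f_h^k]_+\le 2H\min\{\widetilde\beta_h^k,H\}$. Summing yields at most $2H\widetilde B_K$. This part is pathwise on $\Est$, and episodes with $\tau_k=H+1$ contribute nothing.

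For the martingale part, we reuse the variance identity from the proof of Lemma~\ref{lem:safe-prefix-square-publication}: for $G\in[0,H]$, $\Var(G^2)\le4H^2\Var(G)$. Thus the predictable quadratic variation is at most $4H^2Z$, and each increment is bounded by $H^2$. Lemma~\ref{lem:freedman-publication} with $\eta=\delta/16$ and $T=KH$ defines $\cE_{\mathrm{suf}}$, and on this event
\[
\sum\xi\le\mathcal O\bigl(H\sqrt{Z\iota}+H^2\iota\bigr),
\]
using $\log((KH+1)/\eta)\le2\iota$. Young's inequality absorbs $H\sqrt{Z\iota}$ into $Z/2+\mathcal O(H^2\iota)$, which gives $Z\le\mathcal O(H\widetilde B_K+H^2\iota)$. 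The stated term $H^2\sum_k\Ind\{\tau_k\le H\}$ is then harmless slack, and it is available if a boundary term must be kept.

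The main obstacle is the measurability bookkeeping. We must check that $J_h^k$ is predictable at the step-$h$ transition and that Freedman's inequality applies to the random-length suffix. A subtler point is that the residual bound~\eqref{eq:all-action-residual-publication} only holds on $\Est$, while Freedman's inequality needs an unconditional martingale. We resolve this by applying Freedman to the unconditional differences, which need no event. The residual bound is used only in the pathwise deterministic part, after intersecting with $\Est$.
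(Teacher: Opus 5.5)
Your skeleton matches the paper's proof: the same martingale differences $\xi_h^k$, the $4H^2Z$ bound on their quadratic variation, Freedman with $\eta=\delta/16$, Young's inequality, and dropping the nonpositive boundary square at $\tau_k$. The gap is in the drift step. By definition $f_h^k=V_h^k(s_h^k)-r_h(s_h^k,a_h^k)-P_hV_{h+1}^k(s_h^k,a_h^k)$, so
\[
V_h^k(s_h^k)-q_h^k=r_h(s_h^k,a_h^k)+f_h^k ,
\]
and not $f_h^k$; you dropped the reward. The bound \eqref{eq:all-action-residual-publication} controls only $f_h^k$. Your inequality $(V_h^k)^2-(q_h^k)^2\le2H[f_h^k]_+$ is therefore false: with $f_h^k=0$ and $r_h=1$ the left side equals $2q_h^k+1$. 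Consequently your conclusion $Z\le\mathcal O(H\widetilde B_K+H^2\iota)$ is unsupported.

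That stronger bound also should not be expected to hold in general. On a well-visited suffix the bonuses $\widetilde\beta_h^k$ become small, while the suffix variance sum tracks the variance of the rewards collected after $\tau_k$. A single fair coin flip between an all-reward branch and an all-zero branch makes that variance of order $H^2$ in every departing episode.

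So $H^2\sum_k\Ind\{\tau_k\le H\}$ is neither slack nor a boundary term; it pays for the rewards collected after $\tau_k$. The repair is the paper's. On $\Est$, use $r_h\le1$, $f_h^k\le\widetilde\beta_h^k(s_h^k,a_h^k)$, and $V_h^k(s_h^k)-q_h^k\le H$. These give $[V_h^k(s_h^k)-q_h^k]_+\le u_h^k:=\min\{H,1+\min\{\widetilde\beta_h^k(s_h^k,a_h^k),H\}\}$, and hence $(V_h^k)^2-(q_h^k)^2\le 2Hu_h^k$. Each departing episode has at most $H$ suffix steps, so $\sum_{k,h}J_h^ku_h^k\le\widetilde B_K+H\sum_k\Ind\{\tau_k\le H\}$. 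With this corrected drift bound, the rest of your argument (adding the martingale part and absorbing $H\sqrt{Z\iota}$ by Young) goes through unchanged and yields exactly the stated inequality.
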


\begin{proof}
Set $J_h^k:=\Ind\{\tau_k\le h\}$,
$q_h^k:=P_hV_{h+1}^k(s_h^k,a_h^k)$, and
\[
u_h^k:=\min\!\left\{H,
1+\min\{\widetilde\beta_h^k(s_h^k,a_h^k),H\}\right\}.
\]
The indicators $J_h^k$ are predictable, and $V^k$ is fixed before
episode $k$. In episode and step order, define the martingale differences
$\xi_h^k
:=J_h^k\left\{
P_h[(V_{h+1}^k)^2](s_h^k,a_h^k)
-(V_{h+1}^k(s_{h+1}^k))^2
\right\}$.
Let $Z$ denote the left-hand side of
\eqref{eq:suffix-square-publication}. The increments are bounded by
$H^2$. As in the proof of Lemma~\ref{lem:safe-prefix-square-publication},
their conditional variance sum is at most $4H^2Z$.
Apply Lemma~\ref{lem:freedman-publication} with failure probability
$\delta/16$, and let $\cE_{\mathrm{suf}}$ be the resulting event.
Since $\log(16(KH+1)/\delta)\le2\iota$, on $\cE_{\mathrm{suf}}$ we have
\[
\sum_{k,h}\xi_h^k
\le\mathcal{O}\!\left(H\sqrt{Z\iota}+H^2\iota\right).
\]

We now bound the drift on $\Est$. The residual bound
\eqref{eq:all-action-residual-publication}, $r_h\le1$, and clipping give
$V_h^k(s_h^k)-q_h^k\le u_h^k$. By the definition of $u_h^k$,
\begin{equation}
\sum_{k,h}J_h^ku_h^k
\le
\widetilde B_K
+H\sum_{k=1}^K\Ind\{\tau_k\le H\}.
\label{eq:suffix-drift-publication}
\end{equation}
If $\tau_k\le H$, telescoping gives
\begin{align*}
&\sum_{h=\tau_k}^{H}
\left\{(V_{h+1}^k(s_{h+1}^k))^2-(q_h^k)^2\right\}\\
&\quad=
\sum_{h=\tau_k}^{H}
\left\{(V_h^k(s_h^k))^2-(q_h^k)^2\right\}
-(V_{\tau_k}^k(s_{\tau_k}^k))^2
+(V_{H+1}^k(s_{H+1}^k))^2\\
&\quad\le2H\sum_{h=\tau_k}^{H}u_h^k.
\end{align*}
The inequality follows from
$(V_h^k)^2-(q_h^k)^2\le2H[V_h^k-q_h^k]_+\le2Hu_h^k$
and $V_{H+1}^k\equiv0$. For $\tau_k=H+1$, the sum is empty.
Adding the martingale differences and summing over episodes gives
$Z\le\sum_{k,h}\xi_h^k+2H\sum_{k,h}J_h^ku_h^k$.
On $\Est\cap\cE_{\mathrm{suf}}$, Young's inequality absorbs the
term containing $\sqrt Z$ into $Z/2$ and gives
\[
Z\le \mathcal{O}\!\left(H\sum_{k,h}J_h^ku_h^k+H^2\iota\right).
\]
Substituting~\eqref{eq:suffix-drift-publication} proves the claim.
\end{proof}

We next apply the preceding bounds to the residual sums and the
auxiliary safe policy. Define the accumulated next-state variances of
the two value differences by
\begin{align*}
Z_D
&:=\sum_{k,h<\tau_k}
\Var(P_h(\cdot\mid s_h^k,a_h^k),D_{h+1}^k),\\
Z_E
&:=\sum_{k,h<\tau_k}
\Var(P_h(\cdot\mid s_h^k,a_h^k),E_{h+1}^k).
\end{align*}
The auxiliary policy relates the variance accumulated before $\tau_k$
to a safe-policy return variance. For each episode, define
\begin{equation}
A_k
:=\sum_{h<\tau_k}
\left\{
\Var(R_h(s_h^k,a_h^k))
+\Var(P_h(\cdot\mid s_h^k,a_h^k),V_{h+1}^{\bar\pi^k})
\right\}.
\label{eq:AK-publication}
\end{equation}

The next lemma converts the expected residuals to realized sums and
bounds the accumulated variances needed in the regret proof.

\begin{samepage}
\begin{lemma}
\label{lem:martingale-event-publication}
There exists an event $\Mart$ with $\Prob(\Mart)\ge1-5\delta/16$
such that both of the following bounds hold on $\Mart$:
\begin{align}
\sum_{k=1}^K\E[Y_k\mid\cF_1^k]
&\le2(B_K+\widetilde B_K)+\mathcal{O}(H^2\iota),
\label{eq:Y-conversion-publication}\\
\sum_{k=1}^KA_k
&\le2K\VarStar+\mathcal{O}(H^3\iota).
\label{eq:AK-bound-publication}
\end{align}

On $\Est\cap\Mart$, the value-error variances also satisfy
\begin{equation}
Z_D+Z_E
\le
\mathcal{O}\!\left(
HB_K
+H^2\sum_{k=1}^K\Ind\{\tau_k\le H\}
+H^2\iota
\right),
\label{eq:two-square-publication}
\end{equation}
and \eqref{eq:suffix-square-publication} holds for the steps
$h\ge\tau_k$.
\end{lemma}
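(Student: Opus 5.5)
We build $\Mart$ as the intersection of four or five applications of Lemma~\ref{lem:freedman-publication}, each at failure probability $\delta/16$, together with the event $\cE_{\mathrm{suf}}$ of Lemma~\ref{lem:suffix-square-publication}. A union bound then gives $\Prob(\Mart)\ge1-5\delta/16$. Throughout we use $\log((KH+1)/\eta)\le2\iota$.

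\emph{Bound \eqref{eq:Y-conversion-publication}.} Apply \eqref{eq:predictable-upper-publication} to $Y_k\in[0,H^2]$ in episode order with filtration $(\cF_1^k)$. Here $Y_k$ is $\cF_1^{k+1}$-measurable, and for the last episode we use $\cF_{H+1}^K$. This gives $\sum_k\E[Y_k\mid\cF_1^k]\le2\sum_kY_k+\mathcal O(H^2\iota)$, and $\sum_kY_k=B_K+\widetilde B_K$ by definition.

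\emph{Bound \eqref{eq:AK-bound-publication}.} Apply \eqref{eq:realized-upper-publication} to $A_k$. Each summand of $A_k$ is at most $1+H^2/4$, so $0\le A_k\le\mathcal O(H^3)$. This gives $\sum_kA_k\le2\sum_k\E[A_k\mid\cF_1^k]+\mathcal O(H^3\iota)$.

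It remains to compare $\E[A_k\mid\cF_1^k]$ with the return variance of $\bar\pi^k$. By Lemma~\ref{lem:safe-projection-publication}, the coupled trajectories of $\pi^k$ and $\bar\pi^k$ agree through step $\tau_k$. Hence $A_k$ equals the truncation of the corresponding sum along the $\bar\pi^k$-trajectory at the stopping time $\tau_k$, and the summands are nonnegative. Dropping the truncation gives
\[
\E[A_k\mid\cF_1^k]\le\E^{\bar\pi^k}\Bigl[\textstyle\sum_{h=1}^H\{\Var(R_h)+\Var(P_h,V^{\bar\pi^k}_{h+1})\}\Bigm| s_1\Bigr]\le\VarStar,
\]
because $\bar\pi^k\in\Pi_{\mathrm{safe}}$ is $\cF_1^k$-measurable. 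The step to be careful about is that $\tau_k$ is measured using $\pi^k$. Under the coupling, however, it is a stopping time for the $\bar\pi^k$ trajectory as well, so the inequality holds pathwise.

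\emph{Bound \eqref{eq:two-square-publication}.} Apply Lemma~\ref{lem:safe-prefix-square-publication} twice, with $\eta=\delta/16$ each time.
\begin{itemize}
\item For $Z_D$, take $g^k=D^k$ and $u_h^k=\min\{f_h^k,H\}$ (nonnegative part). On $\Est$, \eqref{eq:gap-drifts-publication} and \eqref{eq:bellman-residual-publication} give $D_h^k(s_h^k)-P_hD^k_{h+1}\le f_h^k\le\beta_h^k$. Since $D\le H$, we may take $u_h^k=\min\{\beta_h^k,H\}$.
\item For $Z_E$, use the equality in \eqref{eq:gap-drifts-publication} with the same $u$. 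Both $D$ and $E$ lie in $[0,H]$ on $\Est$, vanish outside viable states, and are $\cF_1^k$-measurable.
\end{itemize}
There is a technicality: the hypothesis of the lemma must hold almost surely. We handle this by replacing $g^k$ with $0$ in episodes whose estimates violate the inequalities of $\Est$, a condition that is $\cF_1^k$-measurable, and then arguing on $\Est$. Summing the two outcomes gives $\mathcal O(HB_K+H^2\sum_k\Ind\{\tau_k\le H\}+H^2\iota)$.

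Finally, \eqref{eq:suffix-square-publication} holds on $\Est\cap\Mart$ because $\cE_{\mathrm{suf}}\subseteq\Mart$.

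The main obstacles are the coupling/stopping-time argument in \eqref{eq:AK-bound-publication} and the measurability technicality in \eqref{eq:two-square-publication}. The rest of the proof is bookkeeping of failure probabilities.
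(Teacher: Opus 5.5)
Your proposal is correct and follows the paper's proof. It uses the same five events: two Freedman applications, for $Y_k$ and for $A_k$; two applications of Lemma~\ref{lem:safe-prefix-square-publication}; and $\cE_{\mathrm{suf}}$. It also uses the same coupling with $\bar\pi^k$ to get $\E[A_k\mid\cF_1^k]\le\VarStar$, and the same reduction of $Z_D+Z_E$ to $B_K$.

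The only real difference is how you satisfy the almost-sure hypothesis of Lemma~\ref{lem:safe-prefix-square-publication}. You zero out $g^k$ on the $\cF_1^k$-measurable event that episode $k$'s estimates violate the $\Est$ inequalities. This is valid, because optimism and the residual bounds at episode $k$ use only those inequalities. The paper instead applies the lemma to $[D^k]_+$ and $[E^k]_+$ with $u_h^k=[f_h^k]_+$, via $[x]_+-[y]_+\le[x-y]_+$ and $P_h[g]_+\ge[P_hg]_+$; this holds on every sample path without a per-episode good event. One small slip: the inclusion you need at the end is $\Mart\subseteq\cE_{\mathrm{suf}}$, not the reverse.
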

\end{samepage}

\begin{proof}
For the expected residual sum, apply
\eqref{eq:predictable-upper-publication} to $Y_k$ with $B=H^2$
and failure probability $\delta/16$, and let $\cE_Y$ be the
resulting event. Since $\sum_kY_k=B_K+\widetilde B_K$,
\eqref{eq:Y-conversion-publication} holds on $\cE_Y$, with
$\Prob(\cE_Y)\ge1-\delta/16$.

To control the safe-policy variance, condition on $\cF_1^k$ and
couple $\pi^k$ with $\bar\pi^k$ using the same transition randomness.
Both policies are fixed given this history.
Their states and actions agree for $h<\tau_k$. Since the summands
in~\eqref{eq:AK-publication} are nonnegative, $A_k$ is at most the
full variance sum along the trajectory of $\bar\pi^k$. Hence
\[
\E[A_k\mid\cF_1^k]
\le\E^{\bar\pi^k}\!\left[
\sum_{h=1}^H\left\{\Var(R_h(s_h,a_h))+
\Var(P_h(\cdot\mid s_h,a_h),V_{h+1}^{\bar\pi^k})\right\}
\,\middle|\,s_1\right]
\le\VarStar.
\]
Iterating the Bellman variance decomposition identifies the middle
expectation with the return variance of $\bar\pi^k$. The decomposition
uses the conditional independence of the reward and successor state. Each reward variance is at most
$1/4$, and each step-$h$ next-state value variance is at most $(H-h)^2/4$.
Thus $0\le A_k\le\frac14(H+\sum_{j=0}^{H-1}j^2)\le H^3/3$.
Apply~\eqref{eq:realized-upper-publication} with $B=H^3/3$
and failure probability $\delta/16$, and let $\cE_A$ be the resulting
event. On $\cE_A$,
\[
\sum_{k=1}^KA_k\le2\sum_{k=1}^K\E[A_k\mid\cF_1^k]+\mathcal{O}(H^3\iota)
\le2K\VarStar+\mathcal{O}(H^3\iota).
\]
Thus $\Prob(\cE_A)\ge1-\delta/16$.

For the value-error variances, first consider $[D_h^k]_+$ and
$[E_h^k]_+$. These functions are $\cF_1^k$-measurable, lie in
$[0,H]$ on every sample path, and vanish at step $H+1$. The drift relations in
Lemma~\ref{lem:safe-projection-publication}, together with
$[x]_+-[y]_+\le[x-y]_+$ and
$P_h[g_{h+1}]_+\ge[P_hg_{h+1}]_+$, give, for $h<\tau_k$,
\begin{align*}
[D_h^k(s_h^k)]_+-P_h[D_{h+1}^k]_+(s_h^k,a_h^k)
&\le[f_h^k]_+,\\
[E_h^k(s_h^k)]_+-P_h[E_{h+1}^k]_+(s_h^k,a_h^k)
&\le[f_h^k]_+.
\end{align*}
Since $0\le[f_h^k]_+\le H$, the assumptions of
Lemma~\ref{lem:safe-prefix-square-publication} hold with
$u_h^k=[f_h^k]_+$ on every sample path.
Apply Lemma~\ref{lem:safe-prefix-square-publication} with
$g_h^k=[D_h^k]_+$ and $g_h^k=[E_h^k]_+$, using this $u_h^k$ and
$\eta=\delta/16$. Let $\cE_D$ and $\cE_E$ be the respective events.
Each has probability at least $1-\delta/16$.
On $\Est$, Lemma~\ref{lem:safe-projection-publication} gives
$D^k,E^k\ge0$, and Lemma~\ref{lem:bellman-envelope-publication} gives
$[f_h^k]_+\le\min\{\beta_h^k(s_h^k,a_h^k),H\}$. Hence
\eqref{eq:two-square-publication} holds on
$\Est\cap\cE_D\cap\cE_E$.

Set $\Mart:=\cE_Y\cap\cE_A\cap\cE_D\cap\cE_E\cap\cE_{\mathrm{suf}}$,
where $\cE_{\mathrm{suf}}$ is supplied by
Lemma~\ref{lem:suffix-square-publication}.
A union bound gives $\Prob(\Mart)\ge1-5\delta/16$.
\end{proof}

\subsection{Proof of Theorem~\ref{thm:regret}}
\label{app:main-completion}

The preceding bounds reduce the proof to controlling $B_K$ and
$\widetilde B_K$. We bound $B_K$ using each of the two variance
quantities and take the smaller bound. The number of episodes
containing an unsafe action controls the remaining sum $\widetilde B_K$.

Define $\MainEvent:=\Est\cap\Cert\cap\Mart$.
Lemmas~\ref{lem:estimation-event-publication},
\ref{lem:false-candidate-control}, and
\ref{lem:martingale-event-publication} give
\begin{equation}
\Prob(\MainEvent)
\ge1-\frac{9+1+5}{16}\delta
\ge1-\delta.
\label{eq:master-event-prob-publication}
\end{equation}
We work on $\MainEvent$ throughout this subsection.

Proposition~\ref{prop:deterministic-regret-reduction-publication} and
\eqref{eq:Y-conversion-publication} give
\begin{equation}
\cR(K)
\le2(B_K+\widetilde B_K)+\mathcal{O}(H^2\iota).
\label{eq:full-regret-interface-publication}
\end{equation}
\paragraph{Bounding $B_K$}
Combining~\eqref{eq:two-square-publication} with
\eqref{eq:departure-realized-publication} yields
\begin{equation}
Z_D+Z_E
\le
\mathcal{O}\!\left(HB_K+H^2\iota
\left[\min\{K,\cC\}+1\right]\right).
\label{eq:two-square-closed-publication}
\end{equation}

For the steps $h<\tau_k$,
$V_{h+1}^\star=V_{h+1}^{\bar\pi^k}+E_{h+1}^k-D_{h+1}^k$.
The successor belongs to $\cV_{\mathrm{safe},h+1}$, so this identity holds on the
support of the executed transition. Applying
$\Var(X+Y+Z)\le3\{\Var(X)+\Var(Y)+\Var(Z)\}$ and including the
nonnegative reward variances gives
\begin{equation}
\Var_\Sigma\le3\left(\sum_{k=1}^KA_k+Z_D+Z_E\right).
\label{eq:reference-variance-publication}
\end{equation}
Lemma~\ref{lem:visit-count-summation-publication} and
\eqref{eq:beta-publication} give
\begin{equation}
B_K
\le \mathcal{O}\!\left(
\sqrt{SAH\iota\,\Var_\Sigma}
+\sqrt{S^2AH\iota L_K Z_D}
+S^2AH^2\iota L_K
\right).
\label{eq:BK-variance-interface-publication}
\end{equation}
For the bound in terms of $\Var_\Sigma$, substitute
\eqref{eq:two-square-closed-publication} into
\eqref{eq:BK-variance-interface-publication} to obtain
\begin{align*}
B_K\le \mathcal{O}\!\Bigl(&\sqrt{SAH\iota\,\Var_\Sigma}
+S\sqrt{AH^2\iota L_K B_K}\\
&+S\sqrt{AH^3\min\{K,\cC\}}\,\iota\sqrt{L_K}
+S^2AH^2\iota L_K\Bigr).
\end{align*}
Young's inequality bounds the term containing $\sqrt{B_K}$ by
$B_K/2+\mathcal{O}(S^2AH^2\iota L_K)$. Moving $B_K/2$ to the left yields
\begin{align}
B_K\le \mathcal{O}\!\Bigl(&\sqrt{SAH\iota\,\Var_\Sigma}+S\sqrt{AH^3\min\{K,\cC\}}\,\iota\sqrt{L_K}
+S^2AH^2\iota L_K\Bigr).
\label{eq:BK-trajectory-publication}
\end{align}
For the bound in terms of $\VarStar$,
\eqref{eq:reference-variance-publication} and
\eqref{eq:AK-bound-publication} give
\[
\sqrt{SAH\iota\,\Var_\Sigma}
\le\mathcal O\!\left(
\sqrt{SAHK\VarStar\,\iota}
+\sqrt{SAH\iota(Z_D+Z_E)}
+H^2\iota\sqrt{SA}\right).
\]
Substitute this inequality into~\eqref{eq:BK-variance-interface-publication}
and use~\eqref{eq:two-square-closed-publication} for $Z_D+Z_E$.
Since $S,L_K\ge1$, the resulting terms involving $B_K$ and
$\min\{K,\cC\}$ are bounded by the corresponding
terms above. Also, $H^2\iota\sqrt{SA}\le S^2AH^2\iota L_K$.
Applying the same absorption step gives
\begin{align}
B_K\le \mathcal{O}\!\Bigl(&\sqrt{SAHK\VarStar\,\iota}+S\sqrt{AH^3\min\{K,\cC\}}\,\iota\sqrt{L_K}
+S^2AH^2\iota L_K\Bigr).
\label{eq:BK-final-publication}
\end{align}
Both bounds hold on $\MainEvent$. Taking their minimum gives
\begin{align}
B_K
\le \mathcal{O}\!\Bigl(&
\sqrt{SAH\iota\,\min\{\Var_\Sigma,K\VarStar\}}+S\sqrt{AH^3\min\{K,\cC\}}\,\iota\sqrt{L_K}
+S^2AH^2\iota L_K\Bigr).
\label{eq:BK-min-publication}
\end{align}

\paragraph{Bounding $\widetilde B_K$}
The remaining steps occur only in episodes with $\tau_k\le H$.
Since each reward variance is at most $1/4$,
\[
\sum_{k=1}^K\sum_{h=\tau_k}^{H}\Var(R_h(s_h^k,a_h^k))
\le\frac H4\sum_{k=1}^K\Ind\{\tau_k\le H\}.
\]
Apply Lemma~\ref{lem:visit-count-summation-publication} to
\eqref{eq:all-action-beta-publication} over the visits with
$h\ge\tau_k$:
\begin{align*}
\widetilde B_K
&\le \mathcal{O}\!\Bigg(
\sqrt{SAH\iota
\sum_{k=1}^K\sum_{h=\tau_k}^{H}\Var(R_h(s_h^k,a_h^k))}\\
&\qquad
+S\sqrt{AH\iota L_K
\sum_{k=1}^K\sum_{h=\tau_k}^{H}
\Var(P_h(\cdot\mid s_h^k,a_h^k),V_{h+1}^k)}
+S^2AH^2\iota L_K\Bigg).
\end{align*}
The reward-variance bound above and~\eqref{eq:suffix-square-publication}
then give, using $S,A,H,L_K\ge1$,
\[
\widetilde B_K
\le \mathcal{O}\!\Bigl(
S\sqrt{AH^3\iota L_K\sum_{k=1}^K\Ind\{\tau_k\le H\}}
+S\sqrt{AH^2\iota L_K\,\widetilde B_K}
+S^2AH^2\iota L_K\Bigr).
\]
Young's inequality bounds the term containing
$\sqrt{\widetilde B_K}$ by
$\widetilde B_K/2+\mathcal{O}(S^2AH^2\iota L_K)$.
Move $\widetilde B_K/2$ to the left and use
\eqref{eq:departure-realized-publication} to obtain
\begin{equation}
\widetilde B_K
\le \mathcal{O}\!\left(
S\sqrt{AH^3\min\{K,\cC\}}\,\iota\sqrt{L_K}
+S^2AH^2\iota L_K\right).
\label{eq:suffix-final-publication}
\end{equation}

\begin{proof}[Proof of Theorem~\ref{thm:regret}]
Substitute~\eqref{eq:BK-min-publication} and
\eqref{eq:suffix-final-publication} into
\eqref{eq:full-regret-interface-publication}. Since
$H^2\iota\le S^2AH^2\iota L_K$, this gives
\begin{align}
\cR(K)
\le \mathcal{O}\!\Bigl(&
\sqrt{SAH\iota\,\min\{\Var_\Sigma,K\VarStar\}}%
+S\sqrt{AH^3\min\{K,\cC\}}\,\iota\sqrt{L_K}
+S^2AH^2\iota L_K\Bigr).
\label{eq:main-regret-trajectory-final}
\end{align}
The probability statement follows from
\eqref{eq:master-event-prob-publication}. Suppressing the logarithmic
factors in $\iota$ and $L_K$ proves Theorem~\ref{thm:regret}.
\end{proof}

\subsection{A Special Case with Direct Cost Violations}
\label{app:immediate-cost-specialization}

Suppose Assumptions~\ref{ass:init} and~\ref{ass:margin} hold, and
every unsafe action at a vertex reachable under a safe policy has
cost exceeding $\tau$. Then $\cC\le SAH/\Delta_c^2$,
and SVAE satisfies, with probability at least $1-\delta$,
\begin{equation}
\cR(K)\le\widetilde{\mathcal O}\!\left(
\sqrt{SAH\min\{\Var_\Sigma,K\VarStar\}}
+S\sqrt{AH^3\min\{K,SAH/\Delta_c^2\}}
+S^2AH^2\right).
\label{eq:immediate-cost-regret}
\end{equation}

\begin{proof}
For every action contributing to $\cC$, the
direct-cost branch of~\eqref{eq:recursive-kappa} gives
$\kappa_h(s,a)=(c_h(s,a)-\tau)^{-2}\le\Delta_c^{-2}$.
There are at most $SAH$ such actions, so
$\cC\le SAH/\Delta_c^2$.
Substitution into Theorem~\ref{thm:regret} yields
\eqref{eq:immediate-cost-regret} on the same event.
\end{proof}

\section{Constraint Violation}
\label{app:violation}

We prove Theorem~\ref{thm:violation} on
$\cE_{\mathrm{cs}}=\cE_c\cap\cE_{\mathrm{supp}}$, which has probability
at least $1-\delta/8$ by Lemma~\ref{lem:estimation-event-publication}.
We separate steps with nonempty candidate sets from those with empty
candidate sets. The cost bound controls the first contribution, and
newly observed transitions control the second.

Let $\mathrm{nemp}$ and $\mathrm{emp}$ denote nonempty and empty
candidate sets, respectively, and define
\begin{align}
\Viol_{\mathrm{nemp}}(K)
&:=
\sum_{k=1}^{K}\sum_{h=1}^{H}
\bigl[c_h(s_h^k,a_h^k)-\tau\bigr]_+
\Ind\!\left\{
\hat\cA_{\mathrm{safe},h}^k(s_h^k)\ne\varnothing
\right\},
\label{eq:vio-nemp-definition}\\
\Viol_{\mathrm{emp}}(K)
&:=
\sum_{k=1}^{K}\sum_{h=1}^{H}
\bigl[c_h(s_h^k,a_h^k)-\tau\bigr]_+
\Ind\!\left\{
\hat\cA_{\mathrm{safe},h}^k(s_h^k)=\varnothing
\right\}.
\notag
\end{align}
Then, deterministically,
\begin{equation}
\Viol(K)
=
\Viol_{\mathrm{nemp}}(K)
+
\Viol_{\mathrm{emp}}(K).
\label{eq:vio-exact-decomposition}
\end{equation}

\paragraph{Nonempty candidate sets}

We first bound $\Viol_{\mathrm{nemp}}(K)$. Every selected candidate
action passes the cost test. This controls both its one-step violation
and the number of times it can be selected with a positive cost gap.

\begin{lemma}
\label{lem:vio-nemp-contribution}
On $\cE_c$, both
$\Viol_{\mathrm{nemp}}(K)\le4\sqrt2\,H\sqrt{SAK\iota}$ and
the following gap-dependent bound hold:
\begin{equation}
\Viol_{\mathrm{nemp}}(K)
\le
\sum_{c_h(s,a)>\tau}
\left\{
 c_h(s,a)-\tau
 +
 \frac{8\iota}{c_h(s,a)-\tau}
\right\}.
\label{eq:vio-nemp-instance}
\end{equation}
The sum ranges over $(s,a,h)\in\cS\times\cA\times[H]$ with a positive cost gap.
\end{lemma}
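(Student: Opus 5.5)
The plan is to work on $\cE_c$ and use one consequence of the cost test. If $a=a_h^k$ is selected from a nonempty candidate set, then $\underline c_h^k(s,a)\le\tau$. On $\cE_c$ we also have $c_h(s,a)\le\widehat c_h^k(s,a)+\sqrt{2\iota/n_h^k(s,a)}$. Together these give the per-step bound
\[
\bigl[c_h(s_h^k,a_h^k)-\tau\bigr]_+\le 2\sqrt{2\iota/n_h^k(s_h^k,a_h^k)},
\]
which holds for every step contributing to $\Viol_{\mathrm{nemp}}(K)$. This inequality drives both claims.

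For the worst-case bound, I sum the per-step inequality over all contributing $(k,h)$. I then group the terms by the triple $(s,a,h)$ and use the counting estimate of Lemma~\ref{lem:visit-count-summation-publication}: over the $m$ visits to a fixed triple, $\sum_j n_j^{-1/2}\le 2\sqrt m$. This gives $\Viol_{\mathrm{nemp}}(K)\le 4\sqrt{2\iota}\sum_{(s,a,h)}\sqrt{m_h(s,a)}$, where $m_h(s,a)$ counts the relevant visits. Cauchy--Schwarz over the $SAH$ triples, together with $\sum_{s,a}m_h(s,a)\le K$ for each $h$, yields $\sum\sqrt{m}\le\sqrt{SAH\cdot HK}=H\sqrt{SAK}$. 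Hence $\Viol_{\mathrm{nemp}}(K)\le4\sqrt2\,H\sqrt{SAK\iota}$.

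For the gap-dependent bound, fix a triple with $\Delta:=c_h(s,a)-\tau>0$. The same inequality shows that a nonempty-set selection of $a$ at visit count $n$ requires $\Delta\le2\sqrt{2\iota/n}$, that is, $n\le8\iota/\Delta^2$. Because $n_h^k$ is $\max\{N,1\}$, the visits with $N=0$ and $N=1$ both have $n=1$. So the number of such selections is at most $1+8\iota/\Delta^2$, and each contributes exactly $\Delta$. Multiplying gives the per-triple contribution $\Delta+8\iota/\Delta$. Summing over triples with a positive gap gives \eqref{eq:vio-nemp-instance}.

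The only delicate point is this bookkeeping. The selections counted here are a subset of all visits to the triple, but $n_h^k$ counts all prior visits. This is harmless, since both the $2\sqrt m$ bound and the counting argument use only that the sequence of counts is nondecreasing, so restricting to a subset of visits only helps. I do not expect any genuine obstacle.
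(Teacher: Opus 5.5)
Your proposal is correct and follows the paper's proof essentially step for step. Both derive the per-step bound $[c_h-\tau]_+\le2\sqrt{2\iota/n_h^k}$ from the cost test on $\cE_c$. Both then sum it using the subset form of the $2\sqrt m$ visit-count bound and Cauchy--Schwarz. For the gap-dependent bound, the paper applies the per-step inequality at the last counted execution, which has at least $M-1$ prior visits. You instead count the admissible visit counts, which gives the same $1+8\iota/\Delta^2$ selections per positive-gap triple.
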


\begin{proof}
When the candidate set is nonempty, Algorithm~\ref{alg:svae} selects
$a\in\hat\cA_{\mathrm{safe},h}^k(s)$, so
$\underline c_h^k(s,a)\le\tau$. On $\cE_c$, this implies
\begin{align*}
c_h(s,a)-\tau
&\le c_h(s,a)-\widehat c_h^k(s,a)
+\sqrt{\frac{2\iota}{n_h^k(s,a)}}
\le2\sqrt{\frac{2\iota}{n_h^k(s,a)}}.
\end{align*}
Since $n_h^k=\max\{N_h^k,1\}$, this also covers the zero-count case. Hence
\begin{equation}
[c_h(s,a)-\tau]_+
\le2\sqrt{\frac{2\iota}{n_h^k(s,a)}}.
\label{eq:vio-per-step-bound}
\end{equation}
Let $M_h(s,a)$ count executions of $(s,a)$ at step $h$ with a nonempty
candidate set. The reciprocal-count bound in
Lemma~\ref{lem:visit-count-summation-publication} applies to this
subset of visits. Summing~\eqref{eq:vio-per-step-bound} and using
$\sum_{s,a,h}M_h(s,a)\le KH$ gives
\begin{align*}
\Viol_{\mathrm{nemp}}(K)
&\le4\sqrt{2\iota}\sum_{s,a,h}\sqrt{M_h(s,a)}\\
&\le4\sqrt{2SAH\iota\sum_{s,a,h}M_h(s,a)}\\
&\le4\sqrt2\,H\sqrt{SAK\iota}.
\end{align*}

For the second bound, fix $(s,a,h)$ with
$\Delta:=c_h(s,a)-\tau>0$. If $M_h(s,a)\ge2$, the last counted
execution has at least $M_h(s,a)-1$ prior visits. Applying
\eqref{eq:vio-per-step-bound} there gives
$\Delta\le2\sqrt{2\iota/(M_h(s,a)-1)}$. Therefore, including the cases
$M_h(s,a)\le1$,
\begin{equation}
M_h(s,a)\le1+\frac{8\iota}{\Delta^2}.
\label{eq:vio-gap-selection-count}
\end{equation}
Each counted execution contributes $\Delta$. Multiplying by $\Delta$
and summing over positive-gap triples proves
\eqref{eq:vio-nemp-instance}.
\end{proof}

\paragraph{Empty candidate sets}

It remains to bound $\Viol_{\mathrm{emp}}(K)$.
An empty candidate set can follow a candidate action only through a
newly observed transition. We associate each block of empty candidate
sets with this transition and bound its length by the remaining horizon.

\begin{lemma}
\label{lem:vio-emp-contribution}
Suppose Assumption~\ref{ass:init} holds. Then, on
$\cE_{\mathrm{cs}}$, $\Viol_{\mathrm{emp}}(K)\le S^2AH(H-1)/2$.
\end{lemma}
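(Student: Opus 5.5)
We bound the violation on empty candidate sets by charging each maximal block of such steps to a newly observed transition. Work on $\cE_{\mathrm{cs}}$, so Lemma~\ref{lem:safe-superset} gives $\cV_{\mathrm{safe},h}\subseteq\hat\cV_{\mathrm{safe},h}^k$. By Assumption~\ref{ass:init}, $s_1\in\hat\cV_{\mathrm{safe},1}^k$ for every $k$, so the candidate set at step $1$ is never empty.

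Fix an episode $k$ and a step $h\ge2$ with $\hat\cA_{\mathrm{safe},h}^k(s_h^k)=\varnothing$. Let $g<h$ be the last step before $h$ whose candidate set is nonempty; such a step exists because step $1$ qualifies. At step $g+1$ the candidate set is empty, so $s_{g+1}^k\notin\hat\cV_{\mathrm{safe},g+1}^k$. At step $g$ the algorithm chose $a_g^k\in\hat\cA_{\mathrm{safe},g}^k(s_g^k)$, which requires
\[
\widehat\supp_g^k(s_g^k,a_g^k)\subseteq\hat\cV_{\mathrm{safe},g+1}^k.
\]
Hence $s_{g+1}^k\notin\widehat\supp_g^k(s_g^k,a_g^k)$. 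In other words, the transition $(s_g^k,a_g^k,s_{g+1}^k)$ at step $g$ is observed for the first time in episode $k$.

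The steps of episode $k$ with empty candidate sets split into maximal blocks $g+1,\ldots,g'$. Each block is preceded by a nonempty step $g$ whose transition is new, and distinct blocks in the same episode have distinct $g$. A block starting at $g+1$ contains at most $H-g$ steps. Each step contributes violation at most $[c-\tau]_+\le1$, and since $1-\tau\le1$ a cruder constant is not needed.

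Each triple $(s,a,s')$ at step $g$ can be newly observed at most once over all episodes, since afterwards it lies in $\widehat\supp_g$. Moreover, $g\le H-1$ is forced for a nonempty block. Summing the charges gives
\[
\Viol_{\mathrm{emp}}(K)\le\sum_{g=1}^{H-1}S^2A(H-g)=S^2A\,\frac{H(H-1)}{2}.
\]
The support inclusion $\cE_{\mathrm{supp}}$ is only used implicitly, through the candidate-set inclusion.

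The main obstacle is verifying the charging step: an empty candidate set right after a nonempty one must come from a newly observed successor. This relies on the candidate sets and the policy being frozen during the episode, so the algorithm's choice at step $g$ used $\widehat\supp^k$ computed before episode $k$. Injectivity of the charge across episodes then follows because observed supports only grow.
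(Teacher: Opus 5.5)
Your proposal is correct and follows essentially the same route as the paper: you show the step-$1$ candidate set is nonempty via Assumption~\ref{ass:init} and Lemma~\ref{lem:safe-superset}, charge each maximal empty block to the newly observed transition out of the preceding candidate step, bound the block length by $H-g$, and sum over the $S^2A$ transitions per parent layer. Phrasing injectivity as ``a step-indexed transition can be new at most once'' is equivalent to the paper's observation that a recorded successor of a candidate action must have a nonempty candidate set.
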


\begin{proof}
In each episode $k$, partition the steps with empty candidate sets into
maximal contiguous blocks. A block $\{\ell,\ldots,r\}$ has $\ell\ge2$:
Assumption~\ref{ass:init} and
the inclusions proved in Appendix~\ref{app:safe-superset} guarantee a nonempty
candidate set at $s_1$. By maximality, the preceding action is a
candidate, so
\begin{equation}
\widehat\supp_{\ell-1}^k(s_{\ell-1}^k,a_{\ell-1}^k)
\subseteq\hat\cV_{\mathrm{safe},\ell}^k.
\label{eq:vio-parent-closure}
\end{equation}
The state $s_\ell^k$ has no candidate actions. Consequently,
\begin{equation}
s_\ell^k\notin\hat\cV_{\mathrm{safe},\ell}^k,
\qquad
s_\ell^k\notin\widehat\supp_{\ell-1}^k(s_{\ell-1}^k,a_{\ell-1}^k).
\label{eq:vio-block-state-outside}
\end{equation}
Thus $(s_{\ell-1}^k,a_{\ell-1}^k,s_\ell^k,\ell-1)$ is a newly observed
transition. Associate the block with this transition.

Each transition is associated with at most one block. A given layer
is visited once per episode. In later episodes, the successor remains
in the observed support, so a candidate predecessor action forces it
to have a nonempty candidate set by~\eqref{eq:vio-parent-closure}.
It therefore cannot start another empty block through that action.
There are at most $S^2A$ transitions at each parent layer $h$, and an
associated block has length at most $H-h$. Since violation per step
is at most one,
\[
\Viol_{\mathrm{emp}}(K)
\le S^2A\sum_{h=1}^{H-1}(H-h)
=\frac{S^2AH(H-1)}2.\qedhere
\]
\end{proof}

\begin{proof}[Proof of Theorem~\ref{thm:violation}]
Substituting Lemmas~\ref{lem:vio-nemp-contribution} and
\ref{lem:vio-emp-contribution} into~\eqref{eq:vio-exact-decomposition}
gives, on $\cE_{\mathrm{cs}}$,
\[
\Viol(K)\le4\sqrt2\,H\sqrt{SAK\iota}+\frac{S^2AH(H-1)}2.
\]
If Assumption~\ref{ass:margin} also holds, every positive cost gap is
at least $\Delta_c$. The second bound in
Lemma~\ref{lem:vio-nemp-contribution} then gives, on the same event,
\[
\Viol(K)\le SAH\left(1+\frac{8\iota}{\Delta_c}\right)
+\frac{S^2AH(H-1)}2.
\]
The event has probability at least $1-\delta/8$. Suppressing logarithmic
factors proves both claims.
\end{proof}

\section{Cost Confidence Bounds at Zero Empirical Variance}
\label{app:silent-cliff}

A sample with zero empirical variance may come from a cost distribution
whose mean exceeds the safety threshold. We compare two distributions
that can produce the same constant sample to obtain a necessary
additive term in the confidence bound. We then apply this comparison
to the lower confidence test used for action removal in SVAE.

Fix $\tau\in(0,1)$ and $\delta\in(0,1/2)$. A certification rule observes
independent samples $X_1,X_2,\ldots$ from a common distribution $\nu$ on $[0,1]$,
may use independent auxiliary randomness, and returns a stopping time
$T\in\mathbb N\cup\{\infty\}$ at which it declares that the mean is at
most $\tau$. Here $\mathbb N=\{1,2,\ldots\}$, and
$\{T\le n\}$ depends only on the first $n$ observations and the
auxiliary randomness. Call the rule $\delta$-safe if
$\Prob_\nu(T<\infty)\le\delta$ for every such distribution with
$\E_\nu[X]>\tau$. The probability includes both sources of randomness.

Set $\Delta_{\max}:=\min\{\tau,1-\tau\}$. For each
$\Delta\in(0,\Delta_{\max})$, let
$p_{\Delta}:=2\Delta/(1-\tau+\Delta)$.
Under $\nu_0$, each sample equals $\tau-\Delta$ with probability one.
Under $\nu_1$, it equals $1$ with probability $p_\Delta$ and
$\tau-\Delta$ otherwise. Both distributions are supported on $[0,1]$,
with $\E_{\nu_0}[X]=\tau-\Delta$ and
$\E_{\nu_1}[X]=\tau+\Delta$.

\begin{proposition}
\label{prop:silent-cliff}
Fix $\Delta\in(0,\Delta_{\max})$ and let $\nu_0,\nu_1$ be the
distributions defined above. For every $\delta$-safe certification
rule $T$ and every $n\in\mathbb N$, we have
\begin{equation}
\Prob_{\nu_0}(T\le n)
\le
\frac{\delta}{(1-p_{\Delta})^n}.
\label{eq:hidden-spike-body}
\end{equation}
\end{proposition}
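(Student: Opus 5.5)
The plan is a change-of-measure argument on the event $\{T\le n\}$, using the fact that $\nu_1$ reproduces the constant sample of $\nu_0$ with probability $(1-p_\Delta)^n$. First I check that $\nu_1$ has mean above $\tau$. Its mean is $p_\Delta\cdot 1+(1-p_\Delta)(\tau-\Delta)=\tau-\Delta+p_\Delta(1-\tau+\Delta)=\tau+\Delta$, by the choice of $p_\Delta$. I also check $p_\Delta\in(0,1)$; this holds since $\Delta<1-\tau$ gives $2\Delta<1-\tau+\Delta$. Hence $\delta$-safety applies to $\nu_1$, and gives $\Prob_{\nu_1}(T<\infty)\le\delta$, in particular $\Prob_{\nu_1}(T\le n)\le\delta$.

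Next, let $G_n$ be the event that $X_1=\cdots=X_n=\tau-\Delta$. Under $\nu_0$, $G_n$ holds almost surely. Under $\nu_1$ it has probability $(1-p_\Delta)^n$, and conditional on $G_n$ the first $n$ samples are identical to those under $\nu_0$. The auxiliary randomness $U$ is independent of the samples and has the same law in both cases.

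Since $\{T\le n\}$ is a measurable function of $(X_1,\ldots,X_n,U)$, write $\{T\le n\}=\{(X_{1:n},U)\in F\}$. Then
\[
\Prob_{\nu_1}(T\le n)\ge\Prob_{\nu_1}(\{T\le n\}\cap G_n)=(1-p_\Delta)^n\,\Prob\bigl((\tau-\Delta,\ldots,\tau-\Delta,U)\in F\bigr)=(1-p_\Delta)^n\,\Prob_{\nu_0}(T\le n).
\]
The middle equality uses independence of $U$ from the samples and the definition of $G_n$. The final equality holds because under $\nu_0$ the sample vector is almost surely the constant vector. Combining with $\Prob_{\nu_1}(T\le n)\le\delta$ and dividing by $(1-p_\Delta)^n>0$ gives \eqref{eq:hidden-spike-body}.

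The only delicate step is being rigorous that $\{T\le n\}$ depends only on the first $n$ samples and $U$. This is exactly the stopping-time requirement in the definition of a certification rule, so the likelihood ratio on $\sigma(X_{1:n},U)$ restricted to $G_n$ equals $(1-p_\Delta)^n$. No stronger tools, such as a KL divergence or Wald identity, are needed.
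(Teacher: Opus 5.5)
Your proposal is correct and follows the paper's proof: restrict to the event that the first $n$ samples all equal $\tau-\Delta$, which has $\nu_1$-probability $(1-p_\Delta)^n$ and on which the rule behaves exactly as under $\nu_0$, giving $\delta\ge\Prob_{\nu_1}(T\le n)\ge(1-p_\Delta)^n\Prob_{\nu_0}(T\le n)$. You also make explicit the checks the paper leaves implicit: that $\E_{\nu_1}[X]=\tau+\Delta$ and $p_\Delta\in(0,1)$, and how independence of the auxiliary randomness is used.
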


\begin{proof}
Since $p_\Delta\in(0,1)$, under $\nu_1$ the first $n$ observations
all equal $\tau-\Delta$ with probability $(1-p_\Delta)^n$. On that event, the rule sees exactly the
observations under $\nu_0$, and its auxiliary randomness has the same
distribution. Thus
\[
\delta\ge\Prob_{\nu_1}(T\le n)
\ge(1-p_\Delta)^n\Prob_{\nu_0}(T\le n),
\]
which proves~\eqref{eq:hidden-spike-body}.
\end{proof}

We now apply this comparison to confidence bounds based on empirical
variance. For $t\in\mathbb N$, define
$\overline X_t:=t^{-1}\sum_{i=1}^t X_i$
and $\widehat v_t:=t^{-1}\sum_{i=1}^t(X_i-\overline X_t)^2$.
Let $(\rho_t)_{t\in\mathbb N}$ be deterministic functions from
$[0,\infty)\times(0,1/2)$ to $[0,\infty)$, and define the stopping rule
\begin{equation}
T := \inf\left\{
t\in\mathbb N:
\overline X_t+\rho_t(\widehat v_t,\delta)\le \tau
\right\},
\label{eq:variance-certification-rule}
\end{equation}
with the convention $\inf\varnothing=+\infty$.

\begin{corollary}
\label{cor:silent-cliff}
Suppose the stopping rule in~\eqref{eq:variance-certification-rule}
is $\delta$-safe. Then, for every $n\in\mathbb N$,
\begin{equation}
\rho_n(0,\delta)
\ge
\min\left\{\Delta_{\max},\frac{(1-\tau)(1-\delta^{1/n})}{1+\delta^{1/n}}\right\}.
\label{eq:zero-variance-floor-body}
\end{equation}
In particular, for every $n\in\mathbb N$ satisfying
$\log(1/\delta)/n\le\Delta_{\max}$,
$\rho_n(0,\delta)
\ge
\frac{(1-\tau)\log(1/\delta)}{4n}$.
\end{corollary}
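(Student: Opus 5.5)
We will derive Corollary~\ref{cor:silent-cliff} from Proposition~\ref{prop:silent-cliff} by running the stopping rule~\eqref{eq:variance-certification-rule} on the constant sample generated by $\nu_0$. Fix $n$ and suppose, for contradiction, that $\rho_n(0,\delta)$ is smaller than the right-hand side of~\eqref{eq:zero-variance-floor-body}. We then construct a $\Delta\in(0,\Delta_{\max})$ for which the rule certifies at time $n$ with certainty under $\nu_0$. Under $\nu_0$, every sample equals $\tau-\Delta$, so $\overline X_n=\tau-\Delta$ and $\widehat v_n=0$. Hence $T\le n$ holds deterministically whenever $\tau-\Delta+\rho_n(0,\delta)\le\tau$, that is, whenever $\Delta\ge\rho_n(0,\delta)$. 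In that case $\Prob_{\nu_0}(T\le n)=1$, and~\eqref{eq:hidden-spike-body} gives $(1-p_\Delta)^n\le\delta$, or equivalently $p_\Delta\ge 1-\delta^{1/n}$.

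Next we invert the relation between $p_\Delta$ and $\Delta$. Writing $x:=1-\delta^{1/n}$, the condition $2\Delta/(1-\tau+\Delta)\ge x$ is equivalent to $\Delta(2-x)\ge x(1-\tau)$, that is, $\Delta\ge x(1-\tau)/(2-x)=(1-\tau)(1-\delta^{1/n})/(1+\delta^{1/n})=:\Delta^\dagger$. The map $\Delta\mapsto p_\Delta$ is increasing, so we have shown the following: every $\Delta\in[\rho_n(0,\delta),\Delta_{\max})$ satisfies $\Delta\ge\Delta^\dagger$. If $\rho_n(0,\delta)<\min\{\Delta_{\max},\Delta^\dagger\}$, choose $\Delta$ strictly between $\max\{\rho_n(0,\delta),0\}$ and $\min\{\Delta_{\max},\Delta^\dagger\}$. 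This $\Delta$ lies in $(0,\Delta_{\max})$, satisfies $\Delta\ge\rho_n(0,\delta)$, and satisfies $\Delta<\Delta^\dagger$, which is a contradiction. This proves~\eqref{eq:zero-variance-floor-body}. The case $\rho_n(0,\delta)=0$ is covered because $\Delta>0$ can be taken arbitrarily small.

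For the simplified bound, set $u:=\log(1/\delta)/n\le\Delta_{\max}\le 1/2$, so that $\delta^{1/n}=e^{-u}$. Then $(1-e^{-u})/(1+e^{-u})=\tanh(u/2)\ge u/4$ for $u\in[0,1]$, since $\tanh(y)\ge y/2$ for $y\le1/2$ by concavity of $\tanh$ on $[0,\infty)$ together with $\tanh(1/2)\ge1/4$. Hence $\Delta^\dagger\ge(1-\tau)u/4$. Also $\Delta_{\max}\ge u\ge(1-\tau)u/4$. The minimum in~\eqref{eq:zero-variance-floor-body} is therefore at least $(1-\tau)\log(1/\delta)/(4n)$.

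The only delicate step is the strict-inequality and endpoint bookkeeping: $\Delta$ must lie in the open interval $(0,\Delta_{\max})$ while still satisfying $\Delta\ge\rho_n(0,\delta)$. Choosing $\Delta$ strictly inside the interval, as above, handles this. The remaining work is elementary algebra.
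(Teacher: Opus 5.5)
Your proposal is correct and matches the paper's proof: you pick $\Delta$ strictly between $\rho_n(0,\delta)$ and the minimum, use the constant $\nu_0$-sample to force $T\le n$, apply Proposition~\ref{prop:silent-cliff} to obtain $p_\Delta\ge1-\delta^{1/n}$, and invert this to contradict $\Delta<\Delta^\dagger$. The only difference is cosmetic and lies in the simplified bound, where you use $\tanh(u/2)\ge u/4$ via concavity in place of the paper's inequalities $1-e^{-x}\ge x/2$ and $1+e^{-x}\le2$.
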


\begin{proof}
Suppose~\eqref{eq:zero-variance-floor-body} fails at some $n$, and choose
\[
\rho_n(0,\delta)<\Delta<
\min\left\{\Delta_{\max},
\frac{(1-\tau)(1-\delta^{1/n})}{1+\delta^{1/n}}\right\}.
\]
Under $\nu_0$, all observations equal $\tau-\Delta$, so
$\overline X_n=\tau-\Delta$, $\widehat v_n=0$, and $T\le n$ almost surely.
Proposition~\ref{prop:silent-cliff} gives $(1-p_\Delta)^n\le\delta$.
However, the chosen upper bound on $\Delta$ is equivalent to
$p_\Delta<1-\delta^{1/n}$, a contradiction.

For the second claim, set $x=\log(1/\delta)/n\le\Delta_{\max}\le1/2$.
The inequalities $1-e^{-x}\ge x/2$ and $1+e^{-x}\le2$ give
$(1-\tau)(1-e^{-x})/(1+e^{-x})\ge(1-\tau)x/4$.
Also, $\Delta_{\max}\ge(1-\tau)x/4$. Substituting into
\eqref{eq:zero-variance-floor-body} proves the claim.
\end{proof}

For the lower confidence test, consider
$T=\inf\{t\in\mathbb N:\overline X_t-\rho_t(\widehat v_t,\delta)>\tau\}$
and suppose $\Prob_\nu(T<\infty)\le\delta$ for every distribution on $[0,1]$
with $\E_\nu[X]\le\tau$. Reflect the observations to $1-X$ and the
threshold to $1-\tau$. The empirical variance is unchanged.
The same constant-sample argument applies, since
$\rho_n(0,\delta)<\Delta$ ensures that the strict test fires. It gives
\begin{equation}
\rho_n(0,\delta)
\ge\min\left\{\Delta_{\max},
\frac{\tau(1-\delta^{1/n})}{1+\delta^{1/n}}\right\}.
\label{eq:lcb-variance-floor}
\end{equation}
In particular, $\rho_n(0,\delta)\ge\tau\log(1/\delta)/(4n)$ under the
same conditions on $n$ as in the corollary. Thus uniformly valid
action removal requires a positive radius at zero empirical variance.

\section{Safe-Subgraph Variance}
\label{app:variance-topology-tightness}

Throughout this section, Assumption~\ref{ass:init} holds.
We use the spans $S^\pi$ and $S^\star$ from Section~\ref{subsec:branching}
to bound the transition variances of safe-policy values and optimal safe
values, respectively. The branching depth bounds the number of nonzero
terms along a safe path. We then apply the resulting variance bounds to regret and provide matching cost constructions showing that the bounds are tight.

\subsection{Proof of Theorem~\ref{thm:variance-topology}}
\label{app:variance-topology-proof}
\label{app:variance-topology}

\begin{proof}
Fix $\pi\in\Pi_{\mathrm{safe}}$. Its trajectory is almost surely a safe
path in $\cG_{\mathrm{safe}}$, and every visited vertex belongs to
$\cV^\pi$. For a distribution $q$ and a bounded function $f$, let
$m:=\min_{x\in\supp(q)}f(x)$ and
$M:=\max_{x\in\supp(q)}f(x)$. Since $(f-m)(M-f)\ge0$ on
$\supp(q)$,
\begin{equation}
\Var(q,f)
\le (M-qf)(qf-m)
\le \frac14(M-m)^2.
\label{eq:popoviciu-direct}
\end{equation}
Apply~\eqref{eq:popoviciu-direct} with
$q=P_h(\cdot\mid s_h,a_h)$ and $f=V_{h+1}^\pi$. A deterministic
transition has zero variance. Otherwise, the definition of $S^\pi$
bounds the successor-value span by $S^\pi$. Hence, almost surely,
\begin{equation}
\Var(P_h(\cdot\mid s_h,a_h),V_{h+1}^\pi)
\le \frac14(S^\pi)^2
\Ind\!\left\{|\supp(P_h(\cdot\mid s_h,a_h))|>1\right\}.
\label{eq:one-step-topology-bound}
\end{equation}
A safe path contains at most $B(\cG_{\mathrm{safe}})$ stochastic
transitions. Summing~\eqref{eq:one-step-topology-bound} along the path
and taking expectation gives
\begin{equation}
\E^\pi\!\left[\sum_{h=1}^H
\Var(P_h(\cdot\mid s_h,a_h),V_{h+1}^\pi)
\,\middle|\,s_1\right]
\le \frac14(S^\pi)^2B(\cG_{\mathrm{safe}}).
\label{eq:safe-graph-variance-bound}
\end{equation}
For the second claim, fix a safe path
$\gamma=(s_1,a_1,\ldots,s_H,a_H,s_{H+1})$ and a step $h\in[H]$.
All successors of $a_h$ are viable, so $V_{h+1}^\star$ gives their
optimal safe continuation values by Lemma~\ref{lem:safe-comparator-publication}.
Its span on $\supp(P_h(\cdot\mid s_h,a_h))$ is at most $S^\star$
by definition. A deterministic transition has zero variance.
Thus~\eqref{eq:popoviciu-direct} gives
\begin{equation}
\Var(P_h(\cdot\mid s_h,a_h),V_{h+1}^\star)
\le \frac14(S^\star)^2
\Ind\!\left\{|\supp(P_h(\cdot\mid s_h,a_h))|>1\right\}.
\label{eq:optimal-one-step-topology}
\end{equation}
The steps before $\tau_k$ can be completed to a full safe path because
all successors of a safe action are viable. Each episode therefore
contains at most $B(\cG_{\mathrm{safe}})$ stochastic transitions before
$\tau_k$. Summing~\eqref{eq:optimal-one-step-topology} gives
\begin{equation}
\sum_{k=1}^K\sum_{h<\tau_k}
\Var(P_h(\cdot\mid s_h^k,a_h^k),V_{h+1}^\star)
\le \frac K4(S^\star)^2B(\cG_{\mathrm{safe}}),
\label{eq:optimal-trajectory-topology}
\end{equation}
which proves the second claim.
\end{proof}

\subsection{Proof of Corollary~\ref{cor:deterministic-topology-regret}}
\label{app:deterministic-topology-regret}

\begin{proof}
For deterministic rewards, the reward-variance terms in
\eqref{eq:trajectory-variance} vanish, so
\eqref{eq:optimal-trajectory-topology} gives $\Var_\Sigma\le\frac K4(S^\star)^2B(\cG_{\mathrm{safe}})$.
The total reward of every safe policy lies in $[0,H]$, so its variance
is at most $H^2/4$. Hence $\VarStar\le H^2/4$ by
\eqref{eq:safe-max-return-variance}. Combining these two bounds yields
\[
\min\{\Var_\Sigma,K\VarStar\}
\le\frac K4\min\{H^2,(S^\star)^2B(\cG_{\mathrm{safe}})\}.
\]
Substituting this inequality into Theorem~\ref{thm:regret} proves the
corollary on the same event, with probability at least $1-\delta$.
\end{proof}

\subsection{Proof of Proposition~\ref{prop:gadget-main-tightness}}
\label{app:n-gadget-tightness}
\label{subsec:gadget-chain}

To attain the variance bound, we keep the transition kernel and rewards
fixed and vary the costs that determine which gadget transitions are
safe. We first give the construction, then compute its branching depth,
successor-value span, and return variance. Figure~\ref{fig:gadget-chain}
shows the construction for $n=3$ with zero, one, two, and three open gadgets.

\begin{example}[Gadget chain]
\label{ex:gadget-chain}
Fix an integer $n\ge1$ and set $H=2n$. Gadget $i\in[n]$ starts at $s_i$ in
layer $h_i:=2i-1$, and $s_{n+1}:=t$ is the terminal state.
The transition kernel and deterministic rewards are fixed as follows.

At $s_i$, action $a_{\mathrm{det},i}$ transitions deterministically
to $s'_{A,i}$, while $a_{\mathrm{st},i}$ transitions uniformly to
$s'_{A,i}$ and $s'_{B,i}$. Both actions have reward zero.
At the next layer, action $\alpha_i$ at $s'_{A,i}$ has reward one,
and actions $\beta_{1,i},\beta_{2,i}$ at $s'_{B,i}$ have rewards one
and zero, respectively. All three actions transition deterministically
to $s_{i+1}$.

Take $\tau=1/2$, set $c_{h_i+1}(s'_{A,i},\alpha_i)=0$, and define
\begin{equation}
\begin{aligned}
&\bigl(
 c_{h_i}(s_i,a_{\mathrm{det},i}),
 c_{h_i}(s_i,a_{\mathrm{st},i}),
 c_{h_i+1}(s'_{B,i},\beta_{1,i}),
 c_{h_i+1}(s'_{B,i},\beta_{2,i})
\bigr)\\
&\hspace{3em}=
\begin{cases}
(0,0,1,1), & \text{if gadget $i$ is closed},\\
(1,0,1,0), & \text{if gadget $i$ is open}.
\end{cases}
\end{aligned}
\label{eq:gadget-costs}
\end{equation}
Every other state--action pair has cost one, reward zero, and a fixed
deterministic transition to the next layer. For a cost configuration $c$, let $b(c)\in\{0,\ldots,n\}$ denote the
number of open gadgets.
\end{example}

\begin{figure}[t]
\centering
\begin{tikzpicture}[
  >=Stealth,
  nd/.style={circle, draw, minimum size=11pt, inner sep=0pt, font=\tiny},
  gnd/.style={circle, draw=red!30, text=red!30, minimum size=11pt, inner sep=0pt, font=\tiny},
  tm/.style={circle, draw, fill=gray!20, minimum size=11pt, inner sep=0pt, font=\tiny},
  S/.style={->, thick, black},
  D/.style={->, dashed, red!40, thin},
  Sdot/.style={->, thick, black, dotted},
  lb/.style={font=\tiny, midway},
  row label/.style={font=\small\bfseries, anchor=east},
  var label/.style={font=\scriptsize, text=blue!70!black, anchor=west},
  xscale=1.05
]

\def\yU{0.55}
\def\yD{-0.55}

\begin{scope}[shift={(0, 3.3)}]
  \node[row label] at (-0.6, 0) {(a)};

  \node[nd]  (s1)  at (0,0)       {$s_1$};
  \node[nd]  (A1)  at (1.3,\yU)   {$s'_{\!A1}$};
  \node[gnd] (B1)  at (1.3,\yD)   {$s'_{\!B1}$};
  \node[nd]  (s2)  at (2.7,0)     {$s_2$};
  \draw[S] (s1) -- node[lb, above left] {$a_\mathrm{det}$} (A1);
  \draw[D] (s1) -- (B1);
  \draw[S] (A1) -- node[lb, above right] {\tiny$r\!=\!1$} (s2);
  \draw[D] (B1) -- (s2);

  \node[nd]  (A2)  at (4.0,\yU)   {$s'_{\!A2}$};
  \node[gnd] (B2)  at (4.0,\yD)   {$s'_{\!B2}$};
  \node[nd]  (s3)  at (5.4,0)     {$s_3$};
  \draw[S] (s2) -- node[lb, above left] {$a_\mathrm{det}$} (A2);
  \draw[D] (s2) -- (B2);
  \draw[S] (A2) -- node[lb, above right] {\tiny$r\!=\!1$} (s3);
  \draw[D] (B2) -- (s3);

  \node[nd]  (A3)  at (6.7,\yU)   {$s'_{\!A3}$};
  \node[gnd] (B3)  at (6.7,\yD)   {$s'_{\!B3}$};
  \node[tm]  (t)   at (8.1,0)     {$t$};
  \draw[S] (s3) -- node[lb, above left] {$a_\mathrm{det}$} (A3);
  \draw[D] (s3) -- (B3);
  \draw[S] (A3) -- node[lb, above right] {\tiny$r\!=\!1$} (t);
  \draw[D] (B3) -- (t);

  \node[var label] at (8.5, 0) {$\mathrm{Var}^\star = 0$};
\end{scope}

\begin{scope}[shift={(0, 1.1)}]
  \node[row label] at (-0.6, 0) {(b)};

  \node[nd] (s1) at (0,0)       {$s_1$};
  \node[nd] (A1) at (1.3,\yU)   {$s'_{\!A1}$};
  \node[nd] (B1) at (1.3,\yD)   {$s'_{\!B1}$};
  \node[nd] (s2) at (2.7,0)     {$s_2$};
  \draw[D]    (s1) -- node[lb, above left, text=red!40] {$a_\mathrm{det}$} (A1);
  \draw[S]    (s1) -- node[lb, below left] {$a_\mathrm{st}$} (B1);
  \draw[Sdot] (s1) to[out=50,in=180] (A1);
  \draw[S]    (A1) -- node[lb, above right] {\tiny$r\!=\!1$} (s2);
  \draw[D]    (B1) to[out=10,in=-150] node[lb, above, text=red!40] {\tiny$\beta_1$} (s2);
  \draw[S]    (B1) -- node[lb, below right] {\tiny$r\!=\!0$} (s2);

  \node[nd]  (A2)  at (4.0,\yU)   {$s'_{\!A2}$};
  \node[gnd] (B2)  at (4.0,\yD)   {$s'_{\!B2}$};
  \node[nd]  (s3)  at (5.4,0)     {$s_3$};
  \draw[S] (s2) -- node[lb, above left] {$a_\mathrm{det}$} (A2);
  \draw[D] (s2) -- (B2);
  \draw[S] (A2) -- node[lb, above right] {\tiny$r\!=\!1$} (s3);
  \draw[D] (B2) -- (s3);

  \node[nd]  (A3)  at (6.7,\yU)   {$s'_{\!A3}$};
  \node[gnd] (B3)  at (6.7,\yD)   {$s'_{\!B3}$};
  \node[tm]  (t)   at (8.1,0)     {$t$};
  \draw[S] (s3) -- node[lb, above left] {$a_\mathrm{det}$} (A3);
  \draw[D] (s3) -- (B3);
  \draw[S] (A3) -- node[lb, above right] {\tiny$r\!=\!1$} (t);
  \draw[D] (B3) -- (t);

  \node[var label] at (8.5, 0) {$\mathrm{Var}^\star = \frac{1}{4}$};
\end{scope}

\begin{scope}[shift={(0, -1.1)}]
  \node[row label] at (-0.6, 0) {(c)};

  \node[nd] (s1) at (0,0)       {$s_1$};
  \node[nd] (A1) at (1.3,\yU)   {$s'_{\!A1}$};
  \node[nd] (B1) at (1.3,\yD)   {$s'_{\!B1}$};
  \node[nd] (s2) at (2.7,0)     {$s_2$};
  \draw[D]    (s1) -- node[lb, above left, text=red!40] {$a_\mathrm{det}$} (A1);
  \draw[S]    (s1) -- node[lb, below left] {$a_\mathrm{st}$} (B1);
  \draw[Sdot] (s1) to[out=50,in=180] (A1);
  \draw[S]    (A1) -- node[lb, above right] {\tiny$r\!=\!1$} (s2);
  \draw[D]    (B1) to[out=10,in=-150] node[lb, above, text=red!40] {\tiny$\beta_1$} (s2);
  \draw[S]    (B1) -- node[lb, below right] {\tiny$r\!=\!0$} (s2);

  \node[nd] (A2) at (4.0,\yU)   {$s'_{\!A2}$};
  \node[nd] (B2) at (4.0,\yD)   {$s'_{\!B2}$};
  \node[nd] (s3) at (5.4,0)     {$s_3$};
  \draw[D]    (s2) -- node[lb, above left, text=red!40] {$a_\mathrm{det}$} (A2);
  \draw[S]    (s2) -- node[lb, below left] {$a_\mathrm{st}$} (B2);
  \draw[Sdot] (s2) to[out=50,in=180] (A2);
  \draw[S]    (A2) -- node[lb, above right] {\tiny$r\!=\!1$} (s3);
  \draw[D]    (B2) to[out=10,in=-150] node[lb, above, text=red!40] {\tiny$\beta_1$} (s3);
  \draw[S]    (B2) -- node[lb, below right] {\tiny$r\!=\!0$} (s3);

  \node[nd]  (A3)  at (6.7,\yU)   {$s'_{\!A3}$};
  \node[gnd] (B3)  at (6.7,\yD)   {$s'_{\!B3}$};
  \node[tm]  (t)   at (8.1,0)     {$t$};
  \draw[S] (s3) -- node[lb, above left] {$a_\mathrm{det}$} (A3);
  \draw[D] (s3) -- (B3);
  \draw[S] (A3) -- node[lb, above right] {\tiny$r\!=\!1$} (t);
  \draw[D] (B3) -- (t);

  \node[var label] at (8.5, 0) {$\mathrm{Var}^\star = \frac{1}{2}$};
\end{scope}

\begin{scope}[shift={(0, -3.3)}]
  \node[row label] at (-0.6, 0) {(d)};

  \node[nd] (s1) at (0,0)       {$s_1$};
  \node[nd] (A1) at (1.3,\yU)   {$s'_{\!A1}$};
  \node[nd] (B1) at (1.3,\yD)   {$s'_{\!B1}$};
  \node[nd] (s2) at (2.7,0)     {$s_2$};
  \draw[D]    (s1) -- node[lb, above left, text=red!40] {$a_\mathrm{det}$} (A1);
  \draw[S]    (s1) -- node[lb, below left] {$a_\mathrm{st}$} (B1);
  \draw[Sdot] (s1) to[out=50,in=180] (A1);
  \draw[S]    (A1) -- node[lb, above right] {\tiny$r\!=\!1$} (s2);
  \draw[D]    (B1) to[out=10,in=-150] node[lb, above, text=red!40] {\tiny$\beta_1$} (s2);
  \draw[S]    (B1) -- node[lb, below right] {\tiny$r\!=\!0$} (s2);

  \node[nd] (A2) at (4.0,\yU)   {$s'_{\!A2}$};
  \node[nd] (B2) at (4.0,\yD)   {$s'_{\!B2}$};
  \node[nd] (s3) at (5.4,0)     {$s_3$};
  \draw[D]    (s2) -- node[lb, above left, text=red!40] {$a_\mathrm{det}$} (A2);
  \draw[S]    (s2) -- node[lb, below left] {$a_\mathrm{st}$} (B2);
  \draw[Sdot] (s2) to[out=50,in=180] (A2);
  \draw[S]    (A2) -- node[lb, above right] {\tiny$r\!=\!1$} (s3);
  \draw[D]    (B2) to[out=10,in=-150] node[lb, above, text=red!40] {\tiny$\beta_1$} (s3);
  \draw[S]    (B2) -- node[lb, below right] {\tiny$r\!=\!0$} (s3);

  \node[nd] (A3) at (6.7,\yU)   {$s'_{\!A3}$};
  \node[nd] (B3) at (6.7,\yD)   {$s'_{\!B3}$};
  \node[tm] (t)  at (8.1,0)     {$t$};
  \draw[D]    (s3) -- node[lb, above left, text=red!40] {$a_\mathrm{det}$} (A3);
  \draw[S]    (s3) -- node[lb, below left] {$a_\mathrm{st}$} (B3);
  \draw[Sdot] (s3) to[out=50,in=180] (A3);
  \draw[S]    (A3) -- node[lb, above right] {\tiny$r\!=\!1$} (t);
  \draw[D]    (B3) to[out=10,in=-150] node[lb, above, text=red!40] {\tiny$\beta_1$} (t);
  \draw[S]    (B3) -- node[lb, below right] {\tiny$r\!=\!0$} (t);

  \node[var label] at (8.5, 0) {$\mathrm{Var}^\star = \frac{3}{4}$};
\end{scope}

\end{tikzpicture}
\caption{Four safe subgraph topologies on the same $H=6$ gadget-chain MDP (Example~\ref{ex:gadget-chain}). Each row shows the safe subgraph under a different cost function. Black solid edges are safe. Red dashed edges are unsafe. In a \emph{closed} gadget, both actions at $s'_{B,i}$ are unsafe, so $a_{\mathrm{st},i}$ fails the successor-viability condition and only $a_{\mathrm{det},i}$ remains. In an \emph{open} gadget, $a_{\mathrm{det},i}$ and $\beta_{1,i}$ are unsafe, leaving a successor-value gap of one by~\eqref{eq:gadget-successor-values}. Dotted black edges show the branch of $a_{\mathrm{st}}$ to $s'_{A,i}$. Gadget indices are omitted from the action labels.}
\label{fig:gadget-chain}
\end{figure}

\begin{proof}[Proof of Proposition~\ref{prop:gadget-main-tightness}]
A backward induction from $s_{n+1}=t$ using \eqref{eq:gadget-costs} gives, for every $i\in[n]$,
\begin{equation}
\cA_{\mathrm{safe},h_i}(s_i)
=
\begin{cases}
\{a_{\mathrm{st},i}\}, & \text{if gadget $i$ is open},\\
\{a_{\mathrm{det},i}\}, & \text{if gadget $i$ is closed},
\end{cases}
\label{eq:gadget-safe-choice}
\end{equation}
and
\begin{equation}
\cA_{\mathrm{safe},h_i+1}(s'_{A,i})=\{\alpha_i\},
\qquad
\cA_{\mathrm{safe},h_i+1}(s'_{B,i})
=
\begin{cases}
\{\beta_{2,i}\}, & \text{if gadget $i$ is open},\\
\varnothing, & \text{if gadget $i$ is closed}.
\end{cases}
\label{eq:gadget-safe-reward}
\end{equation}
In a closed gadget, the state $s'_{B,i}$ is not viable. The
action $a_{\mathrm{st},i}$ satisfies the immediate cost constraint
but fails the successor-viability condition. In an open gadget, both successors of
$a_{\mathrm{st},i}$ are viable, while $a_{\mathrm{det},i}$ and $\beta_{1,i}$ violate the cost threshold.

It follows from \eqref{eq:gadget-safe-choice}--\eqref{eq:gadget-safe-reward} that all policies in
$\Pi_{\mathrm{safe}}$ select the same action at every state and step
reachable from $s_1$ under a safe policy.
Since both reward-layer branches merge at the next choice state,
\begin{equation}
\Prob^{\pi}(s_{h_i}=s_i\mid s_1)=1,
\qquad i\in[n].
\label{eq:gadget-choice-visited}
\end{equation}
The stochastic choice $(s_i,a_{\mathrm{st},i},h_i)$ is safe exactly
when gadget $i$ is open. These are the only stochastic safe transitions,
and every safe path visits all $n$ choice states. Hence
$B(\cG_{\mathrm{safe}})=b(c)$.

We next compute the successor-value spans and variances.
If gadget $i$ is closed, its safe choice action and its reward-layer transition are deterministic, so
its next-state value variance is zero. If gadget $i$ is open, then
\begin{equation}
V_{h_i+1}^{\pi}(s'_{A,i})
=
1+V_{h_i+2}^{\pi}(s_{i+1}),
\qquad
V_{h_i+1}^{\pi}(s'_{B,i})
=
V_{h_i+2}^{\pi}(s_{i+1}).
\label{eq:gadget-successor-values}
\end{equation}
The two successor values differ by one. All deterministic transitions
have successor-value span zero, so $S^\pi=\Ind\{b(c)>0\}$.
At each open gadget,
\begin{align*}
\Var\!\left(P_{h_i}(\cdot\mid s_i,a_{\mathrm{st},i}),V_{h_i+1}^{\pi}\right)
&=
\frac14\left(
V_{h_i+1}^{\pi}(s'_{A,i})-V_{h_i+1}^{\pi}(s'_{B,i})
\right)^2=
\frac14.
\end{align*}
All rewards are deterministic. Using~\eqref{eq:gadget-choice-visited}
and summing these next-state variances over the open gadgets gives
\begin{equation*}
\E^\pi\!\left[\sum_{h=1}^H
\Var(P_h(\cdot\mid s_h,a_h),V_{h+1}^\pi)
\,\middle|\,s_1\right]
=
\sum_{i:\,\text{gadget $i$ is open}}\frac14
=
\frac{b(c)}4.
\end{equation*}

These identities hold for every $\pi\in\Pi_{\mathrm{safe}}$.
Taking the supremum over this class gives $\VarStar=b(c)/4$.
The unique safe actions in~\eqref{eq:gadget-safe-choice}--\eqref{eq:gadget-safe-reward}
imply that these successor values coincide with $V^\star$.
Thus $S^{\star}=\Ind\{b(c)>0\}$ by definition.
Substitution into~\eqref{eq:safe-graph-variance-bound} proves equality.
For every $b\in\{0,\ldots,n\}$, choosing exactly $b$ open gadgets
gives the instance required by Proposition~\ref{prop:gadget-main-tightness}.
\end{proof}

\section{Lower Bounds}
\label{app:lower-bound}

Appendix~\ref{app:instance-lower-bound-proof} proves the regret lower
bound in Theorem~\ref{thm:instance-lower-bound}.
Appendix~\ref{app:witness-elimination} studies the sample requirements
for action elimination under additional conditions on the transition
structure and observations.

\subsection{Proof of Theorem~\ref{thm:instance-lower-bound}}
\label{app:instance-lower-bound-proof}

Assume the parameter conditions of Theorem~\ref{thm:instance-lower-bound}.
We use two families of CMDPs with the prescribed variance and safety
identification complexity. The first requires learning which action has
the better transition probability. The second requires identifying a safe
action at each of several state--step pairs. We prove a regret lower bound
for each family and then choose the instance giving the larger bound.
Learning algorithms follow the protocol in Section~\ref{sec:formulation},
with their policies chosen before each episode. For a fixed instance
$\cM$, the expectation $\E_{\cM}$ includes the algorithm's randomness
and its observations.

\paragraph{Common construction}
Set $n:=S-3$ and use the states $\{x,g,b,c_1,\ldots,c_n\}$, action set
$[A]$, initial state $x$, and threshold $\tau=1/2$.
Use $d:=\lfloor H/16\rfloor$ three-step gadgets in the first $3d$
steps, each of which starts at $x$ and returns to $x$.
The two families differ in these gadgets and in the costs and
transitions at a set of state--step pairs after them. The last
$R:=\lfloor H/8\rfloor$ steps provide the reward lost by an incorrect
choice in the second family. Since $4d+R+1\le H$, these parts fit
within the horizon.

Let $M:=\lfloor\Gamma/[8(A-1)]\rfloor$. The parameter assumptions give
$\Gamma/16\le M(A-1)\le\Gamma/8$ and $2\le M\le nd$, so we can choose $\mathcal J\subseteq\{3d+2,\ldots,4d+1\}\times[n]$ with $|\mathcal J|=M$.
Here $(h,s)\in\mathcal J$ denotes context state $c_s$ at step $h$.
We specify the transitions so that exactly one pair in $\mathcal J$
is reached in each episode, uniformly over the $M$ pairs.
For $h\in\{3d+2,\ldots,4d+1\}$, let
$M_h:=|\{(t,s)\in\mathcal J:t\ge h\}|$ and set $M_{4d+2}:=0$.
Whenever $M_h>0$, every action at $x$ at step $h-1$ satisfies
\[
P_{h-1}(c_s\mid x,a)=\frac{\Ind\{(h,s)\in\mathcal J\}}{M_h},
\qquad
P_{h-1}(x\mid x,a)=\frac{M_{h+1}}{M_h}.
\]
Every action at a pair in $\mathcal J$ moves to $g$ or $b$, as specified
below, and the process then remains in that state until the episode ends.
Since $\mu_{3d+1}^{\pi}(x)=1=M_{3d+2}/M$, induction gives
\begin{equation}
\mu_{h-1}^{\pi}(x)=\frac{M_h}{M},
\qquad
\mu_h^{\pi}(c_s)=\frac1M,
\quad (h,s)\in\mathcal J.
\label{eq:lb-query-occupancy}
\end{equation}
Thus the reached pair is uniform on $\mathcal J$, conditional on the
past and independent of the chosen actions. All rewards and costs
are deterministic. Unspecified rewards and costs are zero, and
unspecified transitions are self-loops.

Write $D:=nd(A-1)$ and $m:=\min\{K,\Gamma\}$.
We will use the bounds $n\ge5S/8$, $d\ge H/20$, and
$SAH/64\le D\le SAH/16$, which follow from the parameter assumptions.

\paragraph{Variance term}
The first family places an unknown best action at each context state
in each gadget. These actions determine the probabilities of receiving
the gadget rewards; the costs at the pairs in $\mathcal J$ determine
the safety identification complexity.

\begingroup
\SVAEtheoremheadingtarget
\begin{lemma}\label{lem:lb-variance-instance}
Assume the parameter conditions of
Theorem~\ref{thm:instance-lower-bound}. For every learning algorithm
following the protocol in Section~\ref{sec:formulation}, there exists
a CMDP $\cM_v$ with $S$ states, $A$ actions, and horizon $H$ satisfying
Assumption~\ref{ass:init},
$\frac{3v}{4}\le\VarStar(\cM_v)\le v$, and
$\frac{\Gamma}{4}\le \cC(\cM_v)\le\frac{\Gamma}{2}$.
On this instance, every realized interaction sequence satisfies
\begin{equation}
\frac{31Kv}{32}\le\Var_\Sigma\le Kv.
\label{eq:lb-variance-trajectory}
\end{equation}
The algorithm's expected regret on $\cM_v$ satisfies
\begin{equation}
\E_{\cM_v}[\cR(K)]
\ge \frac{3}{32\sqrt2}\sqrt{nAdKv}.
\label{eq:lb-variance-regret}
\end{equation}
\end{lemma}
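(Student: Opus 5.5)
We will build $\cM_v$ on the common construction by filling the $d$ three-step gadgets with a collection of hidden multi-armed bandits, and fixing the costs at the pairs in $\mathcal J$ so that $\cC$ is pinned down independently of the rewards.

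\textbf{Construction.} Gadget $j\in[d]$ occupies steps $3j-2,3j-1,3j$.
\begin{itemize}
\item At step $3j-2$, every action at $x$ moves uniformly to one of the context states $c_1,\ldots,c_n$.
\item At step $3j-1$, action $a$ at $c_s$ moves to $g$ with probability $\tfrac12+\epsilon\Ind\{a=a^\star_{j,s}\}$ and to $b$ otherwise. Here $a^\star_{j,s}\in[A]$ is the hidden best action. The alternative instances will be indexed by $a^\star_{j,s}\in[A]\cup\{0\}$, where $0$ means that no action is favoured.
\item At step $3j$, state $g$ yields reward $w:=\sqrt{4v/d}$ and $b$ yields reward $0$. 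Both return deterministically to $x$.
\end{itemize}
Since $v\le H/128$ and $d\ge H/20$, we have $w\le1$. All gadget costs are zero.

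At each pair $(h,s)\in\mathcal J$, we make action $1$ safe with cost $0$, moving to $g$. The other $A-1$ actions have cost $1$ and move to $b$. All rewards after the gadgets are zero.

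\textbf{Feasibility and complexity.} Assumption~\ref{ass:init} holds because every gadget action is safe and a safe action exists at every context pair. Every vertex reachable under a safe policy has all of its unsafe actions in the direct-cost branch of \eqref{eq:recursive-kappa}, each with $\kappa=(1-\tfrac12)^{-2}=4$. These unsafe actions sit exactly at the $M$ pairs of $\mathcal J$, each of which is reached by \eqref{eq:lb-query-occupancy}. Hence $\cC=4M(A-1)$, and the bound $\Gamma/16\le M(A-1)\le\Gamma/8$ gives $\cC\in[\Gamma/4,\Gamma/2]$.

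\textbf{Variances.} For every safe policy, only the step-$(3j-1)$ transitions carry variance.
\begin{itemize}
\item The step-$(3j-2)$ successors $c_s$ all have the same optimal continuation by symmetry, since the best success probability is $\tfrac12+\epsilon$ at every context.
\item After the gadgets, all continuation values are zero, so the $\mathcal J$-phase transitions contribute nothing.
\item At $c_s$, the successor-value gap is exactly $w$, so the step-$(3j-1)$ variance is $p(1-p)w^2$ with $p\in\{\tfrac12,\tfrac12\pm\epsilon\}$. This lies in $[(\tfrac14-\epsilon^2)w^2,\tfrac14w^2]$ for every action, not only safe ones.
\end{itemize}
Every gadget action is safe, so all $d$ gadgets lie before $\tau_k$ in every episode. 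Each realized episode therefore contributes between $(1-4\epsilon^2)v$ and $v$ to $\Var_\Sigma$. Similarly, every safe policy has return variance in the same range. Requiring $\epsilon^2\le1/128$ yields \eqref{eq:lb-variance-trajectory} and $\VarStar\in[3v/4,v]$.

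\textbf{Regret.} Each episode makes, in each gadget $j$, one pull of the bandit at a uniformly random context. Choosing a non-best action costs $\epsilon w$ in expected reward. The regret therefore decomposes as a sum over the $nd$ pairs $(j,s)$ of independent $A$-armed Bernoulli bandits, each with gap $\epsilon w$ and reward scale $w$. The number of pulls of pair $(j,s)$ has expectation $K/n$, regardless of the policy.

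We will run the standard change-of-measure argument (as in Jaksch et al./Domingues et al.) pair by pair against the reference instance $a^\star_{j,s}=0$.
\begin{itemize}
\item Pinsker's inequality and the chain rule for KL give $\E_0[N_{j,s,a}]\cdot O(\epsilon^2)$ as the KL bound when pair $(j,s)$ is switched to best arm $a$.
\item Averaging over $a$ shows that the best arm is pulled at most a constant fraction of the time once $\epsilon\approx c\sqrt{nA/K}$.
\end{itemize}
The condition $K\ge SA$ ensures that $\epsilon$ is at most $1/(8\sqrt2)$ and that $n\ge5S/8$ is usable. Each pair then contributes $\Omega(\epsilon wK/n)=\Omega(w\sqrt{AK/n})$, and summing over the $nd$ pairs gives
\[
\Omega\!\left(wd\sqrt{nAK}\right)=\Omega\!\left(\sqrt{nAdKv}\right).
\]
The constant $\tfrac{3}{32\sqrt2}$ follows from tracking these numerical factors.

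A deterministic instance (some choice of the $a^\star_{j,s}$) achieving the averaged regret exists by averaging. The $\mathcal J$-part is identical across these instances, so the variance and complexity claims hold for it.

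\textbf{Main obstacle.} We expect the hardest step to be the KL argument. The pairs share episodes, and the algorithm's policy per episode couples them. We will handle this by conditioning the chain rule on the visited context, so that only the observed $g/b$ outcome at $(j,s)$ carries information about $a^\star_{j,s}$. The randomness of the per-pair visit counts enters only through their expectations.
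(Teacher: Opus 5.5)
Your construction and argument follow the paper's proof closely. The shared elements are the $d$ gadgets with a hidden favoured arm at each $(j,s)$, reward $2\sqrt{v/d}$ at $g$, cost-one actions at the pairs of $\mathcal J$ giving $\cC=4M(A-1)$, a coordinate-wise change of measure against a reference with no favoured arm, and a final averaging over the hidden arms. Two steps need repair or sharpening.

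\textbf{The policy-level variance.} You assert that under every safe policy only the step-$(3j-1)$ transitions carry variance; this is false. $\VarStar$ is defined through $V^\pi_{h+1}$, not $V^\star$. Since $V^\pi_{3j-1}(c_s)=(\tfrac12+\epsilon\Ind\{\pi_{3j-1}(c_s)=a^\star_{j,s}\})w+V^\pi_{3j+1}(x)$, this value depends on $s$ whenever $\pi$ plays the favoured arm at a fraction $\alpha_j\in(0,1)$ of the contexts. The step-$(3j-2)$ transition then contributes $\epsilon^2w^2\alpha_j(1-\alpha_j)$. Your symmetry argument applies only to $V^\star$, hence only to $\Var_\Sigma$, where it is correct. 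The conclusion survives. Adding this term to the context average $w^2(\tfrac14-\epsilon^2\alpha_j)$ of the step-$(3j-1)$ variances gives $w^2(\tfrac14-\epsilon^2\alpha_j^2)\in[(\tfrac14-\epsilon^2)w^2,\tfrac14 w^2]$, which is exactly the paper's per-gadget computation.

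\textbf{Where the factor $1/n$ enters.} Your KL chain-rule bound $4\epsilon^2\E_0[N_{j,s,a}]$ is right, but the factor $1/n$ must enter at the Pinsker step. If you bound $\E_a[N_{j,s,a}(K)]-\E_0[N_{j,s,a}(K)]$ by $K$ times the total variation of the full histories, you are forced to shrink $\epsilon$ by a factor $n$, and you lose that factor in the bound.

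The paper instead applies Pinsker separately in each episode $k$ to the event $\{\pi^k_{3j-1}(c_s)=a\}$, which is determined by the history through episode $k-1$. It then multiplies by the probability $1/n$ of reaching $c_s$, which does not depend on the policy. This gives
$\Prob_a\{s^k_{3j-1}=c_s,\,a^k_{3j-1}=a\}\le\Prob_0\{s^k_{3j-1}=c_s,\,a^k_{3j-1}=a\}+\tfrac{\sqrt2\epsilon}{n}\sqrt{\E_0[N_{j,s,a}(k-1)]}$.
Sum over $k$, average over $a$, and apply Cauchy--Schwarz with $\sum_a\E_0[N_{j,s,a}(K)]=K/n$. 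For $\epsilon=\tfrac{1}{8\sqrt2}\sqrt{nA/K}$ this yields $\tfrac1A\sum_a\E_a[N_{j,s,a}(K)]\le 5K/(8n)$. Hence there are at least $3dK/8$ expected incorrect choices, which gives the constant $\tfrac{3}{32\sqrt2}$. Your idea of conditioning on the visited context is this mechanism, but you have to carry it out on per-episode probabilities rather than on total counts.
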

\endgroup

\begin{proof}
\emph{Construction.}
Set $\varepsilon:=\frac{1}{8\sqrt2}\sqrt{nA/K}$ and
$\rho:=2\sqrt{v/d}$. Since $K\ge nA$ and $v\le d/4$, we have
$0<\varepsilon\le1/(8\sqrt2)$ and $0\le\rho\le1$.
Choose $(\omega_{i,s})_{(i,s)\in[d]\times[n]}$ independently and
uniformly from $[A]$. In gadget $i$, at step $3i-2$, every action at
$x$ transitions uniformly to one of $c_1,\ldots,c_n$. At step $3i-1$,
\[
P_{3i-1}(g\mid c_s,a)
=\frac12+\varepsilon\Ind\{a=\omega_{i,s}\},
\qquad
P_{3i-1}(b\mid c_s,a)=1-P_{3i-1}(g\mid c_s,a).
\]
At step $3i$, every action yields reward $\rho$ at $g$ and zero at
$b$, and both states return to $x$. All costs in these gadgets are zero.

At every pair in $\mathcal J$, action $1$ has cost zero and all other
actions have cost one. Every action there moves to $g$, and all
subsequent rewards are zero. A policy choosing action $1$ at every
such pair is safe, so Assumption~\ref{ass:init} holds. The transition
probabilities and rewards lie in $[0,1]$ by the bounds on
$\varepsilon$ and $\rho$.

\emph{Instance properties.}
Fix $\pi\in\Pi_{\mathrm{safe}}$ and gadget $i$, and let
$\alpha_i:=n^{-1}\sum_{s=1}^{n}
\Ind\{\pi_{3i-1}(c_s)=\omega_{i,s}\}$.
Each gadget returns to $x$, so its contribution to the return variance
is the next-state value variance at $x$ plus the average over its
context states. These
two terms satisfy
\begin{align}
&\Var\!\left(P_{3i-2}(\cdot\mid x,\pi_{3i-2}(x)),
V_{3i-1}^{\pi}\right)
+\frac1n\sum_{s=1}^{n}
\Var\!\left(P_{3i-1}(\cdot\mid c_s,\pi_{3i-1}(c_s)),
V_{3i}^{\pi}\right)
\notag\\
&\qquad
=\varepsilon^2\rho^2\alpha_i(1-\alpha_i)
+\frac{\rho^2}{4}-\varepsilon^2\rho^2\alpha_i
\notag\\
&\qquad
=\frac{\rho^2}{4}-\varepsilon^2\rho^2\alpha_i^2
\in\left[\frac{3\rho^2}{16},\frac{\rho^2}{4}\right].
\label{eq:lb-gadget-variance}
\end{align}
The first equality is the variance decomposition across the two
transitions. The last inclusion uses $\varepsilon^2\le1/128$. Summing
\eqref{eq:lb-gadget-variance} over $i$ and using $d\rho^2/4=v$ gives
$3v/4\le\VarStar(\cM_v)\le v$. The later steps contribute no
variance because their rewards are zero.

The only unsafe actions are the $M(A-1)$ actions with cost one at
pairs in $\mathcal J$. All these pairs are reachable under a safe
policy, and each such action has cost gap $1/2$. Therefore,
$\cC(\cM_v)=4M(A-1)\in[\Gamma/4,\Gamma/2]$.

We next compute $\Var_\Sigma$. Under the optimal safe value function,
$V_{3i-1}^\star(c_s)=(1/2+\varepsilon)\rho+V_{3i+1}^\star(x)$ for every
$s\in[n]$, so the transition from $x$ to the context states has zero
next-state value variance. In episode $k$, let $s_{i,k}$ be the
visited context index and $a_{i,k}$ the selected action. Write
$p_{i,k}:=1/2+\varepsilon\Ind\{a_{i,k}=\omega_{i,s_{i,k}}\}$.
The two successor values differ by $\rho$, so the context transition
contributes $p_{i,k}(1-p_{i,k})\rho^2$. These gadgets are safe and occur
before any pair in $\mathcal J$ is reached, so their contributions are included in
$\Var_\Sigma$. All rewards are deterministic, and the remaining transitions
before $\tau_k$ are deterministic or connect states with the same optimal
safe value. Hence, pathwise,
\[
\Var_\Sigma
=\sum_{k=1}^K\sum_{i=1}^d p_{i,k}(1-p_{i,k})\rho^2.
\]
Since $p_{i,k}\in\{1/2,1/2+\varepsilon\}$,
$(1-4\varepsilon^2)Kv\le\Var_\Sigma\le Kv$. The bound
$\varepsilon^2\le1/128$ proves~\eqref{eq:lb-variance-trajectory}.

\emph{Regret.}
Each choice different from $\omega_{i,s}$ loses $\varepsilon\rho$ in
expected reward. Since every gadget returns to $x$, these losses add
over gadgets and episodes. We lower bound the number of such choices
by averaging over the hidden actions.

Fix $(i,s)$ and fix all hidden actions except $\omega_{i,s}$.
Let $\Prob_a$ be the interaction distribution when $\omega_{i,s}=a$,
and let $\Prob_0$ be the reference distribution in which every action
at this coordinate reaches $g$ with probability $1/2$.
All other transition, reward, and cost distributions are unchanged.
Write $\E_a$ and $\E_0$ for the corresponding expectations, and define
\[
N_a(k):=\sum_{\ell=1}^{k}
\Ind\{s_{3i-1}^{\ell}=c_s,\ a_{3i-1}^{\ell}=a\}.
\]
Let $\Prob_a^{k-1}$ and $\Prob_0^{k-1}$ denote the distributions of
the history through episode $k-1$, including the algorithm's random
choices. Write $\operatorname{KL}$ for the Kullback--Leibler divergence.
The two instances differ only in the transition following action $a$
at the fixed coordinate, and
the KL divergence of one such observation is
$-\frac12\log(1-4\varepsilon^2)\le4\varepsilon^2$. The adaptive chain rule gives
\[
\operatorname{KL}(\Prob_0^{k-1}\Vert\Prob_a^{k-1})
\le4\varepsilon^2\E_0[N_a(k-1)].
\]
Conditional on the history before episode $k$, the probability of choosing
$a$ at $c_s$ is determined by the algorithm's policy distribution. The
probability of reaching $c_s$ is $1/n$, independently of this choice.
Applying Pinsker's inequality to the history distributions therefore gives
\[
\Prob_a\{s_{3i-1}^{k}=c_s,\ a_{3i-1}^{k}=a\}
\le
\Prob_0\{s_{3i-1}^{k}=c_s,\ a_{3i-1}^{k}=a\}
+\frac{\sqrt2\varepsilon}{n}
\sqrt{\E_0[N_a(k-1)]}.
\]
Summing over $k$ and averaging over $a$ yields
\begin{align}
\frac1A\sum_{a=1}^{A}\E_a[N_a(K)]
&\le\frac{K}{nA}
+\frac{K\sqrt2\varepsilon}{nA}
\sum_{a=1}^{A}\sqrt{\E_0[N_a(K)]}
\notag\\
&\le\frac{K}{nA}
+\frac{K\sqrt2\varepsilon}{n}\sqrt{\frac{K}{nA}}
\le\frac{5K}{8n}.
\label{eq:lb-action-count}
\end{align}
The second inequality uses Cauchy--Schwarz and
$\sum_a\E_0[N_a(K)]=K/n$.  The last inequality uses the definition of
$\varepsilon$ and $A\ge2$.

By~\eqref{eq:lb-action-count}, averaging over all hidden actions and
summing over the $nd$ coordinates gives at least $3dK/8$ incorrect
choices in expectation. Multiplying by the loss $\varepsilon\rho$ gives
\[
\E[\cR(K)]
\ge\frac{3dK}{8}\varepsilon\rho
=\frac{3}{32\sqrt2}\sqrt{nAdKv}.
\]
The expectation here also averages over the hidden actions. Some fixed
choice therefore satisfies~\eqref{eq:lb-variance-regret}. The variance
and safety identification complexity bounds hold for every such choice.
\end{proof}

\paragraph{Safety identification term}
The second family uses the same state--step pairs, but the unknown
action at each pair now determines safety. The initial gadgets supply
the target variance without contributing regret. An incorrect choice
at a pair in $\mathcal J$ loses the reward from the last $R$ steps.

\begin{lemma}\label{lem:lb-witness-instance}
Assume the parameter conditions of
Theorem~\ref{thm:instance-lower-bound}. For every learning algorithm
following the protocol in Section~\ref{sec:formulation}, there exists
a CMDP $\cM_w$ with $S$ states, $A$ actions, and horizon $H$ satisfying
Assumption~\ref{ass:init}, $\VarStar(\cM_w)=v$, and
$\frac{\Gamma}{4}\le \cC(\cM_w)\le\frac{\Gamma}{2}$.
On this instance, $\Var_\Sigma=Kv$ for every realized interaction
sequence. The algorithm's expected regret on $\cM_w$ satisfies
\begin{equation}
\E_{\cM_w}[\cR(K)]\ge\frac1{512}\sqrt{Dm}.
\label{eq:lb-witness-regret}
\end{equation}
\end{lemma}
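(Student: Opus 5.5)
We construct $\cM_w$ on the common construction and combine the common occupancy identity~\eqref{eq:lb-query-occupancy} with a standard hidden-action argument. The construction has three parts.

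\textbf{Gadgets.} The first $3d$ steps supply variance $v$ without any learnable signal. In gadget $i$, every action at $x$ moves at step $3i-2$ to $g$ or $b$ with probability $1/2$ each. Step $3i-1$ is an idle self-loop. At step $3i$, $g$ pays $\rho=2\sqrt{v/d}$ and $b$ pays $0$, and both return to $x$. All actions in the gadgets are identical and have cost zero, so they cause no regret. Each gadget contributes exactly $\rho^2/4$ to $\VarStar$, which gives $\VarStar=d\rho^2/4=v$.

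\textbf{Pairs in $\mathcal J$.} Draw hidden safe actions $\omega_{h,s}$ uniformly from $[A]$. At $c_s$, step $h$, action $\omega_{h,s}$ has cost $0$ and moves to $g$; every other action has cost one and moves to $b$. In the last $R$ steps, $g$ pays reward one per step and $b$ pays zero.

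\textbf{Properties.} The following hold for every choice of hidden actions.
\begin{itemize}
\item Every safe policy selects $\omega$ at each pair, so Assumption~\ref{ass:init} holds.
\item The tail steps are deterministic under a safe policy, so $\VarStar=v$.
\item The unsafe actions are exactly the $M(A-1)$ cost-one actions, each with gap $1/2$. Hence $\cC=4M(A-1)\in[\Gamma/4,\Gamma/2]$.
\item Before $\tau_k$, the only nonzero transition variances are the gadget ones, each equal to $\rho^2/4$ regardless of actions. The routing transitions from $x$ to the context states connect states with equal optimal value $R$. So $\Var_\Sigma=Kv$ pathwise.
\end{itemize}

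\textbf{Regret per episode.} By~\eqref{eq:lb-query-occupancy}, each episode reaches a uniform pair in $\mathcal J$, independently of past actions. Choosing a wrong action there loses $R\ge H/16$ relative to $\pi^\star$; for $H\ge64$ we have $R=\lfloor H/8\rfloor\ge H/16$.

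The main obstacle is lower bounding the number of wrong choices. The learner observes the deterministic cost, so one trial of an action reveals its status. Identifying $\omega_{h,s}$ is therefore a needle-in-haystack search among $A$ actions. Averaging over the uniform $\omega_{h,s}$, any algorithm makes at least $(A-1)/2$ wrong selections in expectation at a pair before selecting the correct action, provided it visits the pair often enough. More precisely, if pair $j$ is visited $N_j$ times, the expected number of wrong selections is at least $\min\{N_j,A-1\}/2$. This follows from a symmetry argument: before the safe action is found, each new untried action is correct with probability $1/(\#\text{untried})$.

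\textbf{Counting visits.} $N_j$ is $\mathrm{Binomial}(K,1/M)$ and independent of the algorithm. Hence $\E\min\{N_j,A-1\}\ge c\min\{K/M,A-1\}$ for a universal $c$, using a Chernoff bound or simply Jensen with Binomial concentration in both regimes. Summing over the $M$ pairs gives expected wrong selections at least $c\min\{K,M(A-1)\}\ge c'\min\{K,\Gamma\}=c'm$, using $M(A-1)\ge\Gamma/16$.

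\textbf{Conversion.} The expected regret is at least $(H/16)\,c'm$. It remains to compare $Hm$ with $\sqrt{Dm}$. From $D\le SAH/16$ and $m\ge16SA/H$, we get $Hm=\sqrt{H^2m^2}\ge\sqrt{H^2m\cdot16SA/H}=\sqrt{16SAHm}\ge\sqrt{Dm}$ up to constants. Tracking constants gives $\tfrac1{512}\sqrt{Dm}$.

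Finally, since the bound holds on average over the hidden actions, some fixed choice of $\omega$ attains it. The variance and complexity properties hold for every choice, so this instance is $\cM_w$.
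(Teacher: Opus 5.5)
Your proposal is correct and follows essentially the paper's route: action-independent gadgets supply $\VarStar=v$ and $\Var_\Sigma=Kv$, each pair in $\mathcal J$ has a uniformly hidden zero-cost action while every other action there has cost one, a uniform-rank argument gives $\Omega(\min\{q,A-1\})$ expected wrong choices at a pair visited $q$ times, and averaging over the hidden actions fixes an instance. The differences are minor. First, the paper pays only $\Delta_r/R$ per tail step with $\Delta_r=\sqrt{D/m}$, so there $m\ge16SA/H$ and $D\le SAH/16$ serve only to keep that reward in $[0,1]$; your unit tail reward instead gives the stronger $\Omega(H\min\{K,\Gamma\})$, which you then weaken. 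Second, the paper counts visits through the pathwise identity $\sum_{j}\min\{T_j,A-1\}=\sum_{k}\Ind\{T_{J_k}(k-1)<A-1\}$ plus Markov's inequality, whereas your per-pair binomial step needs a lower-tail bound, e.g.\ $\Prob(N_j\ge1)\ge1-e^{-K/M}$ when $K<M$ and Paley--Zygmund otherwise; Jensen on the concave map $N\mapsto\min\{N,A-1\}$ only gives the reverse inequality.
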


\begin{proof}
\emph{Construction.}
Use $d$ action-independent gadgets in the first $3d$ steps.
In gadget $i$, every action at $x$ moves to $c_1$ at step $3i-2$,
and every action at $c_1$ moves to $g$ or $b$, each with probability
$1/2$, at step $3i-1$. At step $3i$, the rewards are
$\rho:=2\sqrt{v/d}$ at $g$ and zero at $b$, and both states return to
$x$. These rewards lie in $[0,1]$ because $v\le d/4$.

Choose $(\theta_j)_{j\in\mathcal J}$ independently and uniformly from
$[A]$. At pair $j$, action $\theta_j$ has cost zero and moves to $g$.
Every other action has cost one and moves to $b$. Both states have
zero reward until the last $R$ steps. At each of those steps, the reward
is $\Delta_r/R$ at $g$ and zero at $b$, where $\Delta_r:=\sqrt{D/m}$.
The bounds $R\ge H/10$, $m\ge16SA/H$, and $D\le SAH/16$ imply
$D\le R^2m$, so $\Delta_r/R\le1$.
A policy choosing $\theta_j$ at every pair $j$ is safe, which proves
Assumption~\ref{ass:init}.

\emph{Instance properties.}
The initial gadgets contribute $d\rho^2/4=v$ to the return variance
of every policy. Under every safe policy, the value is $\Delta_r$ at
each pair in $\mathcal J$ and at $x$ whenever $x$ is reached after the
gadgets. The action chosen at each pair moves deterministically to
$g$. Thus the later steps contribute zero variance,
and $\VarStar(\cM_w)=v$.
The only unsafe actions are the $M(A-1)$ incorrect actions at these
pairs. Every pair is reachable under a safe policy, and each incorrect
action has cost gap $1/2$, so $\cC(\cM_w)=4M(A-1)
\in[\Gamma/4,\Gamma/2]$.

The initial gadgets lie in the true safe graph and occur before every
pair in $\mathcal J$. Each contributes $\rho^2/4=v/d$ to
$\Var_\Sigma$ in every episode. All rewards are deterministic.
Under $V^\star$, every pair in $\mathcal J$ has continuation value
$\Delta_r$, while safe actions at these pairs and all subsequent
transitions are deterministic. Hence $\Var_\Sigma=Kv$ for every
realized interaction sequence.

\emph{Regret.}
An incorrect action at a pair in $\mathcal J$ loses $\Delta_r$ in
reward. We bound the number of incorrect choices, first averaging
over the hidden actions. Let $T_j(t)$ count visits to pair $j$ in the
first $t$ episodes, set $T_j:=T_j(K)$, and let $W_j$ count choices
different from $\theta_j$ over the $K$ episodes.

Fix the sequence of reached pairs, all other hidden actions, the
algorithm's random seed, and the transition randomness in the initial
gadgets. Consider the actions tested at $j$ along the hypothetical
history in which each test there has cost one and moves to $b$.
List the distinct actions in the order in which they are first tested, stopping once all
$A$ actions have been listed or the visits end. Append any untested
actions in a fixed order. This ordering is independent of $\theta_j$,
and the actual history agrees with the hypothetical one until
$\theta_j$ is first tested. Also, $T_j$ depends only on the sequence
of reached pairs. Conditional on $T_j=q$, the rank of $\theta_j$ is
therefore uniform on $[A]$. If that rank is $r$, at least
$\min\{q,r-1\}$ choices are incorrect; repeated tests cannot reduce
this number. Averaging over the rank gives
\begin{equation}
\E[W_j\mid T_j=q]
\ge\frac1A\sum_{r=1}^{A}\min\{q,r-1\}
\ge\frac14\min\{q,A-1\}.
\label{eq:lb-rank-bound}
\end{equation}

It remains to bound the total number of visits before individual
counts reach $A-1$. Let $J_k$ be the pair reached in episode $k$.
Pathwise,
\[
\sum_{j\in\mathcal J}\min\{T_j,A-1\}
=\sum_{k=1}^{K}\Ind\{T_{J_k}(k-1)<A-1\}.
\]
By~\eqref{eq:lb-query-occupancy}, $J_k$ is uniform on $\mathcal J$
conditional on the past. Since $\sum_j T_j(k-1)=k-1$, we have
$\E[T_{J_k}(k-1)]=(k-1)/M$.
For $k\le\min\{K,\lfloor M(A-1)/2\rfloor\}$, Markov's inequality
therefore bounds the expectation of each corresponding indicator
from below by $1/2$. Since $M(A-1)\ge2$,
$\lfloor M(A-1)/2\rfloor\ge M(A-1)/4$, and hence
\begin{equation}
\E\!\left[\sum_{j\in\mathcal J}\min\{T_j,A-1\}\right]
\ge\frac18\min\{K,M(A-1)\}
\ge\frac{m}{128}.
\label{eq:lb-query-count}
\end{equation}
The last inequality uses $M(A-1)\ge\Gamma/16$ and
$\min\{K,M(A-1)\}\ge m/16$. Combining
\eqref{eq:lb-rank-bound} and \eqref{eq:lb-query-count} with the loss
$\Delta_r$ per incorrect choice gives
\[
\E[\cR(K)]
=\Delta_r\sum_{j\in\mathcal J}\E[W_j]
\ge\frac{\Delta_r m}{512}
=\frac1{512}\sqrt{Dm}.
\]
The expectation also averages over the hidden actions, so some fixed
choice satisfies~\eqref{eq:lb-witness-regret}. The variance and
safety identification complexity bounds hold for every such choice.
\end{proof}

\begin{proof}[Proof of Theorem~\ref{thm:instance-lower-bound}]
By Lemmas~\ref{lem:lb-variance-instance} and
\ref{lem:lb-witness-instance}, both instances lie in the required ranges
of $\VarStar$ and $\cC$. The instance $\cM_v$ satisfies
$31Kv/32\le\Var_\Sigma\le Kv$ for every realized interaction sequence,
while $\cM_w$ satisfies $\Var_\Sigma=Kv$. Thus either
instance satisfies the stated range for $\Var_\Sigma$.
Choose $\cM=\cM_v$ when the right-hand side of
\eqref{eq:lb-variance-regret} is at least that of
\eqref{eq:lb-witness-regret}, and choose $\cM=\cM_w$ otherwise. Then
\begin{align*}
\E_{\cM}[\cR(K)]
&\ge
\max\!\left\{
\frac{3}{32\sqrt2}\sqrt{nAdKv},
\frac1{512}\sqrt{Dm}
\right\}\\
&=\Omega\!\left(
\sqrt{SAHKv}+\sqrt{SAH\min\{K,\Gamma\}}
\right).
\end{align*}
The assumptions of Theorem~\ref{thm:instance-lower-bound} give
$n\ge5S/8$, $d\ge H/20$, and $D\ge SAH/64$. Substituting these bounds
and using $\max\{x,y\}\ge(x+y)/2$ gives the final line.
\end{proof}

\subsection{Lower Bounds for Action Elimination}
\label{app:witness-elimination}

We next prove the elimination lower bound discussed in
Section~\ref{subsec:safety-identification}.
The learner must distinguish an unsafe root from alternatives in which
changing one leaf makes it safe. This gives a lower bound on the
required leaf observations, which we convert into root executions.

\paragraph{Instances and alternatives}

Fix $\delta\in(0,1/4]$ and $\tau=1/2$, and consider an instance $\cM$
with an unsafe root $u=(s,a,h)$. Use the witness-tree representation in
Lemma~\ref{lem:witness-tree-publication}. Let $\mathcal L_u$ be its leaves and,
for $\ell\in\mathcal L_u$, let $\Delta_\ell$ be its cost gap and
$q_\ell>0$ the product of the transition probabilities on its path.
Assume the following conditions on the tree and its observations.
\begin{enumerate}[label=(\roman*),ref=\roman*,leftmargin=*]
\item Every nonterminal action satisfies the immediate cost constraint
and has exactly one non-viable successor. All its other successors are
viable. The root state has another action in
$\cA_{\mathrm{safe},h}(s)$.
\item Each nonroot state--step pair in the tree can be reached only
through its designated parent action. Different branches use disjoint
state--action pairs at each step.
\item At every leaf $\ell$, $0<\Delta_\ell\le1/4$, all successors are
viable, and the cost has distribution $\text{Bernoulli}(1/2+\Delta_\ell)$.
Define $\cM^\ell$ by replacing this cost distribution with
$\text{Bernoulli}(1/2-\Delta_\ell)$ and leaving all other observations
unchanged.
\item In $\cM$ and $\cM^\ell$, conditional on the leaf state--action
pair, the cost is independent of the pair consisting of the reward and
next state. The reward and next state have the same joint distribution
in both instances.
\end{enumerate}
By Lemma~\ref{lem:witness-tree-publication},
\begin{equation}
 \kappa_h(s,a)
 =\sum_{\ell\in\mathcal L_u}\frac{1}{q_\ell\Delta_\ell^2}.
\label{eq:private-witness-expansion}
\end{equation}
The leaf action is safe in $\cM^\ell$. At its parent, this provides a
safe action at the previously non-viable successor, while all other
successors were already viable. Repeating this argument makes every
ancestor action, including $u$, safe.

\paragraph{Elimination rules}
The learner chooses its policy before each episode, as in
Section~\ref{sec:formulation}. An elimination rule removes $u$ at an
episode boundary and never executes it
afterward. Let $\sigma_u\in\{0,1,\ldots\}\cup\{\infty\}$ be the number of
completed episodes when it is removed, and let $N_u(t)$ count its executions
through the end of episode $t$. Thus $\{\sigma_u\ge k\}$ is known before
episode $k$ begins.  Write
$\operatorname{kl}(p,q):=p\log(p/q)+(1-p)\log((1-p)/(1-q))$.
Terms with zero numerator are zero, and a positive numerator divided by
zero contributes $+\infty$. The rule is
$\delta$-valid at $u$ if
\begin{equation}
\begin{aligned}
 \Prob_{\cM}(\sigma_u<\infty)&\ge1-\delta,\\
 \Prob_{\cM^\ell}(\sigma_u<\infty)&\le\delta
 \quad(\ell\in\mathcal L_u).
\end{aligned}
\label{eq:root-elimination-validity}
\end{equation}

\begin{lemma}[Elimination lower bound]
\label{lem:recursive-elimination-lower}
Consider the instance $\cM$ and alternatives $\cM^\ell$,
$\ell\in\mathcal L_u$, satisfying conditions (i)--(iv) above.
Every $\delta$-valid elimination rule at $u$, as defined
in~\eqref{eq:root-elimination-validity}, satisfies
\begin{equation}
 \E_{\cM}[N_u(\sigma_u)]
 \ge\frac{1}{32}\kappa_h(s,a)\log\frac1{2\delta}.
\label{eq:lb-root-execution}
\end{equation}
\end{lemma}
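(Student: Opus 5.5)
The plan is a change-of-measure argument for each alternative $\cM^\ell$, followed by converting leaf observations into root executions through the path probabilities $q_\ell$. Fix a leaf $\ell\in\mathcal L_u$ and let $O_\ell(t)$ count the observations of the leaf state--action pair through episode $t$. By conditions (ii)--(iv), $\cM$ and $\cM^\ell$ induce the same law on everything except the cost samples at leaf $\ell$. In particular, the tree pairs are disjoint and reachable only through their parents, and the reward and next state at the leaf have the same joint law and are independent of the cost. The adaptive chain rule for KL divergence, applied to the history up to the stopping time $\sigma_u$ (or up to $\sigma_u\wedge K$), therefore gives
\[
\operatorname{KL}\bigl(\Prob_{\cM}\Vert\Prob_{\cM^\ell}\bigr)
=\E_{\cM}[O_\ell(\sigma_u)]\,\operatorname{kl}\bigl(\tfrac12+\Delta_\ell,\tfrac12-\Delta_\ell\bigr).
\]
Since $\Delta_\ell\le1/4$, this kl term is at most $c\Delta_\ell^2$ for a constant $c\le 16/3$ or so. The data-processing inequality applied to the event $\{\sigma_u<\infty\}$ together with \eqref{eq:root-elimination-validity} gives $\operatorname{KL}\ge\operatorname{kl}(1-\delta,\delta)\ge\log\frac{1}{2\delta}$ (the standard Bretagnolle--Huber form, e.g.\ $\operatorname{kl}(1-\delta,\delta)\ge\log(1/(2.4\delta))$; the constant is adjusted accordingly). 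Hence
\[
\E_{\cM}[O_\ell(\sigma_u)]\ge\frac{1}{c\,\Delta_\ell^2}\log\frac1{2\delta}.
\]

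Next I would relate leaf observations to root executions. By (ii), the leaf pair at $\ell$ can be reached only along its designated path from $u$. Each observation of leaf $\ell$ therefore requires an execution of $u$ in that episode, followed by realization of every transition on the path to $\ell$ and selection of the designated actions. Conditional on the history at the root step of an episode in which $u$ is executed, the probability of reaching leaf $\ell$ and observing it is at most $q_\ell$, because the transitions are the true $P$ and the actions only decrease the probability. Moreover, at most one leaf is observed per episode along a single trajectory. Summing over leaves gives, for each episode with a root execution,
\[
\sum_\ell \Prob(\text{leaf }\ell\text{ observed})/q_\ell\le 1 .
\]
Conditioning on $\sigma_u$ as a stopping time (which is known before each episode) and using optional stopping, or a monotone-convergence truncation at $K$, then yields $\E_{\cM}\bigl[\sum_\ell O_\ell(\sigma_u)/q_\ell\bigr]\le\E_{\cM}[N_u(\sigma_u)]$.

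Combining the two steps,
\[
\E_{\cM}[N_u(\sigma_u)]\ge\sum_\ell\frac{1}{q_\ell}\cdot\frac{\log(1/(2\delta))}{c\Delta_\ell^2}
=\frac1c\,\kappa_h(s,a)\log\frac1{2\delta},
\]
where the last equality is \eqref{eq:private-witness-expansion}. The main obstacle is the per-episode weighted bound in the conversion step. One must verify that branches are genuinely disjoint, so that a single episode contributes to at most one leaf and to no leaf without executing $u$, and that the bound survives the random stopping. I would handle this by defining $X_k=\sum_\ell\Ind\{\text{leaf }\ell\text{ observed in }k\}/q_\ell$ and showing $\E[X_k\mid\cF_h^k]\le\Ind\{a_h^k=u\}$, mirroring the proof of Lemma~\ref{lem:false-candidate-control}. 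The remaining work is tracking the constant: $\operatorname{kl}(\tfrac12+\Delta,\tfrac12-\Delta)=2\Delta\log\frac{1+2\Delta}{1-2\Delta}\le 16\Delta^2$ for $\Delta\le1/4$, and combined with the Bretagnolle--Huber slack this should give the stated $1/32$.
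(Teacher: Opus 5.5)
Your plan follows the paper's proof. It uses a per-leaf change of measure (chain rule on the stopped history, data processing on the elimination event, $\operatorname{kl}(\tfrac12+\Delta_\ell,\tfrac12-\Delta_\ell)\le16\Delta_\ell^2$), then weights leaf observations by $1/q_\ell$ and applies the witness-tree expansion of $\kappa_h(s,a)$. Your one-sided inequality $\E_{\cM}[\sum_\ell O_\ell(\sigma_u)/q_\ell]\le\E_{\cM}[N_u(\sigma_u)]$ suffices in place of the paper's identity $\E_{\cM}[N_\ell(\sigma_u)]=q_\ell\E_{\cM}[T_\ell(\sigma_u)]$. Several supporting justifications are wrong as written, though all can be repaired.

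\textbf{Conversion step.} The two facts you cite do not imply $\sum_\ell\Prob(\text{leaf }\ell\text{ observed})/q_\ell\le1$. Those facts are that each leaf is observed with conditional probability at most $q_\ell$, and that at most one leaf is observed per episode. For example, two leaves behind complementary transitions of probability $1/2$ each satisfy both facts, yet the sum is $2$. What makes the bound true is the shape of the witness tree in Lemma~\ref{lem:witness-tree-publication}: it branches only over the \emph{actions} at the single selected non-viable successor, never over transitions. Hence the policy $\pi^k$, fixed before the episode, determines a unique leaf $\ell_k$. By condition (ii), no other leaf can be observed in episode $k$, and $\ell_k$ is observed with conditional probability exactly $q_{\ell_k}$. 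This gives $\E[X_k\mid\cF_h^k]=\Ind\{(s_h^k,a_h^k)=(s,a)\}$ with equality. Your pointer to the proof of Lemma~\ref{lem:false-candidate-control}, which uses the leaf determined by $\pi^k$, is the right fix. Disjointness of branches is not the operative property.

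\textbf{Constants.} Two of your claims are false, but the final $1/32$ is recoverable.
\begin{enumerate}
\item Since $\log\frac{1+x}{1-x}\ge2x$, we have $\operatorname{kl}(\tfrac12+\Delta,\tfrac12-\Delta)=2\Delta\log\frac{1+2\Delta}{1-2\Delta}\ge8\Delta^2$. So $c\le16/3$ is impossible; $c=16$ is what works and what the paper uses.
\item The inequality $\operatorname{kl}(1-\delta,\delta)\ge\log\frac1{2\delta}$ fails for $\delta$ near $1/4$: at $\delta=1/4$ the left side is $\frac12\log3<\log2$. Your final display with factor $1/c$ is therefore unjustified. The paper instead uses $\operatorname{kl}(1-\delta,\delta)=(1-2\delta)\log\frac{1-\delta}{\delta}\ge\frac12\log\frac1{2\delta}$ for $\delta\le1/4$, which gives $\frac1{16}\cdot\frac12=\frac1{32}$.
\item Your fallback $\operatorname{kl}(1-\delta,\delta)\ge\log\frac1{2.4\delta}$ also works with $c=16$, because $(2.4\delta)^2\le2\delta$ for $\delta\le1/4$. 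That inequality is not Bretagnolle--Huber, however. The genuine Bretagnolle--Huber bound yields only $\log\frac1{4\delta}$, which is insufficient for $\delta\in(1/8,1/4]$.
\end{enumerate}

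\textbf{Stopping time.} Since $\sigma_u=\infty$ has probability at least $1-\delta$ under each $\cM^\ell$, apply the chain rule and data processing to the history stopped at $\sigma_u\wedge n$ with the event $\{\sigma_u\le n\}$. Then let $n\to\infty$ using monotone convergence and lower semicontinuity of $\operatorname{kl}$, as the paper does.
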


\begin{proof}
\emph{Leaf observations.}
Fix $\ell\in\mathcal L_u$, and let $N_\ell(t)$ count observations of
its cost through episode $t$. By condition (iv), the KL divergence of
the full observation at this leaf equals that of its Bernoulli cost.
For $0<\Delta_\ell\le1/4$,
\[
\operatorname{kl}(1/2+\Delta_\ell,1/2-\Delta_\ell)
=2\Delta_\ell\log\frac{1+2\Delta_\ell}{1-2\Delta_\ell}
\le16\Delta_\ell^2,
\]
where the inequality uses $\log((1+x)/(1-x))\le4x$ for $x\in[0,1/2]$.

Let $E_n:=\{\sigma_u\le n\}$, and stop the interaction history at the
end of episode $\sigma_u\wedge n$. The stopped history, including the
elimination decision, determines $E_n$. Since only the cost at leaf
$\ell$ differs between the instances, the chain rule expresses the KL
divergence of the stopped histories as the expected leaf count times
the single-observation divergence. Data processing then gives
\[
 16\Delta_\ell^2\E_{\cM}[N_\ell(\sigma_u\wedge n)]
 \ge
 \operatorname{kl}\!\left(\Prob_{\cM}(E_n),
 \Prob_{\cM^\ell}(E_n)\right).
\]
Set $p:=\Prob_{\cM}(\sigma_u<\infty)$ and
$q:=\Prob_{\cM^\ell}(\sigma_u<\infty)$. Letting $n$ tend to infinity,
monotone convergence and lower semicontinuity give
\begin{equation}
 16\Delta_\ell^2\E_{\cM}[N_\ell(\sigma_u)]
 \ge \operatorname{kl}(p,q)
 \ge \operatorname{kl}(1-\delta,\delta),
\label{eq:lb-leaf-kl}
\end{equation}
where the last inequality uses $p\ge1-\delta>\delta\ge q$ and the
monotonicity of binary KL.

\emph{Root executions.}
For each execution of $u$, the fixed episode policy determines a path
through the witness tree before any transition along that path is
observed. For $\ell\in\mathcal L_u$, let $T_\ell(t)$ count executions
assigned to the path ending at $\ell$ through episode $t$.
For every $n$,
\begin{align*}
 \E_{\cM}[N_\ell(\sigma_u\wedge n)]
 &=q_\ell\E_{\cM}[T_\ell(\sigma_u\wedge n)],\\
 N_u(\sigma_u\wedge n)
 &=\sum_{\ell\in\mathcal L_u}T_\ell(\sigma_u\wedge n).
\end{align*}
To obtain the first identity, include episode $k$ exactly when
$\sigma_u\ge k$, an event known before that episode. Conditional on
an execution assigned to leaf $\ell$, all transitions on its path occur
with probability $q_\ell$. Condition (ii) ensures that every observation
at the leaf comes from this path. The second identity holds because
each root execution is assigned to exactly one leaf.
Letting $n$ tend to infinity and applying monotone convergence yields
\begin{align*}
 \E_{\cM}[N_u(\sigma_u)]
 &=\sum_{\ell\in\mathcal L_u}
 \frac{\E_{\cM}[N_\ell(\sigma_u)]}{q_\ell}\\
 &\ge\frac{1}{16}\operatorname{kl}(1-\delta,\delta)
 \sum_{\ell\in\mathcal L_u}\frac{1}{q_\ell\Delta_\ell^2}\\
 &=\frac{1}{16}\kappa_h(s,a)\operatorname{kl}(1-\delta,\delta).
\end{align*}
The inequality uses~\eqref{eq:lb-leaf-kl}, and the last equality uses
\eqref{eq:private-witness-expansion}. Finally, since $\delta\le1/4$,
\[
\operatorname{kl}(1-\delta,\delta)
=(1-2\delta)\log\frac{1-\delta}{\delta}
\ge\frac12\log\frac1{2\delta}.
\]
Substitution proves~\eqref{eq:lb-root-execution}.
\end{proof}

\begin{samepage}
\begin{corollary}
\label{cor:private-forest-elimination}
Let $\mathcal U$ consist of all triples $(s,a,h)$ with $h\in[H]$,
$a\notin\cA_{\mathrm{safe},h}(s)$, and $(s,h)$ reachable under a safe
policy, as indexed in~\eqref{eq:C-wit}. Suppose these triples are the
roots of pairwise-disjoint witness trees satisfying conditions
(i)--(iv) above. For
$u\in\mathcal U$ and $\ell\in\mathcal L_u$, let $\cM^{u,\ell}$ be the corresponding
single-leaf alternative. For a learner using the elimination rules
defined above at all roots, suppose
$\Prob_{\cM}\!\left(\bigcap_{u\in\mathcal U}
 \{\sigma_u<\infty\}\right)\ge1-\delta$, and $\Prob_{\cM^{u,\ell}}(\sigma_u<\infty)\le\delta
 \quad(u\in\mathcal U, \ell\in\mathcal L_u)$.
Then
\[
 \E_{\cM}\!\left[\sum_{u\in\mathcal U}N_u(\sigma_u)\right]
 \ge\frac{1}{32} \cC(\cM)\log\frac1{2\delta}.
\]
\end{corollary}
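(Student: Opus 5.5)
The plan is to apply Lemma~\ref{lem:recursive-elimination-lower} at each root $u\in\mathcal U$ separately and then sum. For this, each root's elimination rule must be $\delta$-valid in the sense of~\eqref{eq:root-elimination-validity}.

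\emph{Validity at each root.} The first condition follows from the joint hypothesis, since $\Prob_{\cM}(\sigma_u<\infty)\ge\Prob_{\cM}(\bigcap_{u'}\{\sigma_{u'}<\infty\})\ge1-\delta$. The second condition is assumed directly for every leaf $\ell\in\mathcal L_u$. One point needs care. Lemma~\ref{lem:recursive-elimination-lower} is stated for a learner that may interact with the whole instance, including the other trees. Its proof uses only that $\cM$ and $\cM^{u,\ell}$ differ solely in the cost distribution at leaf $\ell$. It also uses condition~(ii), so that leaf observations arise only through the path from $u$, together with the identity $\E_{\cM}[N_\ell]=q_\ell\E_{\cM}[T_\ell]$. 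Pairwise disjointness of the trees guarantees that changing a leaf of tree $u$ does not alter any observation distribution in another tree. The chain-rule KL computation therefore still isolates the single leaf. The execution accounting at $u$ is also unaffected by activity elsewhere, because the root execution count is defined through episode policies fixed before each episode.

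\emph{Summation.} Once validity is established, Lemma~\ref{lem:recursive-elimination-lower} gives $\E_{\cM}[N_u(\sigma_u)]\ge\frac1{32}\kappa_h(s,a)\log\frac1{2\delta}$ for each $u=(s,a,h)\in\mathcal U$. Summing over $\mathcal U$ and using linearity of expectation gives
\[
\E_{\cM}\Bigl[\sum_{u\in\mathcal U}N_u(\sigma_u)\Bigr]\ge\frac1{32}\log\frac1{2\delta}\sum_{u\in\mathcal U}\kappa_h(s,a).
\]
By the definition of $\kappa_h(s)$ in~\eqref{eq:recursive-kappa}, grouping the roots by $(s,h)$ turns the last sum into $\sum_{(s,h)}\kappa_h(s)$. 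Here $(s,h)$ ranges over $\bigcup_{\pi\in\Pi_{\mathrm{safe}}}\cV^\pi$ with $h\le H$, and the convention $\kappa_{H+1}\equiv0$ makes step $H+1$ contribute nothing. This sum is exactly $\cC(\cM)$ by~\eqref{eq:C-wit}.

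The main obstacle is the first step. One must check that the single-root lemma remains valid when the learner's rule at $u$ may depend on observations from all other trees and the rest of the MDP. I would handle this by rereading the KL argument: the divergence between the stopped histories under $\cM$ and $\cM^{u,\ell}$ equals $\E_{\cM}[N_\ell(\sigma_u\wedge n)]\operatorname{kl}(1/2+\Delta_\ell,1/2-\Delta_\ell)$ regardless of what else is observed. Disjointness and condition~(ii) ensure that no other state--action pair has a changed distribution.
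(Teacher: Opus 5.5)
Your proposal is correct and follows the paper's proof: the intersection hypothesis gives $\Prob_{\cM}(\sigma_u<\infty)\ge1-\delta$ at each root, so Lemma~\ref{lem:recursive-elimination-lower} applies root by root, and summing with $\sum_{u=(s,a,h)\in\mathcal U}\kappa_h(s,a)=\cC(\cM)$ finishes the argument. Your extra check about interaction with the other trees is harmless but not strictly needed, because the lemma is already stated for a learner interacting with the whole instance $\cM$, and condition~(ii) ensures that all observations at a leaf come through the path from $u$.
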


\begin{proof}
The probability conditions imply the hypotheses of
Lemma~\ref{lem:recursive-elimination-lower} for every root. Applying
\eqref{eq:lb-root-execution} to every root and summing over
$u\in\mathcal U$ proves the claim because
$\sum_{u=(s,a,h)\in\mathcal U}\kappa_h(s,a)=\cC(\cM)$.
\end{proof}
\end{samepage}

\section{Experiments}\label{app:experiments}

In this section, we conduct experiments to illustrate our theoretical
findings and evaluate the performance of our algorithm. We focus on the following
four questions:
\begin{itemize}[leftmargin=0pt, label={}, itemindent=0pt,
                itemsep=3pt, topsep=3pt, parsep=0pt]
    \item \textbf{Q1:} \textit{How does SVAE compare with baselines
    in regret and step-wise violation?}

    \item \textbf{Q2:} \textit{Can SVAE exploit low return variance
    induced by safety constraints?}

    \item \textbf{Q3:} \textit{Can safety identification remain difficult
    when safe-policy return variance is zero?}

    \item \textbf{Q4:} \textit{Does recursive elimination exclude
    unsafe branches while preserving safe continuations?}
\end{itemize}
Appendix~\ref{app:experimental-setup} describes the experimental setup and baselines.
We evaluate regret and step-wise violation on stochastic navigation tasks in
Appendix~\ref{app:h-navigation} (Q1). We then study variance adaptation, safety identification complexity,
and recursive elimination using controlled CMDPs in
Appendices~\ref{app:h-variance}--\ref{app:h-propagation} (Q2--Q4).
Hyperparameter selection and implementation details are given in
Appendix~\ref{app:experimental-details}.

\raggedbottom
\renewcommand{\topfraction}{0.95}
\renewcommand{\bottomfraction}{0.95}
\renewcommand{\textfraction}{0.05}
\renewcommand{\floatpagefraction}{0.85}
\setlength{\textfloatsep}{12pt plus 2pt minus 2pt}
\setlength{\floatsep}{10pt plus 2pt minus 2pt}
\setlength{\intextsep}{10pt plus 2pt minus 2pt}
\subsection{Experimental Setup}
\label{app:experimental-setup}

\paragraph{Learning setting}
The reward means, mean costs, and transition probabilities are unknown. All algorithms learn from online interaction without a known safe policy. Rewards are Bernoulli in Q1 and deterministic in Q2--Q4, where we study the safe subgraph and safety identification without reward noise. The reward means remain fixed throughout learning. Costs are Bernoulli with mean $c_h(s,a)$, and next states follow the transition kernel. Estimates are maintained separately for each $(h,s,a)$. We set $\tau=0.6$, $\delta=0.05$, and run each algorithm for $K=10{,}000$ episodes.

\paragraph{Baselines}
For navigation (Q1), we compare SVAE with SUCBVI~\citep{xiong2023provably}. Both methods use the same recursive safety filter, so the comparison focuses on their planning rules. Our SUCBVI implementation estimates unknown rewards and costs separately for each $(h,s,a)$. We also include MVP-V~\citep{zhou2023sharp} and a variant with the recursive safety filter applied at every episode, denoted by \emph{MVP-V + filter}.

\paragraph{Ablations}
We consider two variants of SVAE with Hoeffding-type bonuses. \emph{SVAE (Hoeffding)} replaces the empirical reward and transition variances in~\eqref{eq:variance-bonus} with $1/4$ and $(H-h)^2/4$, respectively. \emph{SVAE (Hoeffding-P)} replaces only the transition variance, with P denoting the transition component. Both retain the finite-sample correction and recursive safety filter. We also consider \emph{SVAE w/o propagation}, which retains variance-adaptive planning but eliminates actions only through their own cost lower confidence bounds. Q2 compares SVAE with both bonus variants, and Q4 compares it with the variant without propagation. Q3 evaluates SVAE as the environment parameters vary.

\paragraph{Metrics}
We evaluate regret using exact policy values and the optimal safe value. Step-wise violation is the cumulative positive excess of mean costs over the threshold along sampled trajectories, as defined in Section~\ref{sec:formulation}. The true model and safe sets are used only for offline tuning and evaluation.
For navigation, we also report the strong regret~\citep{stradi2025strong} $\mathcal R_+(K):=\sum_{k=1}^K[V_1^\star(s_1)-V_1^{\pi^k}(s_1)]_+$, which does not allow cancellation across episodes. We track policy value and expected violation per episode using a moving average over up to 500 episodes.

We use the same observation streams and state/action relabelings to pair methods. For navigation (Q1), we average five paired seeds within each of eight independent environments, then average across environments. Each condition in Q2--Q4 uses 20 seeds. We report means with sample standard deviations (SDs) in parentheses. SDs are computed across environment means for Q1 and across seeds for Q2--Q4. Shading indicates one SD around the mean, with lower bounds clipped at zero for nonnegative metrics.

We tune all navigation methods with the same search budget on separate development and validation environments, then fix their hyperparameters before testing. SVAE and its ablations use the same planning and cost-confidence multipliers in the controlled experiments. Appendix~\ref{app:experimental-details} gives the tuning procedure and selected values.

\FloatBarrier

\subsection{Stochastic Navigation}
\label{app:experimental-results}
\label{app:h-navigation}

We first compare the regret and step-wise violation of SVAE and the baselines in a $5\times5$ gridworld with four movement actions and horizon $20$. The agent starts at the middle-left cell, and the middle-right cell is an absorbing goal. At each step, rewards are sampled independently from a Bernoulli distribution with mean $0.9$ at the goal and $0.1$ elsewhere. Reward noise is independent of costs and transitions. Each episode lasts $H$ steps, with the agent remaining at the goal after reaching it.

For each state--action pair, we sample the probability of the intended move uniformly from $[0.8,0.95]$ when generating the environment. With the remaining probability, the agent moves in either perpendicular direction with equal probability. Attempts to move outside the grid leave the agent in the same cell. An action has mean cost in $[0.7,0.85]$ if it can cross the boundary between columns 3 and 4 in rows 2--4. All other non-goal actions have mean costs in $[0.15,0.3]$, and the cost at the goal is zero.

\begin{figure}[!htb]
\centering
\includegraphics[width=.49\linewidth]{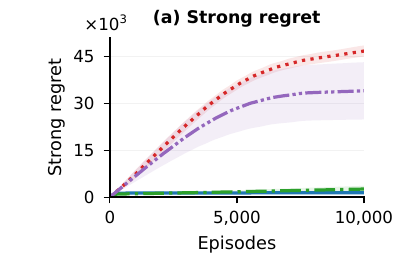}\hfill
\includegraphics[width=.49\linewidth]{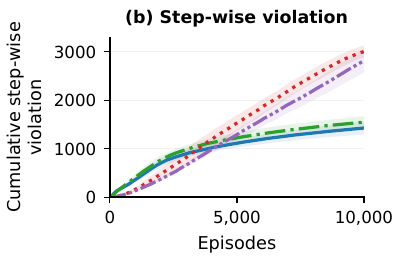}\\[3pt]
\includegraphics[width=.49\linewidth]{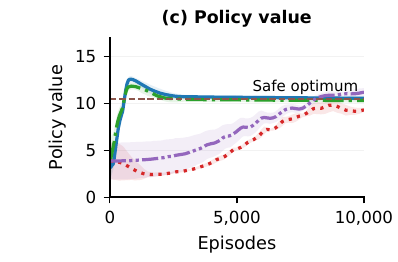}\hfill
\includegraphics[width=.49\linewidth]{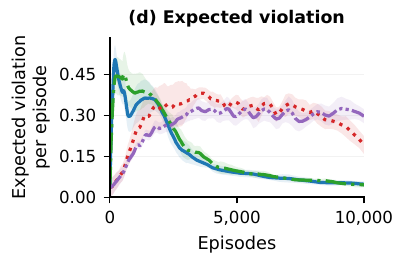}\\
\includegraphics[width=\linewidth]{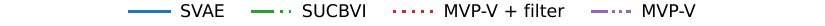}
\caption{\textbf{Regret and safety in stochastic navigation.} Panels (a) and (b) show strong regret and cumulative step-wise violation. Panels (c) and (d) show policy value and expected violation per episode, using a moving average over up to 500 episodes. At $k=0$, we evaluate the initial policy. Curves show means across eight environments after averaging five paired seeds per environment. Shaded areas show one SD across environment means. The horizontal line in (c) marks the mean optimal safe value.}
\label{fig:h-navigation}
\end{figure}

\paragraph{Results}
Compared with SUCBVI, SVAE reduces final strong regret by $44.9\%$ and cumulative step-wise violation by $7.7\%$ (Table~\ref{tab:h-navigation}). The mean (SD) paired differences, computed as SVAE minus SUCBVI, are $-1{,}162.23$ ($947.46$) for $\mathcal R_+$ and $-119.70$ ($45.10$) for violation.

\FloatBarrier
\begin{table}[!htb]
\centering\small
\caption{\textbf{Regret and step-wise violation in stochastic navigation.} We report cumulative regret, strong regret, and step-wise violation at $K=10{,}000$. Entries are means (SDs) across eight environments after averaging five paired seeds per environment. Bold numbers mark the lowest mean in each column, including ties at the displayed precision.}
\label{tab:h-navigation}
\setlength{\tabcolsep}{16pt}\renewcommand{\arraystretch}{1.13}
\begin{tabular}{@{}lrrr@{}}
\toprule
Method & $\mathcal R(K)/10^3$ & $\mathcal R_+(K)/10^3$ & $\mathrm{Viol}(K)/10^3$ \\
\midrule
SVAE & \textbf{-2.406 (0.947)} & \textbf{1.428 (0.112)} & \textbf{1.426 (0.132)} \\
SUCBVI & 0.150 (1.707) & 2.590 (1.037) & 1.546 (0.121) \\
MVP-V + filter & 45.989 (1.973) & 46.756 (1.776) & 3.005 (0.124) \\
MVP-V & 31.238 (9.735) & 34.020 (9.161) & 2.816 (0.246) \\
\bottomrule
\end{tabular}
\end{table}%
\FloatBarrier

SVAE's expected violation per episode decreases from $0.383$ in the first 500 episodes to $0.0474$ in the last 500 (Figure~\ref{fig:h-navigation}). Over the last 500 episodes, its mean policy value is $10.553$, above the average optimal safe value of $10.476$ while some constraint violation remains. Unconstrained MVP-V reaches a higher mean policy value of $11.206$ over the last 500 episodes, with greater expected violation of $0.2961$ per episode.

\subsection{Variance Adaptation}
\label{app:h-variance}

To study whether SVAE can exploit low return variance in an unknown safe subgraph, we use a two-step CMDP with eight states and four actions. The reward at the root is zero. Each of three root actions leads to two possible second-step states with probability $1/2$ each. The corresponding reward pairs are $(0.4,0.4)$, $(0.25,0.55)$, and $(0.1,0.7)$. The fourth root action gives a deterministic return of $0.5$.

All three branching actions are available in every instance. We keep rewards and transitions fixed and vary costs to select the safe branch. The selected branching action and the deterministic action have mean cost $0.2$, while the other two root actions have mean cost $0.8$. All second-step mean costs are $0.2$. Thus, the optimal safe value is $0.5$, the expected-return gap between the two safe root actions is $0.1$, and $\mathcal C=50$ in every instance.

Let $u\in\{0,0.15,0.30\}$ denote half the reward difference on the safe branch. Then $S^\star=2u$, $\mathrm{Var}^{\star}=u^2$, and $B(\cG_{\mathrm{safe}})=1$. The safe branching policy attains the variance bound in Theorem~\ref{thm:variance-topology}. When $u=0$, both successor states have the same value, so the transition variance is zero despite the random next state. This is the zero-span case of Corollary~\ref{cor:deterministic-topology-regret}.

\begin{figure}[!htb]
\centering
\includegraphics[width=.329\linewidth]{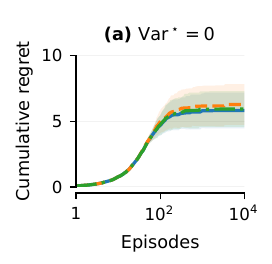}\hfill
\includegraphics[width=.329\linewidth]{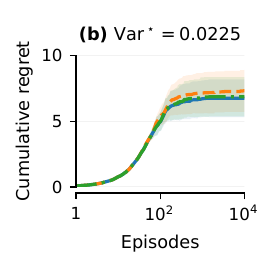}\hfill
\includegraphics[width=.329\linewidth]{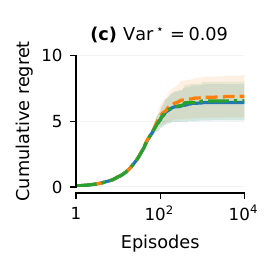}\\
\includegraphics[width=\linewidth]{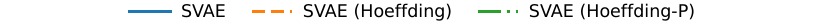}
\caption{\textbf{Regret under different safe-policy return variances.} The three panels compare SVAE with its Hoeffding-type variants as $\mathrm{Var}^{\star}$ varies. Curves show means over 20 paired seeds, with shading indicating one SD. The episode axes use a logarithmic scale.}
\label{fig:h-variance}
\end{figure}

\paragraph{Results}
SVAE has lower final regret on average than both Hoeffding-type variants in all three conditions (Figure~\ref{fig:h-variance} and Table~\ref{tab:h-variance}). Compared with SVAE (Hoeffding-P), SVAE selects suboptimal branching actions $1.10$--$1.20$ fewer times on average. This comparison isolates the effect of using empirical next-state value variance in the transition bonus, as both methods share the reward-variance estimates and safety filter.

\begin{table}[!htb]
\centering
\caption{\textbf{Accumulated variance and regret differences.} We report SVAE's accumulated variance $\mathbb V_\Sigma$ and paired regret differences at $K=10{,}000$ as means (SDs) over 20 seeds. Each difference $\Delta\mathcal R(K)$ is SVAE's regret minus the regret of the indicated Hoeffding-type variant.}
\label{tab:h-variance}
\small\setlength{\tabcolsep}{16pt}\renewcommand{\arraystretch}{1.13}
\begin{tabular}{@{}rrrcc@{}}
\toprule
& & & \multicolumn{2}{c}{$\Delta\mathcal R(K)$} \\
\cmidrule(lr){4-5}
$\mathrm{Var}^{\star}$ & $K\mathrm{Var}^{\star}$ & $\mathbb V_\Sigma$ & Hoeffding-P & Hoeffding \\
\midrule
0 & 0 & 0.000 (0.000) & $-0.110$ (0.031) & $-0.460$ (0.268) \\
0.0225 & 225 & 0.739 (0.263) & $-0.115$ (0.198) & $-0.560$ (0.376) \\
0.09 & 900 & 2.691 (1.078) & $-0.120$ (0.202) & $-0.495$ (0.386) \\
\bottomrule
\end{tabular}
\end{table}%
\FloatBarrier

The optimal safe action gives a deterministic return even when another safe action has positive return variance. As SVAE increasingly selects the optimal action, the variance it accumulates can therefore be much smaller than $K\mathrm{Var}^{\star}$. Let $a^\star$ and $a_b$ denote the deterministic optimal and safe branching root actions, respectively. Following Appendix~\ref{app:empirical-estimators}, $N_1^{K+1}(s_1,a)$ is the number of times action $a$ is taken at the root in the first $K$ episodes. Each branching action has an expected return $0.1$ below that of $a^\star$. Only $a_b$ contributes to $\mathbb V_\Sigma$, since this sum ends before the first unsafe action. Hence,
\begin{equation*}
\mathcal R(K)=0.1\sum_{a\ne a^\star}N_1^{K+1}(s_1,a),
\qquad
\mathbb V_\Sigma=u^2N_1^{K+1}(s_1,a_b).
\end{equation*}
SVAE's mean $\mathbb V_\Sigma$ is about $0.3\%$ of $K\mathrm{Var}^{\star}$ in both conditions with positive variance (Table~\ref{tab:h-variance}). In these instances, the term $\min\{\mathbb V_\Sigma,K\mathrm{Var}^{\star}\}$ in Theorem~\ref{thm:regret} reflects the smaller variance along the learning trajectory.

\subsection{Safety Identification Complexity}
\label{app:h-identification}

We study safety identification in a two-step CMDP with four states and two actions. At the root, one action is safe and gives a return of $0.4$. The other gives a return of $0.5$ but reaches a state with no safe action with probability $p$. Both actions at this successor have mean cost $\tau+\Delta_c$, while all other mean costs are $0.2$. The root action with the higher return is therefore unsafe because it can reach a state with no safe continuation. Every safe policy has a deterministic return, so $\mathrm{Var}^{\star}=0$. By~\eqref{eq:recursive-kappa}, the identification complexity is $\mathcal C=2/(p\Delta_c^2)$. We vary $p$ with $\Delta_c=0.2$ fixed, then vary $\Delta_c$ with $p=0.2$ fixed. We measure how often the unsafe root action is taken and how often its non-viable successor is visited.

\begin{figure}[!htb]
\centering
\includegraphics[width=.49\linewidth]{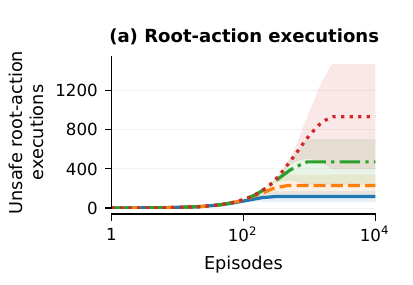}\hfill
\includegraphics[width=.49\linewidth]{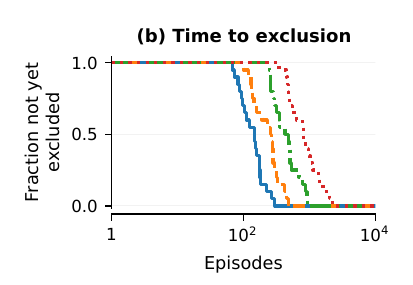}\\
\includegraphics[width=\linewidth]{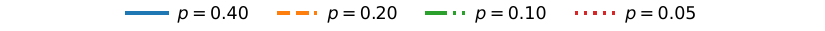}
\caption{\textbf{Effect of transition probability on safety identification.} Panel (a) shows the mean cumulative number of times the unsafe root action is taken, with shading indicating one SD. Panel (b) shows the fraction of runs in which this action has not yet been excluded. For each $p$, we use 20 paired seeds and fix $\Delta_c=0.2$. Both episode axes use a logarithmic scale.}
\label{fig:h-identification}
\end{figure}

\paragraph{Results}
As $p$ decreases from $0.40$ to $0.05$, the mean number of times the unsafe root action is taken increases from $117.1$ to $931.9$ (Figure~\ref{fig:h-identification}). The mean number of visits to the successor remains $46.5$, and cumulative violation remains $9.29$ (Table~\ref{tab:h-identification}). For each paired seed, all four conditions require the same number of cost observations at the successor. A smaller transition probability increases the number of interactions needed to collect these observations, as reflected by the reachability factor in~\eqref{eq:recursive-kappa}.

\begin{table}[!htb]
\centering
\caption{\textbf{Effect of transition probability and cost gap on safety identification.} We report executions of the unsafe root action, visits to its non-viable successor, and cumulative step-wise violation at $K=10{,}000$. Entries are means (SDs) over 20 seeds. The two parameter sweeps use separate seed sets. Bold rows mark the lowest identification complexity $\mathcal C$ in each sweep.}
\label{tab:h-identification}
\small\setlength{\tabcolsep}{16pt}\renewcommand{\arraystretch}{1.13}
\begin{tabular}{@{}rrrrrr@{}}
\toprule
$p$ & $\Delta_c$ & $\mathcal C$ & Root executions & Successor visits & $\mathrm{Viol}(K)$ \\
\midrule
\multicolumn{6}{@{}l@{}}{\textit{Varying the successor probability ($\Delta_c=0.20$)}} \\
\textbf{0.40} & \textbf{0.20} & \textbf{125.0} & \textbf{117.1 (57.8)} & \textbf{46.5 (21.3)} & \textbf{9.29 (4.27)} \\
0.20 & 0.20 & 250.0 & 230.1 (112.1) & 46.5 (21.3) & 9.29 (4.27) \\
0.10 & 0.20 & 500.0 & 471.0 (231.5) & 46.5 (21.3) & 9.29 (4.27) \\
0.05 & 0.20 & 1,000.0 & 931.9 (535.4) & 46.5 (21.3) & 9.29 (4.27) \\
\addlinespace[4pt]
\multicolumn{6}{@{}l@{}}{\textit{Varying the cost gap ($p=0.20$)}} \\
0.20 & 0.10 & 1,000.0 & 1,056.9 (599.7) & 214.1 (126.0) & 21.41 (12.60) \\
0.20 & 0.15 & 444.4 & 591.7 (391.5) & 118.7 (82.5) & 17.81 (12.37) \\
0.20 & 0.20 & 250.0 & 296.2 (123.0) & 59.3 (26.4) & 11.85 (5.28) \\
\textbf{0.20} & \textbf{0.30} & \textbf{111.1} & \textbf{126.7 (40.6)} & \textbf{26.0 (6.9)} & \textbf{7.79 (2.07)} \\
\bottomrule
\end{tabular}
\end{table}%
\FloatBarrier

Reducing the cost gap increases the number of observations needed at the successor. With $p=0.2$ fixed, decreasing $\Delta_c$ from $0.30$ to $0.10$ increases the mean number of successor visits from $26.0$ to $214.1$. Root-action executions and cumulative violation also increase. Together, the two sweeps show how rare successors and small cost gaps increase $\mathcal C$ at zero safe-policy return variance, illustrating the separate safety identification term in Theorem~\ref{thm:regret}. In every tested condition, SVAE excludes the unsafe root action within the episode budget and never takes it afterward. Cumulative violation then remains constant, consistent with the positive-gap bound in Theorem~\ref{thm:violation}.

\FloatBarrier

\subsection{Recursive Elimination}
\label{app:h-propagation}

We study recursive elimination by comparing SVAE with SVAE w/o propagation in a five-step CMDP with three states, two actions, and deterministic transitions. The two branches give returns of $0.4$ and $1$. On the branch with the higher return, the state reached after $d=3$ transitions has two actions with mean cost $0.8$. All other costs are zero. An otherwise identical instance sets one of these two costs to zero, leaving a safe continuation. Both instances have $\mathrm{Var}^{\star}=0$. We define detection as the first time the cost lower confidence bounds of both actions at the later state exceed $\tau$. We count entries into the branch with the higher return after detection.

\begin{figure}[!htb]
\centering
\includegraphics[width=.49\linewidth]{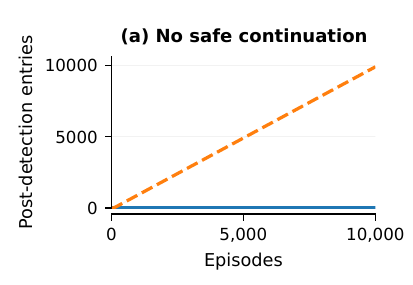}%
\includegraphics[width=.49\linewidth]{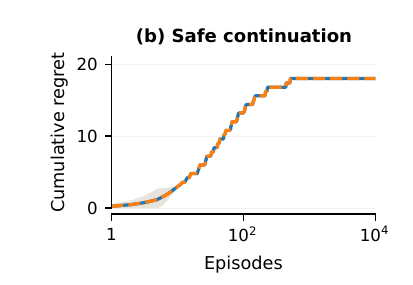}\\
\includegraphics[width=\linewidth]{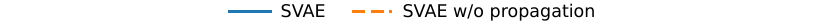}
\caption{\textbf{Recursive elimination at propagation distance $d=3$.} Panel (a) shows cumulative entries into the unsafe branch after detection. Panel (b) shows cumulative regret when the branch has a safe continuation, with a logarithmic episode axis. Curves show means over 20 paired seeds, with shading indicating one SD. The two methods have identical regret in (b) for every paired seed.}
\label{fig:h-propagation}
\end{figure}

\paragraph{Results}
At $d=3$, SVAE excludes the root action in the same backward pass that detects the absence of a safe continuation. It never enters the branch after detection in any of the 20 runs (Figure~\ref{fig:h-propagation}a). The variant without propagation continues to enter the branch, with $9{,}910.05$ entries after detection on average. SVAE incurs cumulative violation of $11.99$, compared with $1{,}994.00$ for the ablation. The $d=1$ condition gives the same result: zero entries after detection for SVAE and $9{,}921.95$ for the ablation. Recursive elimination uses evidence from the later state to exclude the root action, illustrating the control of unsafe candidate selections in Lemma~\ref{lem:false-candidate-control}.

When one action at the later state remains safe, SVAE retains the branch. SVAE and the ablation follow the same trajectories, with final regret $18.00$ and zero violation over the last 500 episodes (Figure~\ref{fig:h-propagation}b). SVAE's recursive update excludes the branch only when both downstream actions have been ruled out.

\FloatBarrier

\subsection{Additional Experimental Details}
\label{app:experimental-details}

\paragraph{Hyperparameter selection}
We scale the planning bonus by $\alpha$ and the cost-confidence radius by $\beta$. For navigation, each method is evaluated with 18 configurations. Methods with safety filtering use six planning multipliers combined with three cost-confidence multipliers, while unconstrained MVP-V uses 18 planning multipliers. To be eligible, configurations with safety filtering must have mean expected violation of at most $0.02$ per step. They must also retain all truly safe actions and keep a nonempty candidate set at every visited state that admits a safe continuation. Eligible configurations are ranked by increasing strong regret $\mathcal R_+$, and unconstrained MVP-V configurations by decreasing policy value. We first evaluate all configurations on two development environments with two seeds per environment. The best three are then evaluated on two separate validation environments, again with two seeds per environment. Both stages use $K=10{,}000$ episodes. After validation, we select the best configuration and fix its hyperparameters before testing.

For navigation, $(\alpha,\beta)$ is $(0.0003,0.30)$ for SVAE and $(0.001,0.30)$ for SUCBVI. MVP-V + filter uses $(3\times10^{-6},0.30)$, and unconstrained MVP-V uses \mbox{$\alpha\simeq2.405\times10^{-6}$}. In Q2--Q4, SVAE and its ablations use the same empirically chosen coefficients, \mbox{$(\alpha,\beta)=(0.003,0.17)$}. The complete search grids, coefficients, and random seeds are included with the code.

\paragraph{Implementation}
For MVP-V, we treat each state--step pair as a separate state and normalize rewards by $H$. The empirical model is updated whenever a visit count doubles. We report results on the original reward scale. For paired comparisons, we compute the difference between the two methods' results for each paired seed. For navigation, we average these differences over the five seeds within each environment, then report the mean (SD) across the eight environments.

\end{document}